\renewcommand*{\backrefalt}[4]{%
    \ifcase #1 \footnotesize{(Not cited.)}%
    \or        \footnotesize{(Cited on page~#2)}%
    \else      \footnotesize{(Cited on pages~#2)}%
    \fi}
\newtheorem{asm}{Assumption}
\newtheorem{thm}{Theorem}
\newtheorem{lem}{Lemma}
\newtheorem{defn}{Definition}
\newtheorem{prop}{Proposition}
\newtheorem{corollary}{Corollary}
\newcommand{\diag}{\mathrm{diag}}
\newcommand{\mA}{\mathbf{A}}
\title{Prodigy: An Expeditiously Adaptive\\ Parameter-Free Learner}
\author{%
  Konstantin Mishchenko\\
  Samsung AI Center\\
  \\
    \And
  Aaron Defazio \\
  Meta AI, Fundamental AI Research (FAIR) team \\
}
\begin{document}

\maketitle

\begin{abstract}
We consider the problem of estimating the learning rate in adaptive methods, such as AdaGrad and Adam. We propose Prodigy, an algorithm that provably estimates the distance to the solution $D$, which is needed to set the learning rate optimally. At its core, Prodigy is a modification of the D-Adaptation method for learning-rate-free learning. It improves upon the convergence rate of D-Adaptation by a factor of $\mathcal{O}(\sqrt{\log(D/d_0)})$, where $d_0$ is the initial estimate of $D$. We test Prodigy on 12 common logistic-regression benchmark datasets, VGG11 and ResNet-50 training on CIFAR10, ViT training on Imagenet, LSTM training on IWSLT14, DLRM training on Criteo dataset, VarNet on Knee MRI dataset, as well as RoBERTa and GPT transformer training on BookWiki. Our experimental results show that our approach consistently outperforms D-Adaptation and reaches test accuracy values close to that of hand-tuned Adam.
\end{abstract}

\section{Introduction}

Optimization is an essential tool in modern machine learning, enabling efficient solutions to large-scale problems that arise in various domains, such as computer vision, natural language processing, and reinforcement learning. One of the key challenges is the selection of appropriate learning rates, which can significantly impact the convergence speed and the quality of the final solution. Learning-rate tuning has been particularly challenging in applications where there are multiple agents that use their own optimizer. For instance, when training Generative Adversarial Networks (GANs)~\citep{goodfellow2020generative}, there are two neural networks with different architectures. In federated learning, tuning is even more challenging~\citep{khodak2021federated}, since there might be billions of devices~\citep{kairouz2019advances}, each optimizing their objective locally. Another example is Neural Architecture Search (NAS)~\citep{zoph2017neural}, where the goal is to find the best neural network architecture automatically by training a lot of networks and evaluating them on a validation set. In such cases, it becomes very expensive to manually tune the learning rate.

Recently, \emph{parameter-free} adaptive learning rate methods \citep{coin-betting, pmlr-v75-cutkosky18a, pdecoin, parameterfreesgd, dog} have gained considerable attention due to their ability to automatically adjust learning rates based on the problem structure and data characteristics. Among these, the D-Adaptation method, introduced by \citet{defazaio2023learning}, has emerged as a promising practical approach for learning-rate-free optimization.

Given a convex objective function $f$, D-Adaptation works by maintaining a lower bound on the initial distance to solution $D=\left\Vert x_{0}-x_{*}\right\Vert$, for any $x_*$ in the solution set of the following problem: 
\[
    \min_{x\in\mathbb{R}^p} f(x).
\]
In practice, the lower bound estimated by D-Adaptation increases rapidly during the course of optimization, plateauing to a value close to the true $D$.
This $D$ quantity is the key unknown constant needed to set the learning rate for non-smooth optimization methods, forming the numerator of the step size:
\[
\gamma_{k+1}=\frac{D}{\sqrt{\sum_{i=0}^{k}\left\Vert g_{i}\right\Vert ^{2}}},\qquad \text{where}\ D=\|x_0-x_*\|,
\]
and the denominator is based on the AdaGrad step size~\cite{adagrad,lessregret,ward2019adagrad}. The Gradient Descent form of D-Adaptation simply plugs in the current lower bound at each step in place of $D$. This simple approach can be applied to estimate the step size in Adam~\citep{kingma2015adam}, which yields state-of-the-art performance across a wide-range of deep learning problems. \citet{defazaio2023learning} also show that asymptotically, D-Adaptation is as fast as specifying the step size using the true $D$ (up to small constant factors). 

\subsection*{Contributions}
We summarize our contributions as follows.
\begin{itemize}
    \item We present \emph{Prodigy}, a modification of D-Adaptation that improves it's worst-case non-asymptotic convergence rate.
    \item Through extensive experiments, we demonstrate that Prodigy establishes a new state-of-the-art for learning rate adaptation, outperforming D-Adaptation.
    \item We develop a lower complexity bound for methods which grow the learning rate at most exponentially fast. We show that this covers all methods that avoid significant overshooting.
    \item Prodigy is highly practical. Our open-source implementation is already widely used for fine-tuning of vision and language models, and is the recommended optimizer for \emph{Hugging Face Diffusers} DreamBooth LoRA training.
\end{itemize}

\section{Prodigy Approach}
To understand how we can improve upon D-Adaptation, let us take a closer look at some details of its analysis. In D-Adapted Dual Averaging, the gradient at iteration $k$ is scaled with weight $\lambda_k$. This leads to the error term:
\[
    \text{D-adaptation\ error}=\sum_{k=0}^n \lambda_k^2\gamma_k\|g_k\|^2.
\]
The theory then proceeds to upper bound this sum using the largest of all $\lambda_k$'s by using the upper bound $\lambda_k\le \lambda_n$. This, however, is quite pessimistic since then $\lambda_k$ is set to be $\lambda_k = d_k$, so $\lambda_n$ can be as large as $D$ and $\lambda_k$ can be as small as $d_0$. Therefore, replacing $\lambda_k^2$ with $\lambda_n^2$ can introduce a multiplicative error of $\frac{D^2}{d_0^2}$ in this term. 

\begin{figure}[t]
\begin{minipage}[t]{0.47\textwidth}
\begin{algorithm}[H]
\begin{algorithmic}[1]
    \State {\bfseries Input:} $d_0>0$, $x_0$,  $G\ge 0$
    \For{$k=0$ {\bfseries to} $n$}
    
    \State $g_{k} \in \partial f(x_{k})$
        \State Choose weight $\lambda_k$ (default: $\lambda_k=1$)
        \State  $\eta_{k}=\dfrac{d_k^2 \lambda_{k}}{\sqrt{d_k^2 G^2 + \sum_{i=0}^{k}d_i^2\lambda_i^2\left\Vert g_{i}\right\Vert ^{2}}}$
	\State $x_{k+1}=x_k-\eta_{k}g_k$
	\State $\hat{d}_{k+1}=\dfrac{\sum_{i=0}^k \eta_i\langle g_i, x_0 - x_i\rangle}{\left\Vert x_{k+1} - x_0\right\Vert }$
        \State $d_{k+1}= \max(d_k, \hat d_{k+1})$
    \EndFor
	\State Return $\hat{x}_{n}=\frac{1}{n+1}\sum ^n_{k=0} \eta_k x_k$
\end{algorithmic}
\caption{\label{alg:dadagradv2gd}Prodigy (GD version)}
\end{algorithm}
\end{minipage}
\hfill
\begin{minipage}[t]{0.47\textwidth}
\begin{algorithm}[H]
\begin{algorithmic}[1]
    \State {\bfseries Input:} $d_0>0$, $x_0$,  $G\ge 0$; $s_{0} = 0\in \mathbb{R}^p$
    \For{$k=0$ {\bfseries to} $n$}
    
    \State $g_{k} \in \partial f(x_{k})$
        \State  $\lambda_k = d_k^2$
        \State $s_{k+1} = s_k + \lambda_k g_k$
	\State $\hat{d}_{k+1}=\dfrac{\sum_{i=0}^k \lambda_i\langle g_i, x_0 - x_i\rangle}{\left\Vert s_{k+1}\right\Vert }$
        \State $d_{k+1}= \max(d_k, \hat d_{k+1})$
        \State  $\gamma_{k+1}=\dfrac{1}{\sqrt{\lambda_{k+1}G^2 + \sum_{i=0}^{k}\lambda_i\left\Vert g_{i}\right\Vert ^{2}}}$
	\State $x_{k+1}=x_0-\gamma_{k+1}s_{k+1}$
    \EndFor
	\State Return $\bar{x}_{n}=\frac{1}{n+1}\sum ^n_{k=0}\lambda_k x_k$
\end{algorithmic}
\caption{\label{alg:dadagradv2da}Prodigy (Dual Averaging version)}
\end{algorithm}
\end{minipage}
\end{figure}

We take a different approach and instead handle the error term using modified AdaGrad-like step sizes. In the AdaGrad theory, the error term does not have any $\lambda_k^2$ factors, which is exactly why AdaGrad places $\sqrt{\sum_{i=0}^k\|g_i\|^2}$ in the step-size denominator. The required modification is thus clear: since the error terms are now $d_i^2\|g_i\|^2$ instead of $\|g_i\|^2$, the new adaptive step size should be
\[
    \gamma_{k+1} = \frac{1}{\sqrt{\sum_{i=0}^k d_i^2\|g_i\|^2}}
\]
for the Dual Averaging algorithm and
\[
    \eta_k = \frac{d_k^2}{\sqrt{\sum_{i=0}^k d_i^2\|g_i\|^2}}
\]
for the Gradient Descent algorithm. This way, we can still control the error term of D-Adaptation but the obtained step size is provably larger since $d_k$ is non-decreasing. For instance, for Gradient Descent, we have
\[
    \frac{d_k^2}{\sqrt{\sum_{i=0}^k d_i^2\|g_i\|^2}}
    \ge \frac{d_k}{\sqrt{\sum_{i=0}^k \|g_i\|^2}}.
\]
Having larger step sizes while preserving the main error term is the key reason why the new algorithms converge, as we show below, with a faster rate.

Notice, however, that the methods might still be slow because the denominator in the step size might grow too large over time. To remedy this, we introduce a modification for the Gradient Descent step size by placing an extra weight $\lambda_k$ next to the gradients:
\[
    \eta_{k}
    =\frac{d_k^2 \lambda_{k}}{\sqrt{ \sum_{i=0}^{k}d_i^2\lambda_i^2\left\Vert g_{i}\right\Vert ^{2}}}.
\]
In fact, the modified step size might even increase between iterations, whereas the AdaGrad step size always decreases. We will show that as long as $\lambda_k$ does not grow too quickly, the worst-case convergence rate is almost the same.

To establish non-asymptotic theory, we also introduce in our algorithms an extra term $G^2$ in the denominator which upper bound the gradient norm. We define it formally in the assumption below.
\begin{asm}
    We assume that the objective $f$ is $G$-Lipschitz, which implies that its gradients are bounded by $G$: for any $x\in \mathbb{R}^p$ and $g\in \partial f(x)$, it holds $\|g\|\le G$.
\end{asm}

Algorithm~\ref{alg:dadagradv2gd} and Algorithm~\ref{alg:dadagradv2da} give Gradient Descent and the Dual Averaging variants of our new method. In contrast to AdaGrad, they estimate the \emph{pro}duct of $D$ and $G$ in the denominator, so we call the proposed technique \emph{Prodigy}. We give the following convergence result for Algorithm~\ref{alg:dadagradv2gd}:
\begin{thm}\label{thm:gd}
    Assume $f$ is convex and $G$-Lipschitz. Given any weights $1\le\lambda_0\le\dotsb \le\lambda_n$, the functional gap of the average iterate of Algorithm~\ref{alg:dadagradv2gd} converges as
    \begin{equation}
        f(\hat x_n) - f_*
        \le \sqrt{2\lambda_n}DG\frac{d_{n+1}(2 + \log(1+\sum_{k=0}^n \lambda_k^2))}{\sqrt{\sum_{k=0}^n\lambda_k d_k^2}}, \label{eq:gd_main_bound}
    \end{equation}
    where $\hat x_n = \frac{1}{n+1}\sum_{k=0}^n \eta_k x_k$ is the weighted average iterate.
\end{thm}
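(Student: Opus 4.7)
The plan is to combine the standard GD descent identity with the algebraic identity coming from the definition of $\hat d_{n+1}$, then apply AdaGrad-style telescoping bounds in the right order.

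First, I would control $\sum_{k=0}^n \eta_k (f(x_k) - f_*)$ from above. By convexity this is at most $\sum_k \eta_k \langle g_k, x_k - x_*\rangle$, and splitting $\langle g_k, x_k - x_*\rangle = \langle g_k, x_k - x_0\rangle + \langle g_k, x_0 - x_*\rangle$ the first sum equals $-\hat d_{n+1} S$ (where $S := \|x_{n+1}-x_0\|$) directly from the $\hat d_{n+1}$ formula, while the second equals $\langle x_0 - x_{n+1}, x_0 - x_*\rangle$ because the GD update gives $\sum_k \eta_k g_k = x_0 - x_{n+1}$. Cauchy--Schwarz then yields $\sum_k \eta_k (f(x_k) - f_*) \le (D - \hat d_{n+1}) S \le D\,S$.

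Second, I would upper bound $S$. Equating the descent identity $\tfrac{1}{2}\|x_{n+1}-x_*\|^2 = \tfrac{1}{2}D^2 - \sum_k \eta_k \langle g_k, x_k - x_*\rangle + \tfrac{1}{2}\sum_k \eta_k^2 \|g_k\|^2$ with the split representation gives the key identity $2\hat d_{n+1} S = S^2 - \sum_k \eta_k^2 \|g_k\|^2$; solving this quadratic in $S$ and using $\hat d_{n+1} \le d_{n+1}$ produces $S \le 2 d_{n+1} + \sqrt{\sum_k \eta_k^2 \|g_k\|^2}$. For the error sum, writing $\eta_k^2\|g_k\|^2 = d_k^4\lambda_k^2\|g_k\|^2/A_k$ with $A_k := d_k^2 G^2 + \sum_{i \le k} d_i^2\lambda_i^2\|g_i\|^2$, I pull out $d_k^2 \le d_{n+1}^2$ and apply the standard AdaGrad telescoping $b_k/A_k \le \log(A_k/A_{k-1})$ with $b_k := d_k^2\lambda_k^2\|g_k\|^2$ (valid because $A_k - A_{k-1} \ge b_k$), then use $A_n \le d_n^2 G^2(1+\sum_k \lambda_k^2)$ to get $\sum_k \eta_k^2\|g_k\|^2 \le d_{n+1}^2 \log(1+\sum_k \lambda_k^2)$, so that $S \le d_{n+1}(2+\log(1+\sum_k\lambda_k^2))$ after absorbing $\sqrt{L}$ into $L$.

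Third, I would lower bound $\sum_k \eta_k$. Using $\|g_i\| \le G$, $\lambda_i \le \lambda_k$, and $\lambda_k \ge 1$ gives $A_k \le 2 G^2 \lambda_k \sum_{i \le k} d_i^2\lambda_i$, so $\eta_k \ge d_k^2\sqrt{\lambda_k}/(G\sqrt{2}\sqrt{\sum_{i\le k} d_i^2\lambda_i})$. Replacing $\sqrt{\lambda_k}$ by $\lambda_k/\sqrt{\lambda_n}$ and applying the elementary telescoping $\sum_k a_k/\sqrt{\sum_{i\le k} a_i} \ge \sqrt{\sum_{i=0}^n a_i}$ with $a_k = d_k^2\lambda_k$ yields $\sum_k \eta_k \ge \sqrt{\sum_k \lambda_k d_k^2}/(G\sqrt{2\lambda_n})$. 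Dividing the previous estimate $\sum_k \eta_k(f(x_k)-f_*) \le D\cdot S$ by $\sum_k \eta_k$ via Jensen's inequality then delivers the stated bound.

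The main obstacle is extracting a clean logarithmic factor: the raw AdaGrad telescoping gives $\log(A_n/A_{-1})$ with $A_{-1}=d_0^2 G^2$, which inherits an extra $2\log(d_n/d_0)$ from the growth of $d_k^2 G^2$ in the denominator $A_k$. This spurious term has to be dominated either by picking a smarter telescoping base or by folding it into the additive constant $2$ via inequalities of the form $\sqrt{L'} \le 1 + L/2$, and keeping the right dependence on $d_{n+1}$ versus $D$ throughout the two sides of the bound (so that the numerator is $D\cdot d_{n+1}$ rather than $D^2$) is the subtlest accounting step.
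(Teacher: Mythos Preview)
Your three-step plan is essentially the paper's proof: the same convexity plus Cauchy--Schwarz bound, the same identity $2\hat d_{n+1}S = S^2 - \sum_k\eta_k^2\|g_k\|^2$ for $S=\|x_{n+1}-x_0\|$, and the same lower bound on $\sum_k\eta_k$ via the telescoping $\sum_k a_k/\sqrt{\sum_{i\le k}a_i}\ge \sqrt{\sum_k a_k}$. One small slip in Step~1: the step $(D-\hat d_{n+1})S\le DS$ requires $\hat d_{n+1}\ge 0$, which is not guaranteed (your own identity shows $\hat d_{n+1}S = \tfrac12 S^2 - \tfrac12\sum_k\eta_k^2\|g_k\|^2$ can be negative). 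The paper avoids this by not dropping the term: it keeps $\sum_k\eta_k\langle g_k,x_k-x_0\rangle \le \tfrac12\sum_k\eta_k^2\|g_k\|^2$, obtaining $\sum_k\eta_k(f(x_k)-f_*)\le DS+\tfrac12\sum_k\eta_k^2\|g_k\|^2$; together with the Young-type bound $S\le 2d_{n+1}+\tfrac{1}{2d_{n+1}}\sum_k\eta_k^2\|g_k\|^2$ (instead of your quadratic solve) and $d_{n+1}\le D$, both error terms merge into $\tfrac{D}{d_{n+1}}\sum_k\eta_k^2\|g_k\|^2$, which is what yields the exact constant $2$ in the final bound.

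The real gap is the one you flagged, and your proposed fix of folding the spurious $2\log(d_n/d_0)$ into the additive $2$ via $\sqrt{L'}\le 1+L/2$ does not work: $\log(d_n/d_0)$ can be as large as $\log(D/d_0)$, which is unrelated to and unbounded by $L=\log(1+\sum_k\lambda_k^2)$. The paper's ``smarter telescoping base'' is a simple rescaling. Divide numerator and denominator of each summand $d_k^4\lambda_k^2\|g_k\|^2/A_k$ by $d_k^2$, so the additive $G^2$ in the denominator becomes constant; then use $d_k\le d_n$ to replace the remaining $1/d_k^2$ multiplying the inner sum by $1/d_n^2$ (this only shrinks the denominator, hence increases the fraction); finally pull out $d_n^2$ and apply Proposition~\ref{pr:err-corr} with $a_k=\tfrac{d_k^2}{d_n^2}\lambda_k^2\|g_k\|^2$ and fixed base $A=G^2$:
\[
\sum_{k=0}^n\eta_k^2\|g_k\|^2 \;\le\; d_n^2\log\Bigl(1+\tfrac{1}{G^2}\sum_{k=0}^n \tfrac{d_k^2}{d_n^2}\lambda_k^2\|g_k\|^2\Bigr)\;\le\; d_n^2\log\Bigl(1+\sum_{k=0}^n\lambda_k^2\Bigr),
\]
with no $d_0$ in sight. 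With this bound and the sign fix above, your Steps 1--3 give exactly $Dd_{n+1}(2+L)$ in the numerator.
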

Notice that we have the freedom to choose any non-decreasing sequence $\lambda_k$ as long as the right-hand side is decreasing. This allows us to put much more weight on the recent gradients and get more reasonable step sizes. For instance, we can choose $\lambda_k = k^p$, where $p>0$ and since $\sum_{k=0}^k k^{2p}=\mathcal{O}(k^{2p+2})$, it would result in an extra factor of $1+p$ in the numerator due to the log term. The denominator, on the other hand, would increase as well, giving us a trade-off that depends on the values of $d_k$'s. We note that weights $\lambda_k=k$ have appeared previously in Accelegrad~ \citep{levy2018online} and UniXGrad~\citep{kavis2019unixgrad}, which combine AdaGrad step sizes with momentum, and $\lambda_k=\sqrt{k}$ weighting is used in the recent MADGRAD method \citep{defazio2022adaptivity}.

To understand why this improves the convergence rate, consider the following lemma, which we prove in the appendix. The lemma presents an upper bound on the terms related to the $d_k$ sequence in the right-hand side of equation~\ref{eq:gd_main_bound}.
\begin{lem}\label{lem:d_sequence} Let $d_{0}\le d_{1}\le\dotsb\le d_{N}$ be positive
numbers and assume $N\ge2\log_{2+}(\frac{d_{N}}{d_{0}})$, where $\log_{2+}(\cdot) = 1 + \log_2(\cdot)$. Then, 
\[
\min_{t<N}\frac{d_{t+1}}{\sqrt{\sum_{k=0}^{t}d_{k}^{2}}}\le\frac{4\sqrt{\log_{2+}\bigl(\frac{d_{N}}{d_{0}}\bigr)}}{\sqrt{N}}.
\]
\end{lem}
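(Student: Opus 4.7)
The plan is to prove this via a dyadic pigeonhole argument on the values of $d_k$. Since the $d_k$ are non-decreasing and range between $d_0$ and $d_N$, I would partition the index set $\{0,1,\dotsc,N\}$ into ``levels'' by assigning index $k$ to level $j$ exactly when $d_0 2^j \le d_k < d_0 2^{j+1}$. Non-decreasingness of the sequence guarantees that each level is a contiguous interval of indices, and the total number of nonempty levels is at most $\lfloor\log_2(d_N/d_0)\rfloor+1 \le L$, where I write $L := \log_{2+}(d_N/d_0) = 1+\log_2(d_N/d_0)$.

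Next I would apply pigeonhole: with $N+1$ indices distributed over at most $L$ levels, some level $j^{*}$ contains a contiguous block $\{a,a+1,\dotsc,b\}$ of size $b-a+1 \ge (N+1)/L$. Then I would choose the candidate $t = b-1$. By construction $d_{t+1} = d_b < d_0 2^{j^{*}+1}$, while for every $k\in\{a,\dotsc,b-1\}$ we have $d_k \ge d_0 2^{j^{*}}$, so
\[
\sqrt{\sum_{k=0}^{t}d_k^2} \ge \sqrt{\sum_{k=a}^{b-1} d_k^2} \ge \sqrt{b-a}\,\cdot d_0 2^{j^{*}}.
\]
The $d_0 2^{j^{*}}$ factors cancel, yielding the clean bound
\[
\frac{d_{t+1}}{\sqrt{\sum_{k=0}^{t}d_k^2}} \le \frac{2}{\sqrt{b-a}}.
\]

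To close the proof I need to translate $b-a+1 \ge (N+1)/L$ into a usable lower bound on $b-a$. Writing $b-a \ge (N+1-L)/L$ and invoking the hypothesis $N\ge 2L$ gives $N+1-L \ge L+1 \ge (N+2)/2 > N/2$, hence $b-a \ge N/(2L)$. Substituting yields
\[
\frac{d_{t+1}}{\sqrt{\sum_{k=0}^{t}d_k^2}} \le 2\sqrt{\frac{2L}{N}} = \frac{2\sqrt{2}\sqrt{L}}{\sqrt{N}} \le \frac{4\sqrt{\log_{2+}(d_N/d_0)}}{\sqrt{N}},
\]
which is the desired conclusion.

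The main obstacle is not conceptual but careful bookkeeping: picking the right partition (dyadic in the value of $d_k$ rather than in the index $k$) is what makes the ratio $d_{t+1}/(\min_{k\in\text{block}} d_k)$ automatically bounded by $2$, and one has to verify that the pigeonhole block is nontrivial (i.e.\ $b>a$) and that the chosen $t=b-1$ satisfies $t<N$, both of which follow from the hypothesis $N\ge 2L$. The edge cases ($a=0$ forcing $j^{*}=0$, and $b=N$ which is permitted) cause no trouble.
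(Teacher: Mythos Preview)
Your argument is a clean variant of the paper's own proof: both use a dyadic pigeonhole, but the paper partitions the \emph{index} set $\{0,\dotsc,N\}$ into $K=\lceil\log_2(d_N/d_0)\rceil$ equal-length intervals and shows by contradiction that on one of them the sequence at most doubles, whereas you partition by dyadic \emph{value} levels and apply pigeonhole directly to find a long level. Either way one lands on the same estimate $2/\sqrt{\text{block length}}$, and your version avoids the ceiling/floor bookkeeping of the paper.

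One arithmetic slip to fix: the chain ``$N+1-L \ge L+1 \ge (N+2)/2$'' is wrong whenever $N>2L$, since the second inequality is equivalent to $2L\ge N$. What you actually need is $N+1-L \ge N/2$, and this follows immediately from $L\le N/2$ (the hypothesis $N\ge 2L$). With that correction, $b-a \ge (N+1-L)/L \ge N/(2L)$ holds and the rest of your computation goes through verbatim.
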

In contrast to the bound in \cite{defazaio2023learning}, we bound $\frac{d_{t+1}}{\sqrt{\sum_{k=0}^{t}d_{k}^{2}}}$ instead of $\frac{d_{t+1}}{\sum_{k=0}^{t}d_{k}}$. This is the reason why the overall guarantee improves by a factor of $\sqrt{\log_2(D/d_0)}$. For instance, if we set $\lambda_k=1$ for all $k$ and substitute the bound from Lemma~\ref{lem:d_sequence}, we get the convergence rate
\[
    f(\hat x_t) - f_*
    = \mathcal{O} \left( \frac{GD\log(n+1)\sqrt{\log_{2+}(D/d_0)}}{\sqrt{n}} \right).
\]
where $t\le n$ is chosen as the argmin from Lemma~\ref{lem:d_sequence}. Furthermore, for arbitrary increasing positive weights, we get the following guarantee by applying Lemma~\ref{lem:d_sequence} directly to the bound in Theorem~\ref{thm:gd}:
\[
    f(\hat x_t) - f_*
    =\mathcal{O} \left( \frac{GD\log(n+1)\sqrt{\log_{2+}(\lambda_{n+1}D/d_0)}}{\sqrt{n}}\log\left(\sum_{k=0}^n\lambda_k^2\right)\right).
\]
Even though our theory does not guarantee that it is beneficial to use increasing weights $\lambda_k$, this result is, to the best of our knowledge, new for AdaGrad-like methods. It allows for a wide range of choices in $\lambda_k$. For example, if we set $\lambda_k = \beta_2^{-k^p}$ with $\beta_2<1$ and $p<1/3$, then the method is still guaranteed to converge at the rate of $\mathcal{O}\left(\frac{1}{n^{(1-3p)/2}}\right)$. This is of particular interest when we study Adam-like methods, see Section~\ref{sec:adam} for a discussion.

The logarithmic term $\log(n+1)$ is, however, not necessary and only arises due to the use of Gradient Descent update. The Dual Averaging update, provides a tighter guarantee as given in the next theorem.
\begin{thm}\label{thm:da}
Let $f$ be a convex and $G$-Lipschitz function. For Algorithm~\ref{alg:dadagradv2da}, it holds that:
\[
        f(\overline x_t) - f_*
        \le \frac{4GD}{\sqrt{n}}\sqrt{\log_{2+}\Bigl(\frac{D}{d_0}\Bigr)},
\]
where $t=\arg\min_{k\le n} \frac{d_{k+1}}{\sqrt{\sum_{i=0}^k d_i^2}}$ and $\log_{2+}(\cdot) = 1 + \log_2(\cdot)$.
\end{thm}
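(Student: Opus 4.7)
The plan is to establish, for every $k\le n$, a per-iteration bound of the form
\[
f(\overline x_k) - f_*\;\le\; C\,GD\,\frac{d_{k+1}}{\sqrt{\sum_{i=0}^k d_i^2}}
\]
for an absolute constant $C$, then set $k=t$ (the $\arg\min$ index) and invoke Lemma~\ref{lem:d_sequence}, using $d_{n+1}\le D$ inside the logarithm, to recover $\tfrac{4GD}{\sqrt{n}}\sqrt{\log_{2+}(D/d_0)}$ after tuning constants.

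The starting step is standard: applying convexity, multiplying by $\lambda_i=d_i^2$, summing, splitting $x_i-x_*=(x_i-x_0)+(x_0-x_*)$, using the algorithmic identity $\sum\lambda_i\langle g_i,x_0-x_i\rangle=\hat d_{k+1}\|s_{k+1}\|$, and Cauchy-Schwarz on $\langle s_{k+1},x_0-x_*\rangle\le D\|s_{k+1}\|$ yields
\[
\sum_{i=0}^k \lambda_i(f(x_i)-f_*)\;\le\;(D-\hat d_{k+1})\|s_{k+1}\|.
\]
Non-negativity of the left side immediately gives $\hat d_{k+1}\le D$, and therefore $d_k\le D$ for all $k$.

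The main obstacle is a self-bounding estimate $\|s_{k+1}\|\le C\,d_{k+1}A_{k+1}$ for $A_{k+1}^2=d_{k+1}^2G^2+\sum d_i^2\|g_i\|^2$. Using $x_0-x_i=\gamma_i s_i$ for $i\ge 1$ and the telescoping identity $\lambda_i\langle g_i,s_i\rangle=\tfrac12(\|s_{i+1}\|^2-\|s_i\|^2-\lambda_i^2\|g_i\|^2)$ expresses $\hat d_{k+1}\|s_{k+1}\|$ as $\tfrac12\sum_{i\ge 1}\gamma_i(\|s_{i+1}\|^2-\|s_i\|^2-\lambda_i^2\|g_i\|^2)$; Abel summation (exploiting that $\gamma_i=1/A_i$ is non-increasing) produces a quadratic inequality $\gamma_k\|s_{k+1}\|^2\le 2\hat d_{k+1}\|s_{k+1}\|+\text{error}$. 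The delicate step is bounding the error $\sum\lambda_i^2\|g_i\|^2/A_i$ by $C\lambda_{k+1}A_{k+1}$ without spurious logs: partition the index set into slow-growth indices (where $\lambda_{i+1}\le 2\lambda_i$, so $A_{i+1}/A_i\le\sqrt{3}$ and the standard AdaGrad telescope $\sum\lambda_i\|g_i\|^2/A_{i+1}\le 2A_{k+1}$ applies) and doubling indices $j_1<\cdots<j_L$ (where $d_{j_{l+1}}>\sqrt{2}\,d_{j_l}$, so a geometric-series bound on $\sum d_{j_l}^3\le C\,d_{k+1}^3$, combined with $A_{k+1}\ge Gd_{k+1}$ from the seeded $\lambda_{k+1}G^2$ term, controls the sum). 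Solving the quadratic using $\hat d_{k+1}\le d_{k+1}$ gives the desired $\|s_{k+1}\|\le Cd_{k+1}A_{k+1}$; the same quadratic estimate bounds $|\hat d_{k+1}|\,\|s_{k+1}\|$ when $\hat d_{k+1}<0$, so the case analysis closes cleanly.

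Combining $(D-\hat d_{k+1})\|s_{k+1}\|\le CD\,d_{k+1}A_{k+1}$ with $A_{k+1}\le G(d_{k+1}+\sqrt{\sum d_i^2})$ and dividing by $\sum\lambda_i=\sum d_i^2$ gives $f(\overline x_k)-f_*\le CGD\bigl(d_{k+1}^2/\sum d_i^2+d_{k+1}/\sqrt{\sum d_i^2}\bigr)$. At $k=t$ the ratio $d_{k+1}/\sqrt{\sum d_i^2}$ is at most $1$ for any non-trivial $n$, so the quadratic term is dominated by the linear one, and Lemma~\ref{lem:d_sequence} finishes the proof.
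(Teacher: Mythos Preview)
Your overall architecture matches the paper's proof: derive $\sum_{i\le k}\lambda_i(f(x_i)-f_*)\le (D-\hat d_{k+1})\|s_{k+1}\|$, deduce $d_k\le D$, obtain a quadratic self-bound on $\|s_{k+1}\|$ from the telescoping identity and Abel summation (this is the paper's Lemma~\ref{lem:da_s}), and finish with Lemma~\ref{lem:d_sequence}. The substantive difference is how you bound the error term $E_k:=\sum_{i\le k}\gamma_i\lambda_i^2\|g_i\|^2$.

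The paper handles $E_k$ in one line: since $\lambda_i=d_i^2$ is non-decreasing, $\lambda_i\le\lambda_k$ for all $i\le k$, so
\[
E_k\;\le\;\lambda_k\sum_{i\le k}\gamma_i\lambda_i\|g_i\|^2
\;=\;\lambda_k\sum_{i\le k}\frac{\lambda_i\|g_i\|^2}{\sqrt{\lambda_iG^2+\sum_{j<i}\lambda_j\|g_j\|^2}}
\;\overset{\text{Prop.~\ref{pr:da_sequence_bound}}}{\le}\;2\lambda_k\sqrt{\sum_{i\le k}\lambda_i\|g_i\|^2},
\]
using $\lambda_iG^2\ge\lambda_i\|g_i\|^2$ in the denominator. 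No partition, no log: the square-root telescope of Proposition~\ref{pr:da_sequence_bound} applies directly because the denominator here is $A_i$, not $A_i^2$ as in the GD analysis. Your concern about ``spurious logs'' seems to come from conflating this with Proposition~\ref{pr:err-corr}, which is only needed for Algorithm~\ref{alg:dadagradv2gd}. Your partition argument is both unnecessary and, as written, incomplete: for the slow-growth indices you quote the telescope $\sum\lambda_i\|g_i\|^2/A_{i+1}\le 2A_{k+1}$, but the error term carries $\lambda_i^2$, and you never say how the extra factor of $\lambda_i$ is absorbed on that part of the partition. If the fix is $\lambda_i\le\lambda_{k+1}$, that holds for \emph{all} indices and the partition dissolves.

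Two minor deviations. First, the paper avoids your sign case-analysis on $\hat d_{k+1}$ by bounding $\sum\lambda_i(f(x_i)-f_*)\le D\|s_{k+1}\|+\tfrac12 E_k$ directly from the telescoping expansion, rather than stopping at $(D-\hat d_{k+1})\|s_{k+1}\|$ and then invoking the self-bound on $\|s_{k+1}\|$. Second, your claim that $d_{t+1}/\sqrt{\sum_{i\le t}d_i^2}\le 1$ ``for any non-trivial $n$'' is not literally true; Lemma~\ref{lem:d_sequence} only gives an $O(1)$ bound on this ratio under its hypothesis $n\ge 2\log_{2+}(D/d_0)$, which suffices to absorb the quadratic term into the linear one at the cost of a constant.
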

Comparing this with the previous rate, the only difference is the removal of a  multiplicative $\log(n+1)$ factor. This improvement, however, is mostly theoretical as Gradient Descent typically performs better in practice than Dual Averaging. We also note that we do not have a convergence result for Algorithm~\ref{alg:dadagradv2da} with weights other than $\lambda_k=d_k^2$. This is due to the fact that the DA analysis requires the step size to be monotonically decreasing, so we cannot place an extra weighting factor in the numerator of $\gamma_{k+1}$.

\section{Lower Complexity Bounds for Exponentially Bounded Algorithms}
We can obtain an interesting class of algorithms, which contains our two D-Adaptation variants, by restricting the rate of growth.
\begin{defn}
An optimization algorithm is exponentially bounded if there exists a constant $d_0$, so that for any sequence of G-bounded gradients it returns a sequence of iterates such that for all $k$:
\[
\left\Vert x_{k}-x_{0}\right\Vert \leq2^{k}d_{0}.
\]
\end{defn}
\begin{thm}
D-Adaptation, DoG and Prodigy are exponentially bounded.
\end{thm}

A simple lower complexity bound can be established  via a simple 1-dimensional resisting oracle. The bound depends on the "scale" of the initial step of the algorithm, which is the size of the initial step from $x_0$ to $x_1$. This initial step is $g_0 \cdot d_0/\sqrt{G^2 + \left\Vert g_{0}\right\Vert ^{2}}$ for D-Adaptation, and can be thought of as an algorithm-agnostic measure of $d_0$.

Our lower bound allows the resisting oracle to choose a constant $D$ after seeing $x_1$, which is a much stronger oracle than required for establishing a lower bound. Ideally, a lower bound could be established where the constant $D$ is fixed but unknown to the algorithm, and the actual distance to solution $\left\Vert x_{0}-x_{*}\right\Vert \leq D$ given by the oracle is allowed to depend on the iterate sequence. 

The primary consequence of this difference is that our construction only tells us that hard problems exist for $n$ small relative to $D/d_0$, of the scale $n < \log(D/d_0)$. It remains an open problem to show a more general lower bound for larger $n$. This is in a sense a trivial consequence of the exponentially bounded property, but is actually representative of the real behavior of the methods during the early steps of the algorithm, where both $n$ and $d_k$ actually are small. Any more general lower bound must cover this case.

Our new D-Adaptation variants are optimal among exponentially bounded algorithms for this complexity class:
\begin{thm}
\label{thm:explb}Consider any exponentially bounded algorithm for minimizing a convex $G$-Lipschitz function
starting from $x_{0}$, which has no knowledge of problem constants G and D. There exists a fixed gradient oracle such that any sequence
of $x_{1,\dots},x_{n}$, there exists a convex Lipschitz problem
$f$ with $G=1$ and $\left\Vert x_{0}-x_{*}\right\Vert \leq D$ for all minimizing points $x_*$, consistent with the gradient oracle such that:
\[
\min_{k\leq n}f(x_{k})-f_{*}\geq\frac{DG\sqrt{\log_{2}(D/x_{1})}}{2\sqrt{n+1}}.
\]
\end{thm}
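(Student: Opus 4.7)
The plan is to build a one-dimensional resisting-oracle lower bound. I take $x_0 = 0$ and fix the gradient oracle to return $g(x) = 1$ at every query; this is $G = 1$-bounded and is consistent with any convex $1$-Lipschitz function whose subdifferential contains $+1$ at each queried point. Since the oracle is fixed before the algorithm runs, the iterates $x_1, \dots, x_n$ are determined in advance of the adversary's commitment to a problem, and applying the exponentially-bounded property gives $|x_k - x_0| \le 2^k d_0$ for every $k \le n$; in particular $\min_{k \le n} x_k \ge -2^n d_0$.

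The adversary then picks $D = 2^{n+1} d_0$, sets $x_* = -D$, and declares the problem to be $f(x) = |x - x_*|$. This $f$ is convex and $1$-Lipschitz with $f_* = 0$ and $\|x_0 - x_*\| = D$. Because each iterate satisfies $x_k \ge -2^n d_0 = -D/2 > x_*$, the subgradient $f'(x_k) = +1$ is exactly what the oracle returned at every queried point (including $x_0$), so the transcript is consistent with this chosen problem. The gap at iterate $k$ is then $f(x_k) - f_* = x_k + D \ge D/2$ for every $k \le n$, so $\min_{k \le n} f(x_k) - f_* \ge D/2$.

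It remains to match the right-hand side of the claim. Identifying $x_1 = \|x_1 - x_0\|$ with $d_0$ (the ``algorithm-agnostic measure of $d_0$'' discussed just before the theorem statement) and using $D = 2^{n+1} d_0$, one gets $\log_2(D/x_1) = n+1$, so the claimed RHS evaluates to
\[
\frac{D G \sqrt{\log_2(D/x_1)}}{2\sqrt{n+1}} \;=\; \frac{D \sqrt{n+1}}{2\sqrt{n+1}} \;=\; \frac{D}{2},
\]
which is precisely the gap bound just obtained. The one delicate point I foresee is the bookkeeping between the exponential-bound constant $d_0$ and the observed initial step $x_1$: the definition forces only $d_0 \ge x_1/2$, so for an algorithm whose $d_0$ dominates its initial step, the naive scaling $D = 2^{n+1} d_0$ inflates the $\log_2(D/x_1)$ factor slightly and the choice of $D$ must be retuned (e.g.\ taking $D = c \cdot 2^n d_0$ for an appropriate $c > 1$) so that the mismatch gets absorbed into the constant $1/2$ in the denominator. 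All of the other ingredients (Lipschitzness of $f$, consistency of the fixed oracle, and the containment $x_k > x_*$ used to force the subgradient to be $+1$) are immediate from the exponentially-bounded hypothesis.
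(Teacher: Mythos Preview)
Your proposal is correct and follows essentially the same argument as the paper: a one-dimensional resisting oracle returning a constant gradient, the minimizer placed at distance $D = 2^{n+1}\cdot(\text{scale})$ so that the exponentially-bounded iterates cannot reach past $D/2$, yielding $\min_k f(x_k)-f_*\ge D/2$, and then the algebraic identity $\log_2(D/x_1)=n+1$ turning $D/2$ into the claimed bound. The only cosmetic differences are that the paper uses gradient $-1$ and places $x_*$ to the right (you use $+1$ and place it to the left), and the paper parameterizes the construction directly by $x_1$ rather than $d_0$; your explicit remark about the $d_0$ vs.\ $x_1$ bookkeeping is in fact more careful than the paper, which silently identifies the two.
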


Using the simple construction from Theorem~\ref{thm:explb}, 
we show in Appendix~\ref{sec:lowertheory} that the
class of exponentially bounded methods (potentially with an exponent other than 2) covers all Gradient Descent approaches that
use an estimate of $d_{k}\le cD$ for some constant c, and use a step
size $\gamma_{k}\leq d_{k}/G$ without line-search or other additional queries. So the only way to achieve a $\log\log$
dependence on $d_{0}$ is by using a method that performs some queries that overshoot the
standard $D/G$ step size by more than a fixed constant factor. Although using larger step sizes is not problematic
for Lipschitz functions, it comes with the risk of causing training
divergence when applied to functions whose gradients are only locally
bounded by $G$, which is common in machine learning settings.

Lower complexity bounds for the average regret in the more general
online learning setting also apply here. They are of the form \citep{pdecoin}:
\[
\frac{1}{n}\sum_{k=0}^{n}\left\langle g_{k},x_{k}-x_{*}\right\rangle = \Omega\left(\frac{DG\sqrt{\log_{2}(D/\epsilon)}+\epsilon}{\sqrt{n+1}}\right).
\]
where $\epsilon$ is a ``user-specificed constant'' playing a similar role to $x_1$. Bounds on the average regret directly bound function value sub-optimality as 
\[
f(\bar{x})-f_{*}\leq\frac{1}{n+1}\sum_{k=0}^{n}\left[f(x_{k})-f_{*}\right]\leq\frac{1}{n+1}\sum_{k=0}^{n}\left\langle g_{k},x_{k}-x_{*}\right\rangle,
\]
where $\bar{x}=\frac{1}{n+1}\sum_{k=0}^{n}x_{k}$.

\section{Related Work}
In this section, we review the major classes of techniques for optimizing convex Lipschitz functions with some level of problem parameter independence. 

The Polyak step size \cite{polyakbook} trades the knowledge of $D$ for $f_{*}$, achieving optimal convergence rate without additional log factors. Stable convergence requires accurate $f_*$ estimates. A restarting scheme converges within a multiplicative log factor of the optimal rate \cite{revisiting-polyak}. There has been substantial recent research on modifications of the Polyak step size to make it better suited to machine learning tasks \citep{loizou2021stochastic, gower2021stochastic, orvieto2022dynamics} but as of yet they have not seen widespread adoption.

Coin-betting \cite{coin-betting, pmlr-v35-mcmahan14, pmlr-v75-cutkosky18a, pdecoin, varcoh} is a family of approaches from the online learning setting which are also applicable for convex non-smooth optimization. They work by establishing a relationship by duality between regret minimization and wealth-maximization. Existing approaches for wealth-maximization can then be mapped to algorithms for regret minimization. Coin-betting approaches give convergence rates for an equal-weighted average of the iterates of the form:
\[
f(\bar{x}_{n})-f_{*}=\mathcal{O}\left(\frac{DG\sqrt{\log\left(1+D/d_{0} \right)}}{\sqrt{n+1}}\right).
\]
Standard D-Adaptation obtains asymptotic rates without the log factor, but was otherwise (theoretically) slower in finite time, as it had a $\log(\cdot)$ rather than a $\sqrt{\log(\cdot)}$ dependence on $D/d_0$:
\[
f(\hat{x}_{n})-f_{*} \le \frac{16\log_{2+}(d_{n+1}/d_{0})}{n+1}D\sqrt{\sum_{k=0}^{n}\left\Vert g_{k}\right\Vert ^{2}}\leq
\frac{16DG\log_{2+}(D/d_{0})}{\sqrt{n+1}}.
\]
The Prodigy method closes the gap, giving the same sqrt-log dependence as coin betting.

The DoG method \citep{dog}, based on the bisection approach of \citet{parameterfreesgd}, is the only other approach that we are aware of that estimates $D$ in an online fashion. DoG estimates $D$ by $\bar{r}_{k}$:
\[
\bar{r}_{k}=\max_{i\leq k}\left\Vert x_{i}-x_{0}\right\Vert.
\]
\citet{dog} use this quantity as a plug-in estimate for the numerator of the step size, similar to D-Adaptation's approach. This approach can divergence in theory, but with additional modifications to the step size, the "tamed" T-DoG method is shown to converge. It has a $\log_+(D/d_0)$ dependence on the initial sub-optimally of the D estimate, so our approach improves on this dependence by a $\sqrt{\log_+(D/d_o)}$ factor.
 
\citet{malitsky20adaptive} proposed AdGD, a method for convex optimization that does not require any hyperparameters and has a rate that is at least as good as that of the optimally tuned Gradient Descent. However, AdGD requires the objective to be locally smooth, which hinders its use in many practical problems. \citet{latafat2023adaptive} partially addressed this gap by proposing a proximal extension, but the case of non-smooth differentiable functions has remained unstudied.

\section{Deriving Adam-Like Step Sizes}\label{sec:adam}
\begin{algorithm}[t]
\begin{algorithmic}[1]
    \State {\bfseries Input:} $d_0>0$ (default $10^{-6}$), $x_0$,  $\beta_1$  (default $0.9$), $\beta_2$  (default $0.999$), $\epsilon$  (default $10^{-8}$), $\gamma_k$ (default 1 with cosine annealing)
    \State $r_0=0$, $s_0=0$, $m_0=0$, $v_0=0$
    \For{$k=0$ {\bfseries to} $n$}
    \State $g_{k} \in \partial f(x_{k})$
        \State  $m_{k+1}= \beta_1 m_k + (1-\beta_1) d_k g_k$
        \State  $v_{k+1}= \beta_2 v_k + (1-\beta_2) d_k^2g_k^2$
        \State $r_{k+1} = \sqrt{\beta_2}r_k + (1-\sqrt{\beta_2})\gamma_k d_k^2\langle g_k, x_0 - x_k\rangle$
        \State $s_{k+1} = \sqrt{\beta_2}s_k + (1-\sqrt{\beta_2})\gamma_k d_k^2 g_k$
	\State $\hat{d}_{k+1} = \dfrac{r_{k+1}}{\left\Vert s_{k+1}\right\Vert_1 }$
        \State $d_{k+1}= \max(d_k, \hat d_{k+1})$
        \State $x_{k+1} = x_k - \gamma_k d_{k} m_{k+1}/(\sqrt{v_{k+1}}+d_k\epsilon)$
    \EndFor
\end{algorithmic}
\caption{\label{alg:prodigy_adam}Prodigy (Adam version)}
\end{algorithm}
To derive an Adam-like method, which should use an exponential moving average for the step size, we want to approximate Adam's update of the exponential moving average of squared gradients:
\[
    v_{k+1} = \beta_2 v_{k} + (1-\beta_2)g_k^2
    = (1-\beta_2) \sum_{i=0}^k \beta_2^{k-i} g_i^2,
\]
where $g_k^2$ is the coordinate-wise square of the gradient $g_k$. We can achieve this using exponential weights, $\lambda_k = \beta_2^{-k/2}$, which after substituting the definition of $\eta_k$ give us the following identity:
\[
    \frac{d_k^4}{\eta_k^2}
    = \frac{d_k^2}{\lambda_k^2}G^2 + \sum_{i=0}^{k}d_i^2\frac{\lambda_i^2}{\lambda_k^2} \|g_i\|^2
    = \frac{d_k^2}{\lambda_k^2}G^2 + d_k^2\|g_k\|^2 + \sum_{i=0}^{k-1}\beta_2^{k-i}d_i^2\|g_i\|^2.
\]
This can be seen as computing an exponential moving average of $d_k g_k$ rather than $g_k$ itself. This is our first observation. In addition, in Appendix~\ref{sec:coordinate}, we provide a coordinate-wise version of Algorithm~\ref{alg:dadagradv2da} and study its convergence properties. Based on the theory presented there, the denominator for $\hat d_{k+1}$ should use the $\ell_1$ norm of the weighted gradient sum. Thus, combining this insight with the design of Algorithm~\ref{alg:dadagradv2gd}, we obtain the following expression for the Adam estimate of $D$:
\[
    \hat{d}_{k+1}
    = \frac{\sum_{i=0}^k \lambda_id_i^2\langle g_i, x_0 - x_i\rangle}{\Vert \sum_{i=0}^k \lambda_id_i^2 g_i\Vert_1}
    = \frac{\sum_{i=0}^k\beta_2^{(k-i)/2}d_i^2 \langle g_i, x_0 - x_i\rangle}{\Vert \sum_{i=0}^k\beta_2^{(k-i)/2}d_i^2 g_i\Vert_1}.
\]
The update uses exponential moving average as well, although it is more conservative as it uses $\sqrt{\beta_2}$ instead of $\beta_2$. Note that there is an extra of $(1-\beta_2)$ in the update for $v_k$, which can be optionally compensated for by using the bias correction discussed by \citet{kingma2015adam}. These update rules are summarized in Algorithm~\ref{alg:prodigy_adam}. This is the main algorithm that we study numerically in the next section.

\section{Experiments}
We test our methods on convex logistic regression as well as deep learning problems.
The Prodigy method is used as presented in Algorithm~\ref{alg:prodigy_adam} in all deep learning experiments.

\paragraph{Logistic regression.} For the convex setting, we ran a set of classification experiments. For each dataset, we used the multi-margin loss \citep{multimargin}, a multi-class generalization of the hinge loss. This non-smooth loss results in bounded gradients, which is required by our theory. We perform full-batch rather that stochastic optimization, for two reasons. Firstly, it matches the assumptions of our theory. Secondly, fast learning rate adaptation is more crucial for full-batch optimization than stochastic optimization as fewer total steps will be performed. 

We performed 1,000 steps for each dataset, using a randomized $x_{0}$ and plot the results of 10 seeds. We ran both DA and SGD variants of each method. Each plot shows the accuracy of the average iterate for each method. Figure~\ref{fig:convex} shows that our proposed algorithm greatly out-performs regular D-Adaptation. Our weighted SGD variant of D-Adaptation is faster consistently across each dataset. Additionally, it adapts faster than the DoG method \citep{dog} on 10 of the 12 problems.

\paragraph{CIFAR10.} For neural network experiments\footnote{The PyTorch code of our optimizer is available at \url{https://github.com/konstmish/prodigy}}, we consider training on CIFAR10 \citep{cifar} with batch size 256, where D-Adapted Adam has a gap of a few percent compared to the standard Adam. We use cosine annealing with initial step size 1 for all Adam-based methods and initial step size $10^{-3}$ for Adam itself. The considered networks are VGG11 \citep{simonyan2014very} and ResNet-50~\citep{he2016deep}\footnote{VGG11 and ResNet-50 implementation along with the data loaders were taken from \url{https://github.com/kuangliu/pytorch-cifar}}. To simplify the experiment, we do not use weight decay, so both networks slightly overfit and do not reach high test accuracy values. All methods were run using same 8 random seeds.

We show the results in Figure~\ref{fig:cifar10}. As we can see, this gap is closed by Prodigy, which is achieved by the larger estimates of the step size.
 
For DoG and L-DoG, we compute the polynomial-averaging iterate and then report the best of the average and the last iterate. We average with $\gamma=8$, see \citep{dog} for the details. While DoG produces larger step size estimate than Prodigy (see the right column in Figure~\ref{fig:cifar10}, this is counterbalanced by the larger denominator in DoG. We also tried to modify DoG to use Adam-like step sizes but our heuristic modification diverged on this problem. We also observed that among DoG and its layer-wise version, L-DoG, there is no clear winner as the former performed better on VGG11 and the latter was better when training ResNet-50.

\begin{figure}
\includegraphics[width=\textwidth]{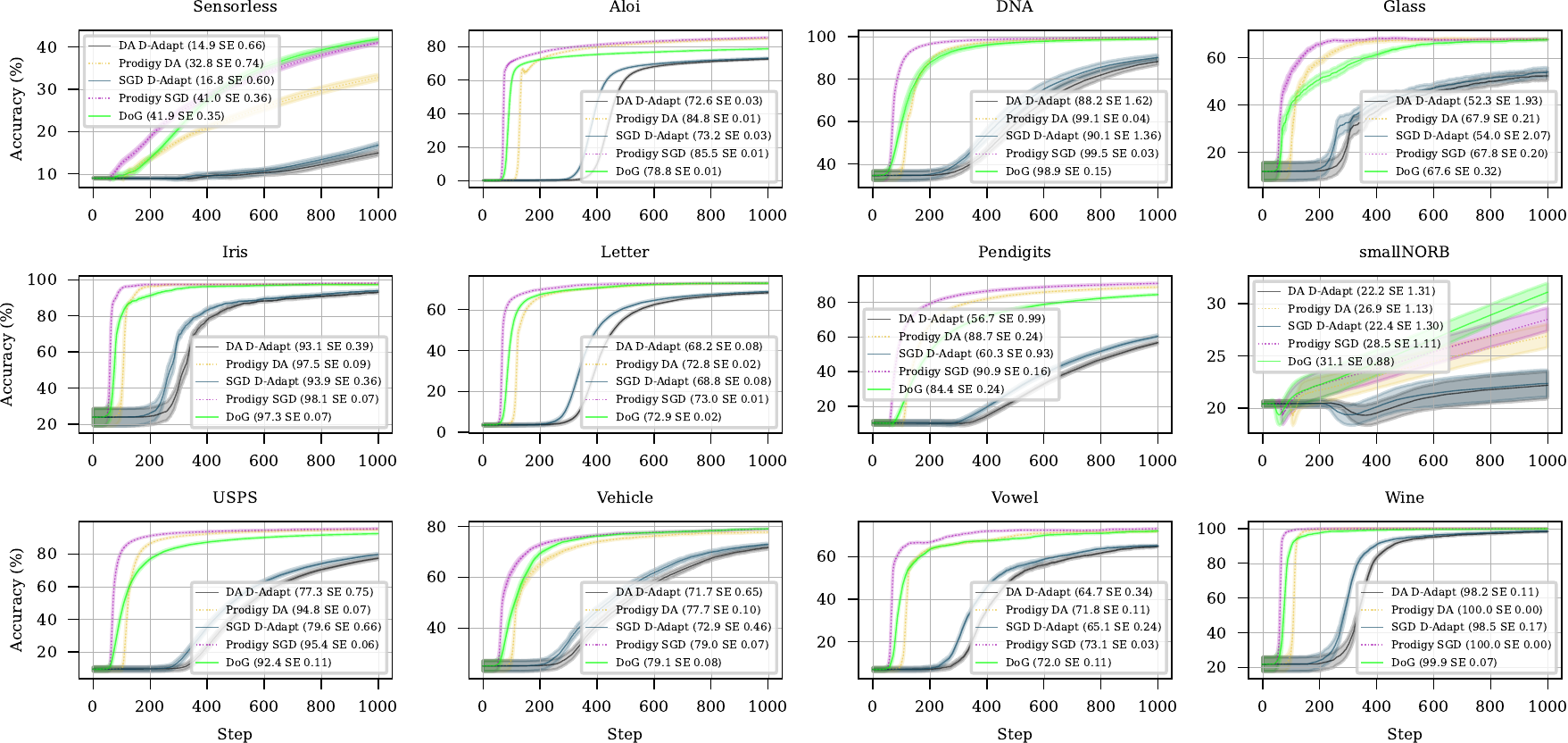}
\caption{\label{fig:convex} Convex multiclass classification problems. Error bars show a range of 1 standard error above and below the mean of the 10 seeds.}
\end{figure}

\begin{figure}
    \centering
    \includegraphics[scale=0.3]{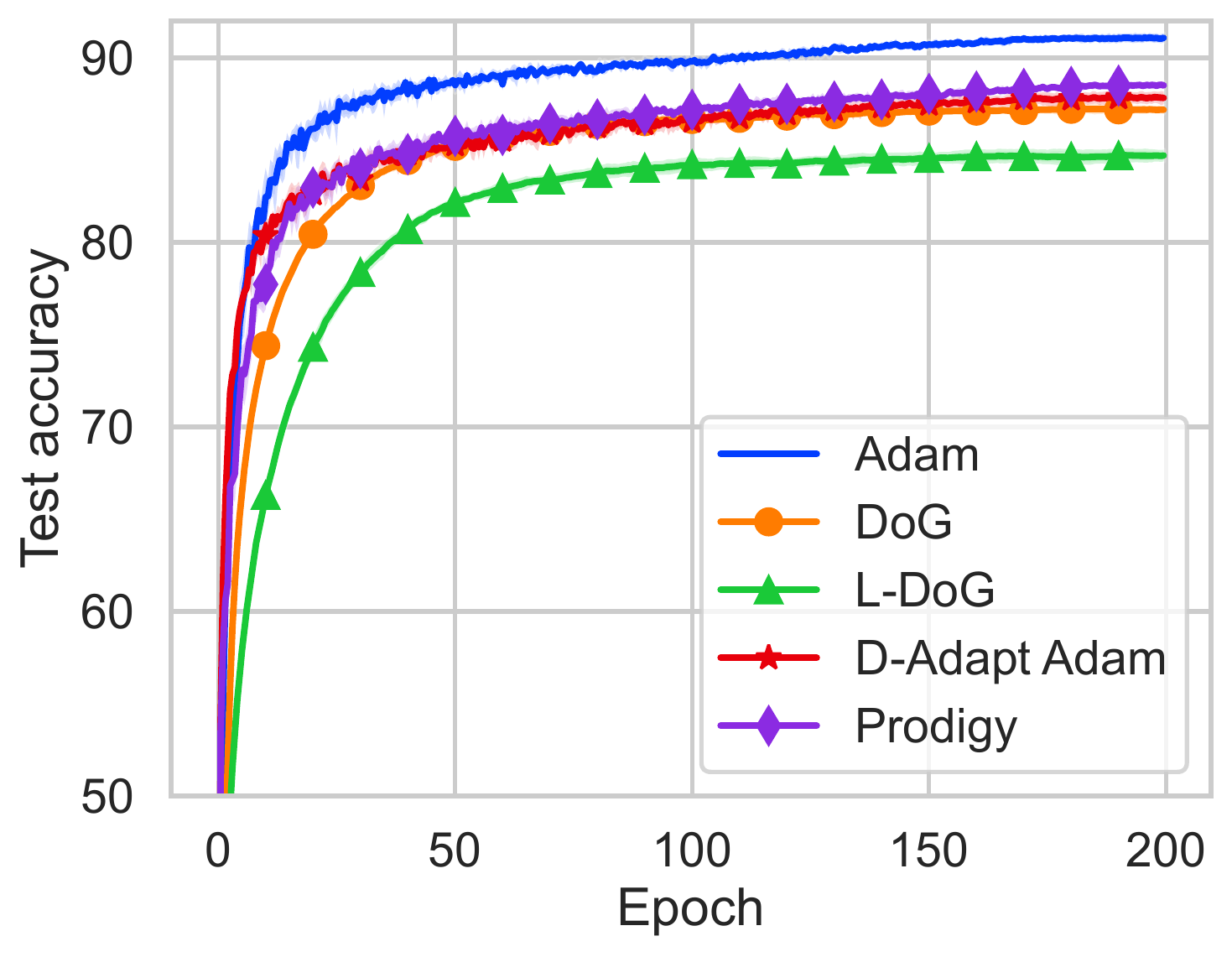}
    \includegraphics[scale=0.3]{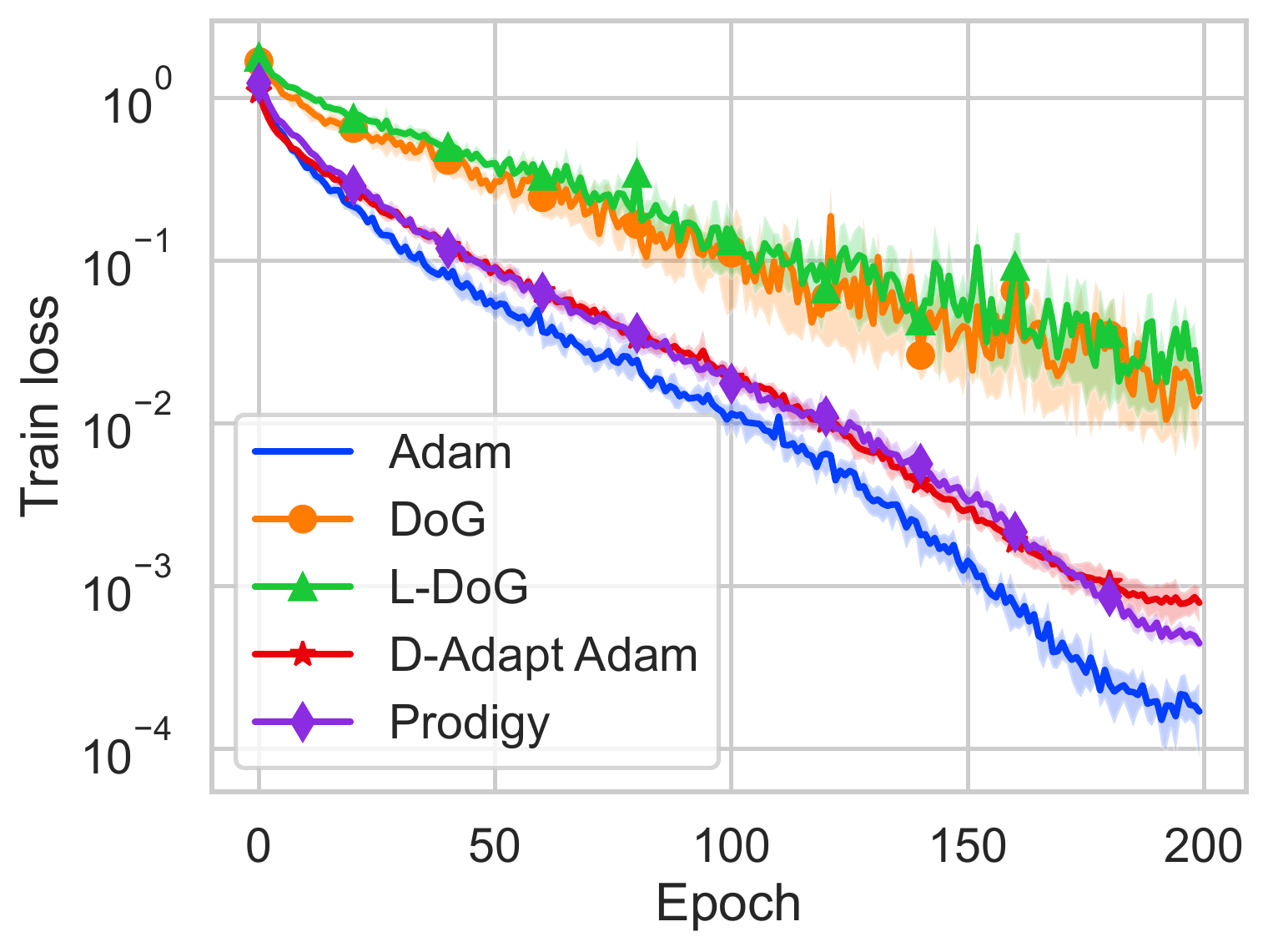}
    \includegraphics[scale=0.3]{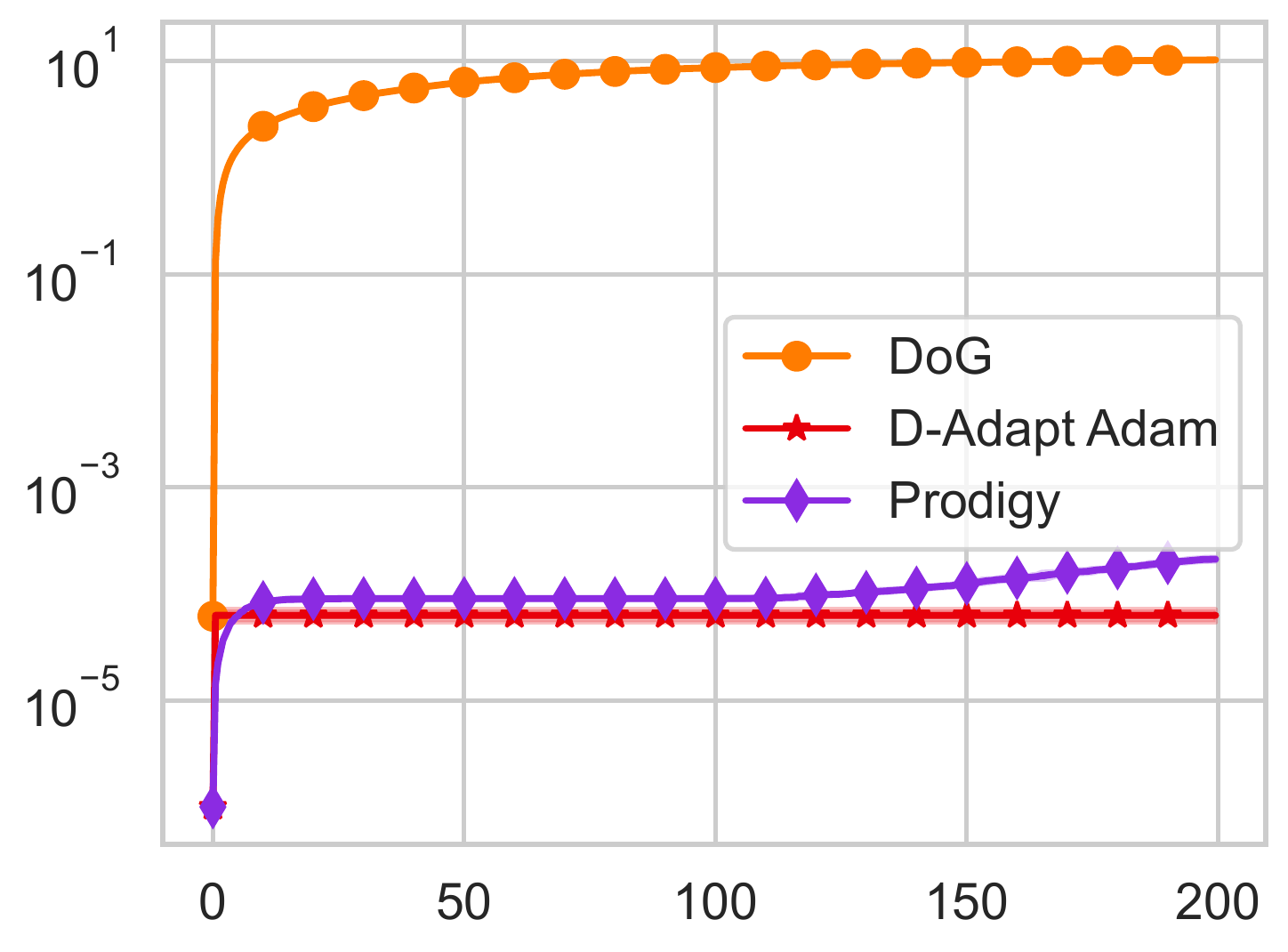}\\
    \includegraphics[scale=0.3]{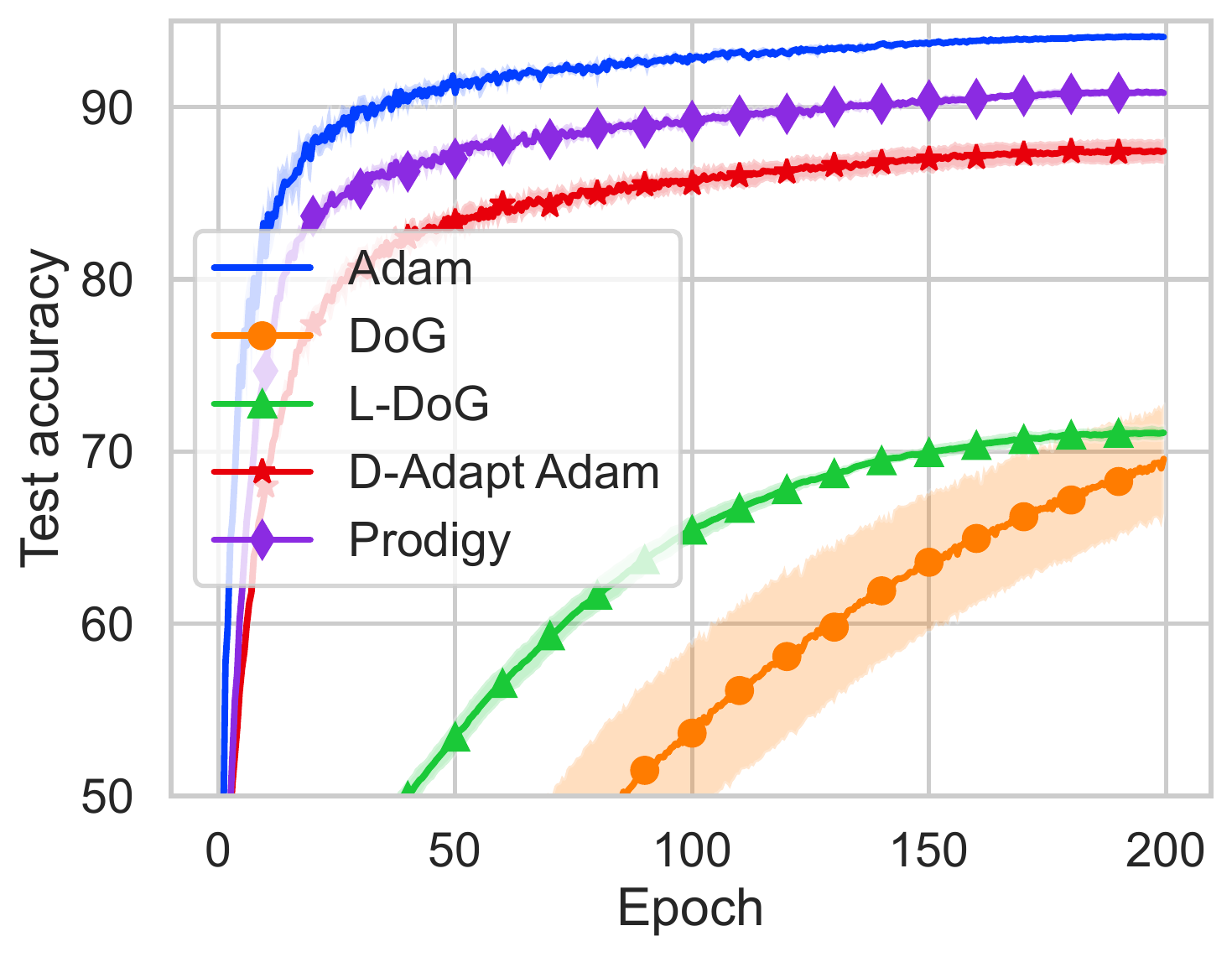}
    \includegraphics[scale=0.3]{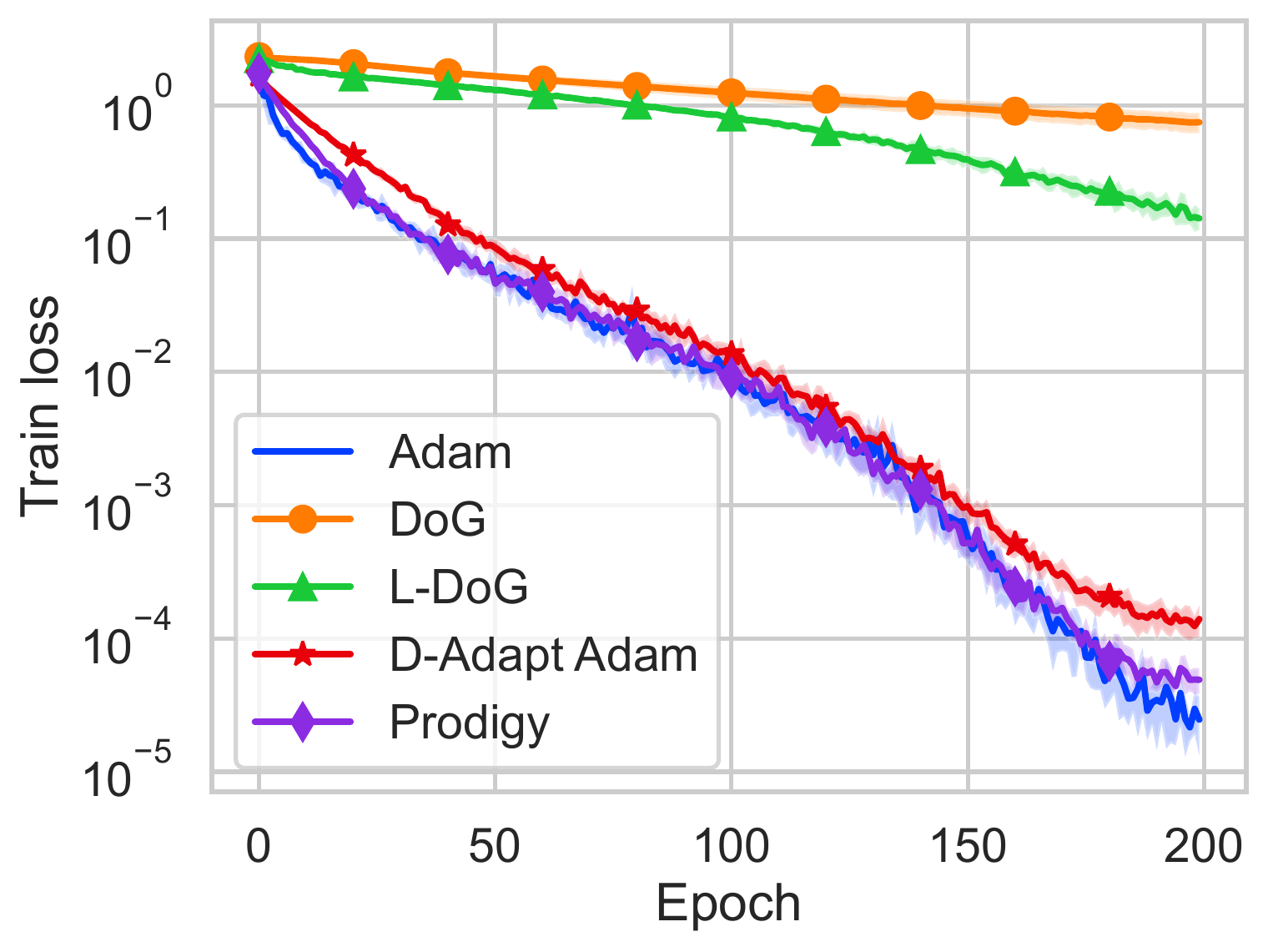}
    \includegraphics[scale=0.3]{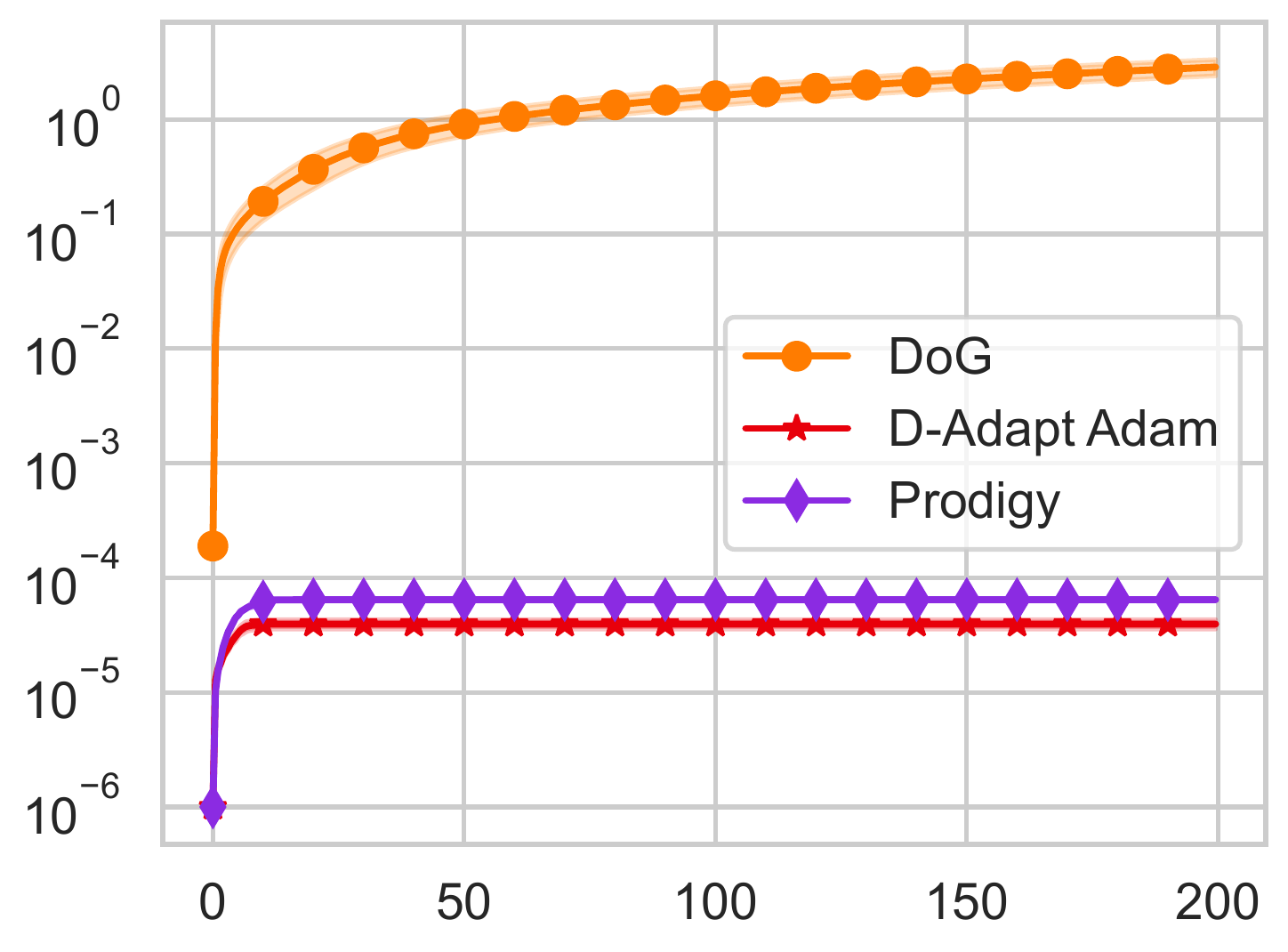}
    \caption{VGG11 and ResNet-50 training on CIFAR10. Left: test accuracy (\%), middle: train loss, right: step sizes. ``Prodigy'' is used as given in Algorithm~\ref{alg:prodigy_adam}. As expected, Prodigy estimates a larger step size than D-Adaptation, which helps it reach test accuracy closer to the one of Adam.}
    \label{fig:cifar10}
\end{figure}

\paragraph{nanoGPT transformer.} \begin{figure}[t]
    \centering
    \includegraphics[scale=0.29]{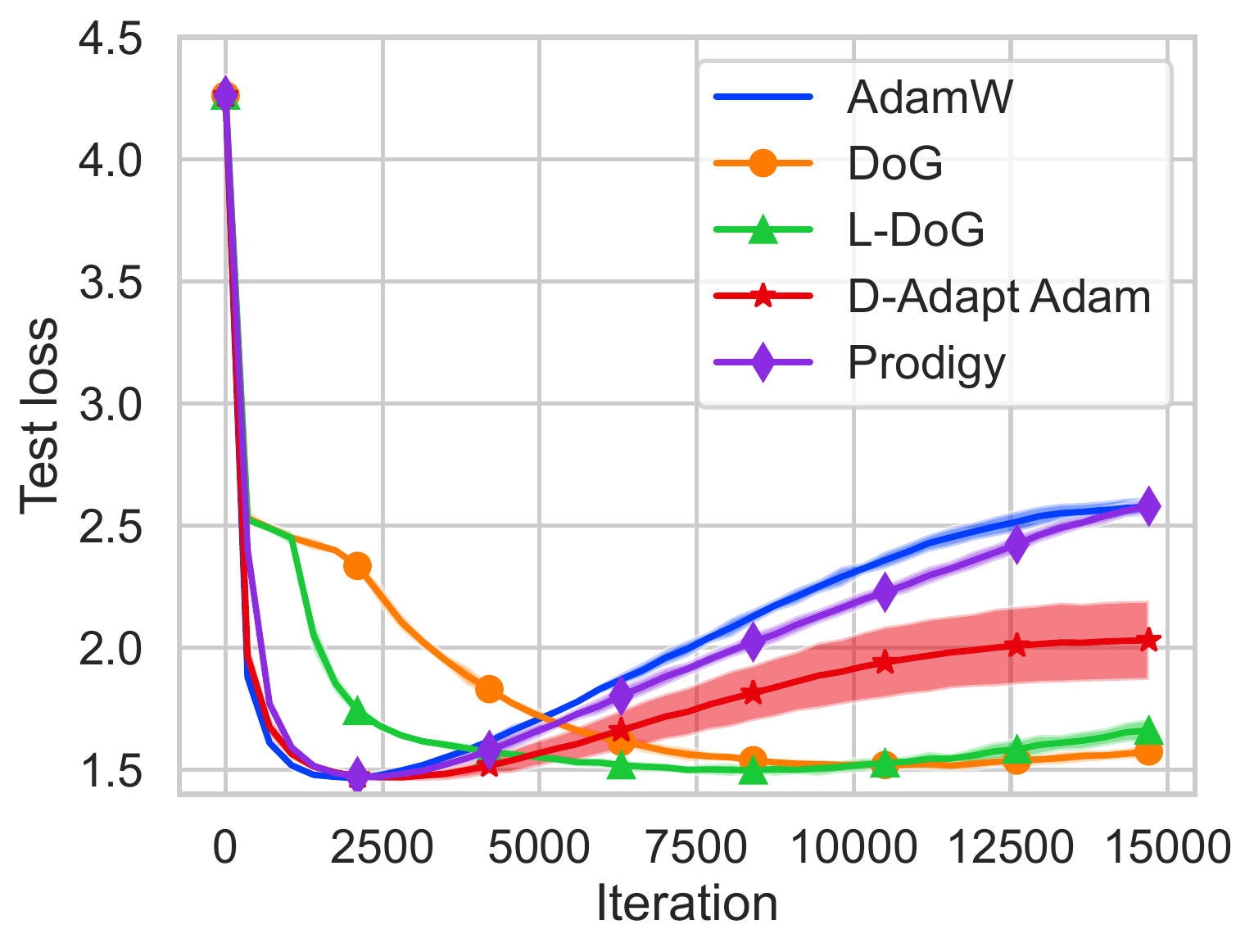}
    \includegraphics[scale=0.29]{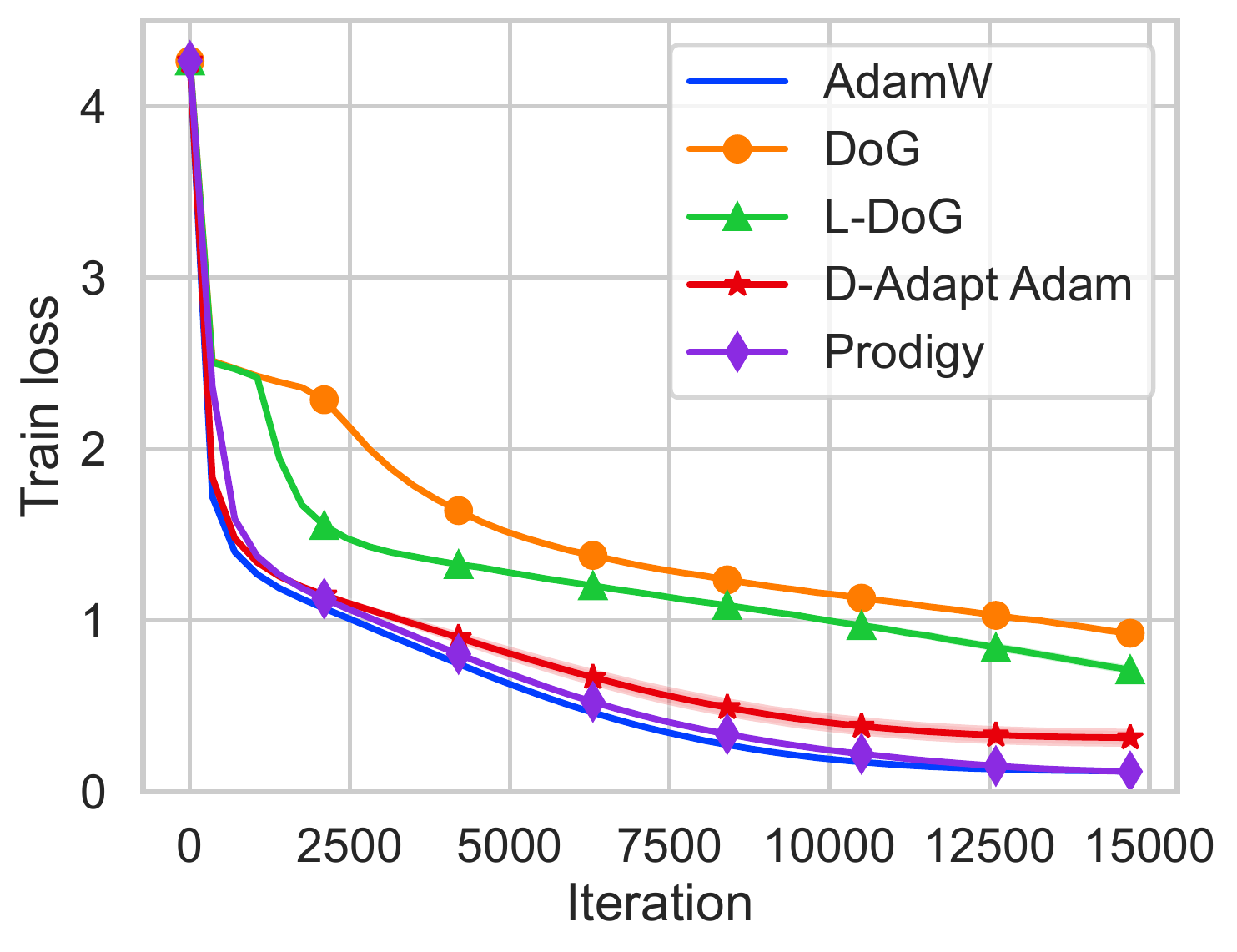}
    \includegraphics[scale=0.29]{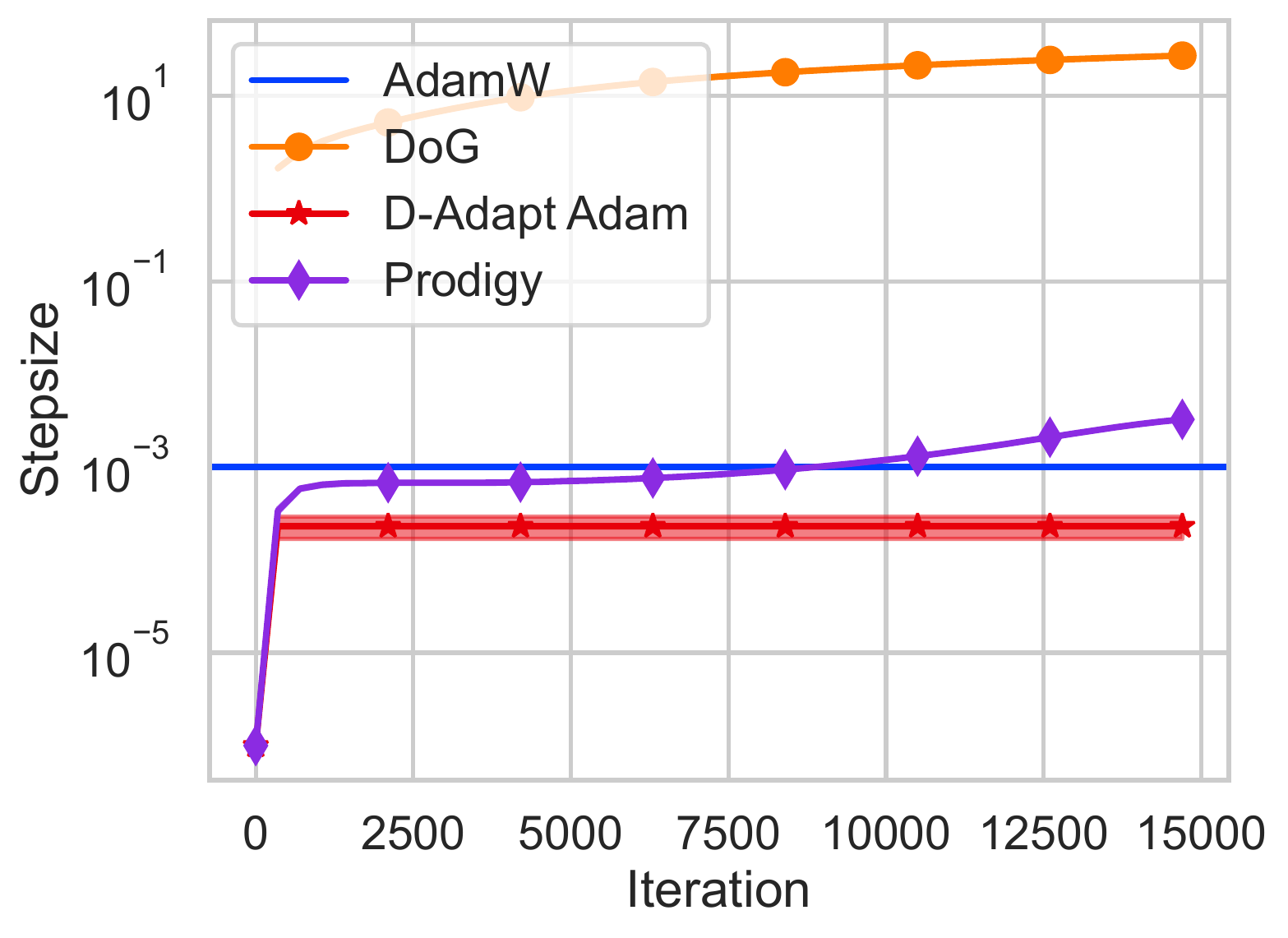}
    \caption{The test (left) and train (middle) loss curves  as well as the estimated stepsize (right) when training a 6-layer nanoGPT transformer on the Shakespeare dataset.}
    \label{fig:shakespeare}
\end{figure}
We also train a 6-layer transformer network from nanoGPT\footnote{\url{https://github.com/karpathy/nanoGPT}} on the Shakespeare dataset. For all methods, we use batch size 256, clip the gradients to have norm not exceeding 1 and use float16 numbers. We use AdamW with hyperparameters given in the repository, i.e., $\beta_2=0.99$, weight decay $0.1$, stepsize $10^{-3}$, cosine annealing with warmup over 100 steps. The same weight decay value and cosine annealing is used for Prodigy and D-Adapted Adam, except that the latter two methods use stepsize 1. We accumulate minibatches of size 12 into a batch of size 480. We tuned the weight decay for DoG and L-DoG and found the value $10^{-4}$ to work well for this problem. We ran each method with 8 random seeds and report the average as well as one-standard-deviation confidence intervals.

See Figure~\ref{fig:shakespeare} for the results. In terms of the test loss, all methods are roughly equivalent except that DoG and L-DoG were slower to reach the best value of roughly 1.5. For the train loss, Prodigy was on par with tuned AdamW and slightly better than D-Adapted Adam. Surprisingly, the estimated step size in Prodigy was very consistent across the 8 random seeds.

\subsection{Large-scale Adam experiments}
To validate the performance on large-scale practical applications directly against D-Adaptation, we ran the subset of the experiments from \citet{defazaio2023learning} that use the Adam optimizer. Methods without coordinate adaptivity are not competitive on these problems and so we exclude SGD and DoG from these comparisons.

\paragraph{LSTM, RoBERTa, GPT, DLRM, VarNet.} On the smallest problem of LSTM training, Prodigy appears to converge significantly faster in training loss and slightly overfits in test loss compared to the baselines. For RoBERTa~\citep{liu2019roberta} and GPT~\citep{gpt} training on BookWiki, Prodigy matches the performance of the baseline with only negligible differences. For the application problems, DLRM~\citep{DLRM19} on the Criteo Kaggle Display Advertising dataset, and fastMRI VarNet~\citep{zbontar2018fastmri}, Prodigy again closely matches the baselines.

\paragraph{ViT training.} \citet{defazaio2023learning} present a negative result for training vision transformer~\citep{dosovitskiy2020image}, where D-Adaptation significantly underperforms tuned Adam. We investigated this effect, and we were able to reproduce this gap across a wide range of weight-decay values, although this problem has high run-to-run variance of 1-2\% of test accuracy, which makes comparison difficult. Using weight decay 0.05 instead of 0.1 significantly improved performance of each variant, and so we present results for both the baselines and Prodigy at that value. We can see that Prodigy almost closes the gap between tuned Adam and D-Adaptation, giving a test accuracy of 74.63\% compared to 75.4\% for Adam, and more than 2\% higher than D-Adaptation. See Figure~\ref{fig:dlrm_fastmri_vit} for the results.

\begin{figure}
\includegraphics[width=0.49\textwidth]{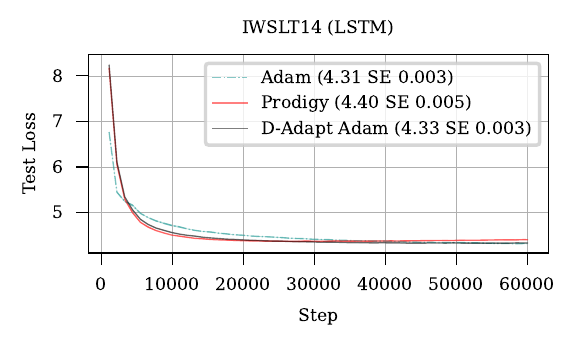}\includegraphics[width=0.49\textwidth]{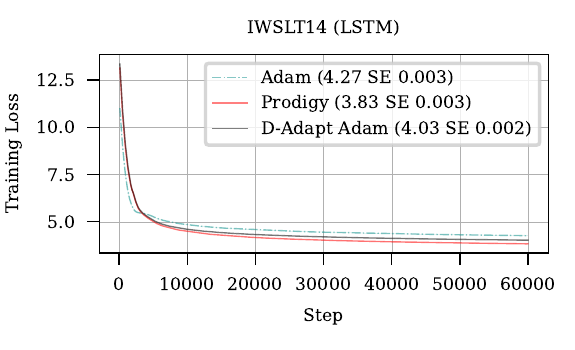}
\includegraphics[width=0.49\textwidth]{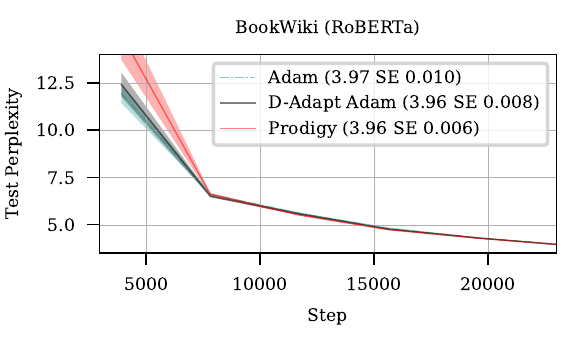}\includegraphics[width=0.49\textwidth]{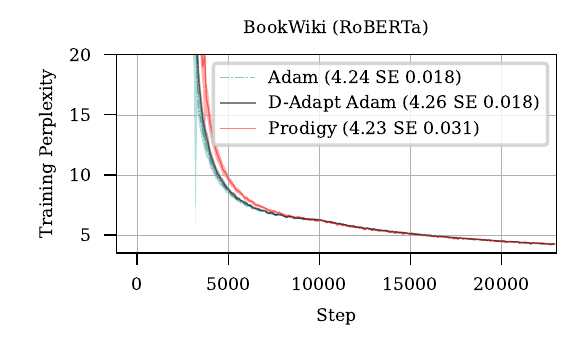}
\includegraphics[width=0.49\textwidth]{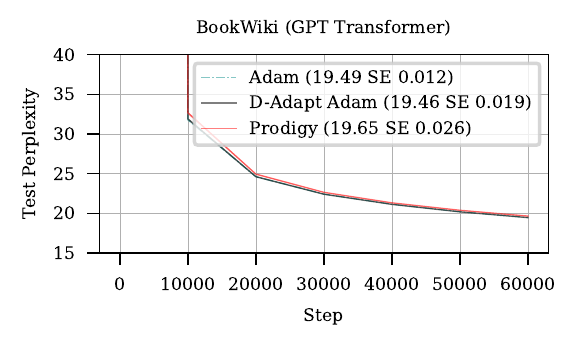}\includegraphics[width=0.49\textwidth]{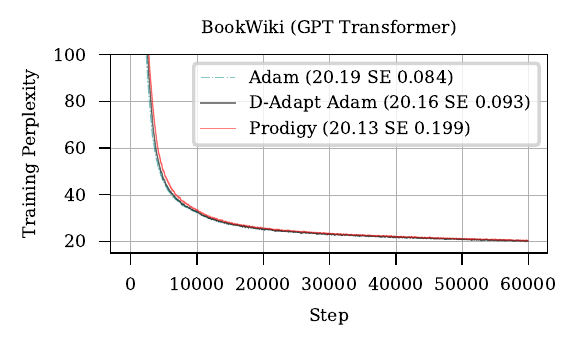}
\caption{\label{fig:lstm}Adam-family experiments.}
\end{figure}

\begin{figure}\label{fig:dlrm_fastmri_vit}
\includegraphics[width=0.49\textwidth]{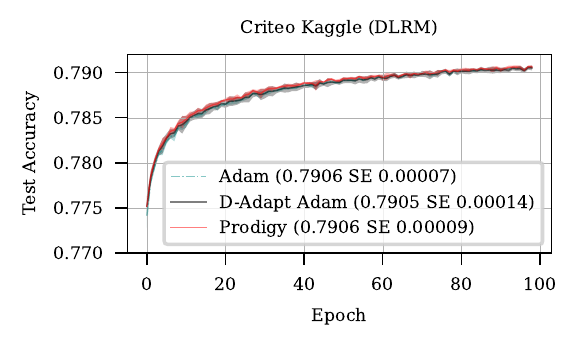}\includegraphics[width=0.49\textwidth]{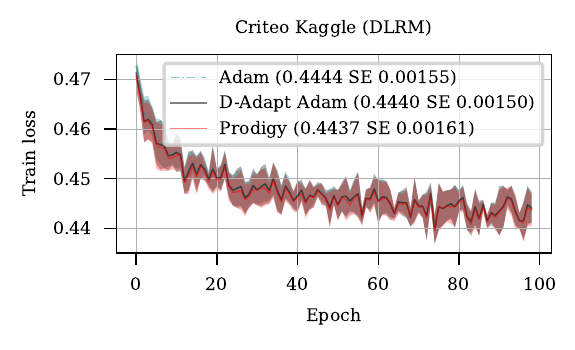}
\includegraphics[width=0.49\textwidth]{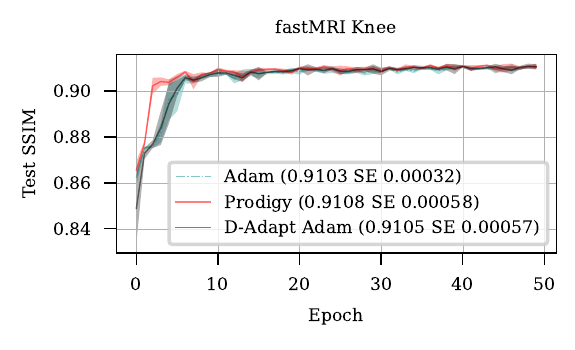}\includegraphics[width=0.49\textwidth]{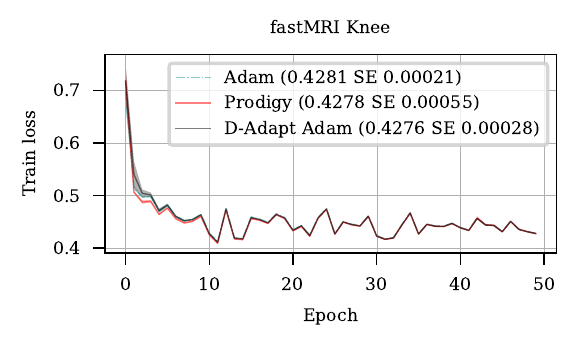}
\includegraphics[width=0.49\textwidth]{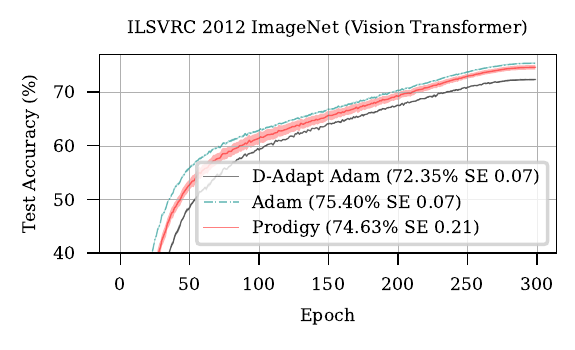}\includegraphics[width=0.49\textwidth]{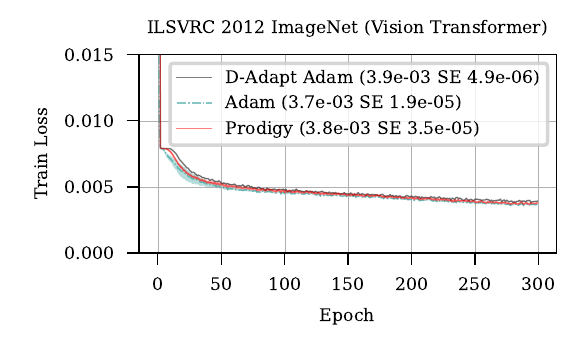}
\caption{\label{fig:other}Adam-family experiments.}
\end{figure}

\section{Conclusion}
We have presented two new methods for learning rate adaptation that improve upon the adaptation rate of the state-of-the-art D-Adaptation method. Prodigy, a form of weighted D-Adaptation, was shown to adapt faster than other known methods across a range of experiments.

\clearpage

\bibliography{expeditiously}

\begin{thebibliography}{35}
\providecommand{\natexlab}[1]{#1}
\providecommand{\url}[1]{\texttt{#1}}
\expandafter\ifx\csname urlstyle\endcsname\relax
  \providecommand{\doi}[1]{doi: #1}\else
  \providecommand{\doi}{doi: \begingroup \urlstyle{rm}\Url}\fi

\bibitem[Carmon and Hinder(2022)]{parameterfreesgd}
Yair Carmon and Oliver Hinder.
\newblock Making {SGD} parameter-free.
\newblock In \emph{Conference on Learning Theory}. PMLR, 2022.

\bibitem[Cutkosky and Orabona(2018)]{pmlr-v75-cutkosky18a}
Ashok Cutkosky and Francesco Orabona.
\newblock Black-box reductions for parameter-free online learning in banach
  spaces.
\newblock In \emph{Proceedings of the 31st Conference On Learning Theory},
  Proceedings of Machine Learning Research. PMLR, 2018.

\bibitem[Defazio and Jelassi(2022)]{defazio2022adaptivity}
Aaron Defazio and Samy Jelassi.
\newblock Adaptivity without compromise: A momentumized, adaptive, dual
  averaged gradient method for stochastic optimization.
\newblock \emph{Journal of Machine Learning Research}, 23:\penalty0 1--34,
  2022.

\bibitem[Defazio and Mishchenko(2023)]{defazaio2023learning}
Aaron Defazio and Konstantin Mishchenko.
\newblock Learning-rate-free learning by {D}-adaptation.
\newblock In Andreas Krause, Emma Brunskill, Kyunghyun Cho, Barbara Engelhardt,
  Sivan Sabato, and Jonathan Scarlett, editors, \emph{Proceedings of the 40th
  International Conference on Machine Learning}, volume 202 of
  \emph{Proceedings of Machine Learning Research}, pages 7449--7479. PMLR,
  23--29 Jul 2023.

\bibitem[Dosovitskiy et~al.(2021)Dosovitskiy, Beyer, Kolesnikov, Weissenborn,
  Zhai, Unterthiner, Dehghani, Minderer, Heigold, Gelly, Uszkoreit, and
  Houlsby]{dosovitskiy2020image}
Alexey Dosovitskiy, Lucas Beyer, Alexander Kolesnikov, Dirk Weissenborn,
  Xiaohua Zhai, Thomas Unterthiner, Mostafa Dehghani, Matthias Minderer, Georg
  Heigold, Sylvain Gelly, Jakob Uszkoreit, and Neil Houlsby.
\newblock An image is worth 16x16 words: Transformers for image recognition at
  scale.
\newblock In \emph{International Conference on Learning Representations}, 2021.

\bibitem[Duchi et~al.(2011)Duchi, Hazan, and Singer]{adagrad}
John Duchi, Elad Hazan, and Yoram Singer.
\newblock Adaptive subgradient methods for online learning and stochastic
  optimization.
\newblock \emph{Journal of Machine Learning Research}, 12\penalty0 (61), 2011.

\bibitem[Goodfellow et~al.(2020)Goodfellow, Pouget-Abadie, Mirza, Xu,
  Warde-Farley, Ozair, Courville, and Bengio]{goodfellow2020generative}
Ian Goodfellow, Jean Pouget-Abadie, Mehdi Mirza, Bing Xu, David Warde-Farley,
  Sherjil Ozair, Aaron Courville, and Yoshua Bengio.
\newblock Generative adversarial networks.
\newblock \emph{Communications of the ACM}, 63\penalty0 (11):\penalty0
  139--144, 2020.

\bibitem[Gower et~al.(2021)Gower, Defazio, and Rabbat]{gower2021stochastic}
Robert~M. Gower, Aaron Defazio, and Michael Rabbat.
\newblock Stochastic {Polyak} stepsize with a moving target.
\newblock \emph{arXiv preprint arXiv:2106.11851}, 2021.

\bibitem[Hazan and Kakade(2019)]{revisiting-polyak}
Elad Hazan and Sham~M. Kakade.
\newblock Revisiting the {Polyak} step size.
\newblock \emph{arXiv preprint arXiv:1905.00313}, 2019.

\bibitem[He et~al.(2016)He, Zhang, Ren, and Sun]{he2016deep}
Kaiming He, Xiangyu Zhang, Shaoqing Ren, and Jian Sun.
\newblock Deep residual learning for image recognition.
\newblock In \emph{Proceedings of the IEEE conference on computer vision and
  pattern recognition}, 2016.

\bibitem[Ivgi et~al.(2023)Ivgi, Hinder, and Carmon]{dog}
Maor Ivgi, Oliver Hinder, and Yair Carmon.
\newblock {D}o{G} is {SGD}’s best friend: A parameter-free dynamic step size
  schedule.
\newblock In Andreas Krause, Emma Brunskill, Kyunghyun Cho, Barbara Engelhardt,
  Sivan Sabato, and Jonathan Scarlett, editors, \emph{Proceedings of the 40th
  International Conference on Machine Learning}, volume 202 of
  \emph{Proceedings of Machine Learning Research}, pages 14465--14499. PMLR,
  23--29 Jul 2023.

\bibitem[Kairouz et~al.(2021)Kairouz, McMahan, Avent, Bellet, Bennis, Bhagoji,
  Bonawitz, Charles, Cormode, Cummings, et~al.]{kairouz2019advances}
Peter Kairouz, H.~Brendan McMahan, Brendan Avent, Aur{\'e}lien Bellet, Mehdi
  Bennis, Arjun~Nitin Bhagoji, Keith Bonawitz, Zachary Charles, Graham Cormode,
  Rachel Cummings, et~al.
\newblock Advances and open problems in federated learning.
\newblock \emph{Foundations and Trends{\textregistered} in Machine Learning},
  14\penalty0 (1), 2021.

\bibitem[Kavis et~al.(2019)Kavis, Levy, Bach, and Cevher]{kavis2019unixgrad}
Ali Kavis, Kfir~Y. Levy, Francis Bach, and Volkan Cevher.
\newblock {UniXGrad}: A universal, adaptive algorithm with optimal guarantees
  for constrained optimization.
\newblock \emph{Advances in neural information processing systems}, 32, 2019.

\bibitem[Khodak et~al.(2021)Khodak, Tu, Li, Li, Balcan, Smith, and
  Talwalkar]{khodak2021federated}
Mikhail Khodak, Renbo Tu, Tian Li, Liam Li, Maria-Florina~F. Balcan, Virginia
  Smith, and Ameet Talwalkar.
\newblock Federated hyperparameter tuning: Challenges, baselines, and
  connections to weight-sharing.
\newblock \emph{Advances in Neural Information Processing Systems},
  34:\penalty0 19184--19197, 2021.

\bibitem[Kingma and Ba(2015)]{kingma2015adam}
Diederik~P. Kingma and Jimmy Ba.
\newblock Adam: A method for stochastic optimization.
\newblock In \emph{ICLR}, 2015.
\newblock URL \url{http://arxiv.org/abs/1412.6980}.

\bibitem[Krizhevsky(2009)]{cifar}
Alex Krizhevsky.
\newblock Learning multiple layers of features from tiny images.
\newblock Technical report, University of Toronto, 2009.

\bibitem[Latafat et~al.(2023)Latafat, Themelis, Stella, and
  Patrinos]{latafat2023adaptive}
Puya Latafat, Andreas Themelis, Lorenzo Stella, and Panagiotis Patrinos.
\newblock Adaptive proximal algorithms for convex optimization under local
  {Lipschitz} continuity of the gradient.
\newblock \emph{arXiv preprint arXiv:2301.04431}, 2023.

\bibitem[Levy et~al.(2018)Levy, Yurtsever, and Cevher]{levy2018online}
Kfir~Y. Levy, Alp Yurtsever, and Volkan Cevher.
\newblock Online adaptive methods, universality and acceleration.
\newblock \emph{Advances in neural information processing systems}, 31, 2018.

\bibitem[Liu et~al.(2019)Liu, Ott, Goyal, Du, Joshi, Chen, Levy, Lewis,
  Zettlemoyer, and Stoyanov]{liu2019roberta}
Yinhan Liu, Myle Ott, Naman Goyal, Jingfei Du, Mandar Joshi, Danqi Chen, Omer
  Levy, Mike Lewis, Luke Zettlemoyer, and Veselin Stoyanov.
\newblock {RoBERTa}: A robustly optimized {BERT} pretraining approach.
\newblock \emph{arXiv preprint arXiv:1907.11692}, 2019.

\bibitem[Loizou et~al.(2021)Loizou, Vaswani, Laradji, and
  Lacoste-Julien]{loizou2021stochastic}
Nicolas Loizou, Sharan Vaswani, Issam Laradji, and Simon Lacoste-Julien.
\newblock Stochastic {Polyak} step-size for {SGD}: An adaptive learning rate
  for fast convergence.
\newblock In \emph{Proceedings of the 24th International Conference on
  Artificial Intelligence and Statistics (AISTATS)}, Proceedings of Machine
  Learning Research. PMLR, 2021.

\bibitem[Malitsky and Mishchenko(2020)]{malitsky20adaptive}
Yura Malitsky and Konstantin Mishchenko.
\newblock Adaptive gradient descent without descent.
\newblock In Hal~Daumé III and Aarti Singh, editors, \emph{Proceedings of the
  37th International Conference on Machine Learning}, volume 119 of
  \emph{Proceedings of Machine Learning Research}, pages 6702--6712. PMLR,
  13--18 Jul 2020.

\bibitem[McMahan and Orabona(2014)]{pmlr-v35-mcmahan14}
H.~Brendan McMahan and Francesco Orabona.
\newblock Unconstrained online linear learning in {Hilbert} spaces: Minimax
  algorithms and normal approximations.
\newblock In \emph{Proceedings of The 27th Conference on Learning Theory},
  volume~35 of \emph{Proceedings of Machine Learning Research}. PMLR, 2014.

\bibitem[Naumov et~al.(2019)Naumov, Mudigere, Shi, Huang, Sundaraman, Park,
  Wang, Gupta, Wu, Azzolini, Dzhulgakov, Mallevich, Cherniavskii, Lu,
  Krishnamoorthi, Yu, Kondratenko, Pereira, Chen, Chen, Rao, Jia, Xiong, and
  Smelyanskiy]{DLRM19}
Maxim Naumov, Dheevatsa Mudigere, Hao{-}Jun~Michael Shi, Jianyu Huang,
  Narayanan Sundaraman, Jongsoo Park, Xiaodong Wang, Udit Gupta, Carole{-}Jean
  Wu, Alisson~G. Azzolini, Dmytro Dzhulgakov, Andrey Mallevich, Ilia
  Cherniavskii, Yinghai Lu, Raghuraman Krishnamoorthi, Ansha Yu, Volodymyr
  Kondratenko, Stephanie Pereira, Xianjie Chen, Wenlin Chen, Vijay Rao, Bill
  Jia, Liang Xiong, and Misha Smelyanskiy.
\newblock Deep learning recommendation model for personalization and
  recommendation systems.
\newblock \emph{CoRR}, 2019.

\bibitem[Orabona and P{\'a}l(2021)]{varcoh}
Francesco Orabona and D{\'a}vid P{\'a}l.
\newblock Parameter-free stochastic optimization of variationally coherent
  functions, 2021.

\bibitem[Orabona and Tommasi(2017)]{coin-betting}
Francesco Orabona and Tatiana Tommasi.
\newblock Training deep networks without learning rates through coin betting.
\newblock In \emph{Advances in Neural Information Processing Systems},
  volume~30, 2017.

\bibitem[Orvieto et~al.(2022)Orvieto, Lacoste-Julien, and
  Loizou]{orvieto2022dynamics}
Antonio Orvieto, Simon Lacoste-Julien, and Nicolas Loizou.
\newblock Dynamics of {SGD} with stochastic {Polyak} stepsizes: Truly adaptive
  variants and convergence to exact solution.
\newblock In \emph{Advances in Neural Information Processing Systems
  (NeurIPS)}. NeurIPS, 2022.

\bibitem[Polyak(1987)]{polyakbook}
Boris~T. Polyak.
\newblock \emph{Introduction to optimization}.
\newblock Optimization Software, Inc., 1987.

\bibitem[Radford et~al.(2019)Radford, Narasimhan, Salimans, and Sutskever]{gpt}
Alec Radford, Karthik Narasimhan, Tim Salimans, and Ilya Sutskever.
\newblock Improving language understanding by generative pre-training.
\newblock Technical report, OpenAI, 2019.

\bibitem[Simonyan and Zisserman(2014)]{simonyan2014very}
Karen Simonyan and Andrew Zisserman.
\newblock Very deep convolutional networks for large-scale image recognition.
\newblock \emph{arXiv preprint arXiv:1409.1556}, 2014.

\bibitem[Streeter and McMahan(2010)]{lessregret}
Matthew Streeter and H.~Brendan McMahan.
\newblock Less regret via online conditioning.
\newblock \emph{arXiv preprint arXiv:1002.4862}, 2010.

\bibitem[Ward et~al.(2019)Ward, Wu, and Bottou]{ward2019adagrad}
Rachel Ward, Xiaoxia Wu, and Leon Bottou.
\newblock Adagrad stepsizes: sharp convergence over nonconvex landscapes.
\newblock In \emph{International Conference on Machine Learning}, 2019.

\bibitem[Weston and Watkins(1999)]{multimargin}
Jason Weston and Christopher Watkins.
\newblock Support vector machines for multi-class pattern recognition.
\newblock pages 219--224, 01 1999.

\bibitem[Zbontar et~al.(2018)Zbontar, Knoll, Sriram, Muckley, Bruno, Defazio,
  Parente, Geras, Katsnelson, Chandarana, et~al.]{zbontar2018fastmri}
Jure Zbontar, Florian Knoll, Anuroop Sriram, Matthew~J. Muckley, Mary Bruno,
  Aaron Defazio, Marc Parente, Krzysztof~J. Geras, Joe Katsnelson, Hersh
  Chandarana, et~al.
\newblock {fastMRI}: An open dataset and benchmarks for accelerated {MRI}.
\newblock \emph{arXiv preprint arXiv:1811.08839}, 2018.

\bibitem[Zhang et~al.(2022)Zhang, Cutkosky, and Paschalidis]{pdecoin}
Zhiyu Zhang, Ashok Cutkosky, and Ioannis~Ch. Paschalidis.
\newblock {PDE}-based optimal strategy for unconstrained online learning.
\newblock In \emph{Proceedings of the 39th International Conference on Machine
  Learning (ICML 2022)}, 2022.

\bibitem[Zoph and Le(2017)]{zoph2017neural}
Barret Zoph and Quoc Le.
\newblock Neural architecture search with reinforcement learning.
\newblock In \emph{International Conference on Learning Representations}, 2017.
\newblock URL \url{https://openreview.net/forum?id=r1Ue8Hcxg}.

\end{thebibliography}
\bibliographystyle{plainnat}
\clearpage

\appendix
\section{Analysis of Prodigy}
As a reminder, we use the notation $\log_{2+}(a) = 1 + \log_2(a)$ to denote the logarithm that is lower bounded by $1$ for any $a\ge 1$.
\subsection{Useful propositions}
\begin{prop}[Lemma A.2 in \cite{levy2018online}]\label{pr:da_sequence_bound} For any sequence of nonnegative real numbers $a_0,\dotsc, a_n$
\begin{equation}
\sqrt{\sum_{k=0}^{n}a_i}
\le \sum_{k=0}^{n}\frac{a_k}{\sqrt{\sum_{i=0}^{k}a_i}}\leq 2\sqrt{\sum_{k=0}^{n}a_i}. \label{eq:streeter_mchmahan_modified}
\end{equation}
\end{prop}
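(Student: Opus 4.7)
The plan is to denote the partial sums $S_k = \sum_{i=0}^{k} a_i$ with the convention $S_{-1} := 0$, and to prove the two inequalities in \eqref{eq:streeter_mchmahan_modified} separately. Each follows from one elementary identity for $a_k$ in terms of $\sqrt{S_k}$ and $\sqrt{S_{k-1}}$, followed by either a telescoping sum or a monotonicity comparison. I would also adopt the standard convention $0/0 := 0$ so that terms are well-defined even when an initial prefix of $a_k$ vanishes; once some $S_k > 0$, the remaining partial sums are strictly positive and the manipulations are valid.

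For the right-hand (upper) inequality, the key algebraic fact is
\[
a_k = S_k - S_{k-1} = \bigl(\sqrt{S_k}-\sqrt{S_{k-1}}\bigr)\bigl(\sqrt{S_k}+\sqrt{S_{k-1}}\bigr) \le 2\sqrt{S_k}\bigl(\sqrt{S_k}-\sqrt{S_{k-1}}\bigr),
\]
so dividing by $\sqrt{S_k}$ gives $a_k/\sqrt{S_k} \le 2\bigl(\sqrt{S_k}-\sqrt{S_{k-1}}\bigr)$. Summing over $k=0,\ldots,n$ telescopes to $2\sqrt{S_n} - 2\sqrt{S_{-1}} = 2\sqrt{S_n}$, which is the desired bound.

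For the left-hand (lower) inequality, the shortest route is the monotonicity observation $S_k \le S_n$ for every $k\le n$, which yields $a_k/\sqrt{S_k} \ge a_k/\sqrt{S_n}$; summing gives $\sum_{k=0}^n a_k/\sqrt{S_k} \ge S_n/\sqrt{S_n} = \sqrt{S_n}$. As an alternative derivation that mirrors the upper-bound argument, the identity $\sqrt{S_k}-\sqrt{S_{k-1}} = a_k/(\sqrt{S_k}+\sqrt{S_{k-1}}) \le a_k/\sqrt{S_k}$ telescopes to the same conclusion.

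There is no real obstacle here: the only subtlety is the degenerate case in which $S_k = 0$ for small $k$, which is dispatched by the convention noted above or by starting the index at the first $k$ with $S_k > 0$. The substantive content is entirely captured by the quadratic identity for $a_k$ in terms of $\sqrt{S_k}$ and $\sqrt{S_{k-1}}$, which produces both the factor of $2$ in the upper bound and (in its reciprocal form) the lower bound.
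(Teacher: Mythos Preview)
Your proof is correct and lands on the same telescoping identity as the paper: both arguments bound $a_k/\sqrt{S_k}$ above and below by multiples of $\sqrt{S_k}-\sqrt{S_{k-1}}$ and then sum. The paper reaches this via the scalar inequality $1-\sqrt{1-\alpha}\le\alpha\le 2(1-\sqrt{1-\alpha})$ applied with $\alpha=a_k/S_k$, whereas you use the difference-of-squares factorization $a_k=(\sqrt{S_k}-\sqrt{S_{k-1}})(\sqrt{S_k}+\sqrt{S_{k-1}})$ directly; these are equivalent manipulations, and your version is arguably a touch more transparent. Your primary lower-bound argument via the monotonicity $S_k\le S_n$ is genuinely simpler than the paper's telescoping route (which you also give as an alternative), since it avoids the square-root identity entirely and gets $\sum_k a_k/\sqrt{S_k}\ge \sum_k a_k/\sqrt{S_n}=\sqrt{S_n}$ in one line.
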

\begin{proof}
    For completeness, we prove both statements here.
    Notice that for any $\alpha\in[0, 1]$, it holds $1 - \sqrt{1-\alpha} \le \alpha \le 2(1 - \sqrt{1-\alpha})$. Substituting $\alpha=\frac{a_k}{\sum_{i=0}^k a_i}$ gives
    \[
        1 - \sqrt{1-\frac{a_k}{\sum_{i=0}^k a_i}} \le \frac{a_k}{\sum_{i=0}^k a_i} \le 2\left(1 - \sqrt{1-\frac{a_k}{\sum_{i=0}^k a_i}} \right).
    \]
    If we multiply all sides by $\sqrt{\sum_{i=0}^k a_i}$, the inequality above becomes
    \[
        \sqrt{\sum_{i=0}^k a_i} - \sqrt{{\sum_{i=0}^{k-1} a_i}} \le \frac{a_k}{\sqrt{\sum_{i=0}^k a_i}} \le 2\left(\sqrt{\sum_{i=0}^k a_i} - \sqrt{{\sum_{i=0}^{k-1} a_i}}\right).
    \]
    Summing over $k = 0,\dotsc, n$, we get the stated bound.
\end{proof}

\begin{prop}\label{pr:err-corr} 
    For any sequence of nonnegative numbers $a_0, \dotsc, a_n$ and $A>0$, it holds
\begin{equation}\label{eq:gd_sequence}
    \sum_{k=0}^{n}\frac{a_k}{A + \sum_{i=0}^{k}a_i}
    \le \log\biggl(A+\sum_{k=0}^{n}a_k\biggr) - \log(A).
\end{equation}
\end{prop}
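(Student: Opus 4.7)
The plan is to recognize this as a telescoping inequality obtained by comparing each summand to an increment of $\log$ evaluated at partial sums. Define $S_{-1} = A$ and $S_k = A + \sum_{i=0}^k a_i$ for $k \ge 0$, so that $S_k - S_{k-1} = a_k$ and the left-hand side becomes $\sum_{k=0}^n (S_k - S_{k-1})/S_k$, while the right-hand side is exactly $\log(S_n) - \log(S_{-1})$.

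Next, I would establish the per-term inequality $a_k/S_k \le \log(S_k) - \log(S_{k-1})$. The cleanest way is to use the integral representation $\log(S_k) - \log(S_{k-1}) = \int_{S_{k-1}}^{S_k} (1/t)\, dt$ and then lower bound the integrand by $1/S_k$ on the interval $[S_{k-1}, S_k]$, which is valid because $S_k \ge t$ there. Equivalently, this is the standard concavity inequality for $\log$, namely that for $0 < x \le y$ one has $\log(y) - \log(x) \ge (y-x)/y$. Nonnegativity of the $a_k$ is used precisely to ensure $S_{k-1} \le S_k$, and $A > 0$ ensures the sequence stays strictly positive so the logarithm is defined.

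Finally, summing the per-term bound from $k=0$ to $n$ telescopes the right-hand side to $\log(S_n) - \log(S_{-1}) = \log(A + \sum_{k=0}^n a_k) - \log(A)$, giving the claim. There is no real obstacle here; the only thing to watch is the direction of the concavity inequality (one must bound $1/t$ from below by $1/S_k$, not $1/S_{k-1}$, so that the comparison goes the correct way), and the handling of the boundary term $S_{-1} = A$ so that the telescoping sum lines up cleanly.
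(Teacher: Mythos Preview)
Your proposal is correct and follows essentially the same approach as the paper: both define the partial sums $S_k$, establish the per-term inequality $a_k/S_k \le \log S_k - \log S_{k-1}$, and telescope. The only cosmetic difference is that the paper derives the per-term bound from the scalar inequality $1/(1+t)\le\log(1+1/t)$ with $t=S_{k-1}/a_k$ (after discarding zero $a_k$), whereas you obtain it from the equivalent integral/concavity form $\int_{S_{k-1}}^{S_k}(1/t)\,dt \ge (S_k-S_{k-1})/S_k$, which also handles $a_k=0$ without a separate case.
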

\begin{proof}
    If $a_i=0$ for some $i$, we can simply ignore the corresponding summands, so let us assume that $a_i>0$ for all $i$. For any $t>0$ it holds $1/(1+t) \le \log(1+1/t)$. Substituting $t=S_{k}/a_k$, where $S_{k} = A + \sum_{i=0}^{k-1} a_i$ for $k>0$ and $S_0=A$, we get
    \[
        \frac{1}{1+\frac{S_{k}}{a_k}}
        = \frac{a_{k}}{a_k + S_{k}}
        = \frac{a_{k}}{A + \sum_{i=0}^k a_i}
        \le \log (1 + a_k/S_{k})
        = \log(S_{k+1}) - \log(S_{k}).
    \]
    Summing this over $k$ from $0$ to $n$, we get
    \begin{align*}
        \sum_{k=0}^{n}\frac{a_k}{A + \sum_{i=0}^{k}a_i}
        &\le \sum_{k=0}^n \left(\log(S_{k+1}) - \log(S_{k})\right)
        = \log(S_{n+1}) - \log(S_{0}) \\
        &= \log\biggl(A+\sum_{k=0}^{n}a_k\biggr) - \log(A).
    \end{align*}
    This is exactly what we wanted to prove.
\end{proof}
\subsection{Proof of Lemma~\ref{lem:d_sequence}}
\begin{proof} Following the proof in \cite{dog},
we define $K=\bigl\lceil\log_{2}\bigl(\frac{d_{N}}{d_{0}}\bigr)\bigr\rceil$
and $n=\bigl\lfloor\frac{N}{K}\bigr\rfloor$. Consider a partitioning
of the sequence $t\leq N$ into half-open intervals $I_{k}=\left[nk,n(k+1)\right)$
for $k=0$ to $K-1$. We want to show that there is at least one interval
such that $d_k$ changes by at most a factor of 2 on that interval. We
will use proof by contradiction.

Suppose that for all intervals, $d_{nk}<\frac{1}{2}d_{n(k+1)}$. Then
$d_k$ at least doubles in every interval, and so:
\[
d_{0}<\frac{1}{2}d_{n}<\frac{1}{4}d_{2n}\dots<\frac{1}{2^{K}}d_{nK}<\frac{1}{2^{K}}d_{N},
\]
which implies that $d_{N}/d_{0}>2^{K}$ and so $K<\log_{2}\left(d_{N}/d_{0}\right)$
which contradicts our definition $K=\bigl\lceil\log_{2}\bigl(\frac{d_{N}}{d_{0}}\bigr)\bigr\rceil$.
Therefore, there exists some $\hat{k}$ such that $d_{n\hat{k}}\geq\frac{1}{2}d_{n(\hat{k}+1)}$. We can now proceed with proving
the Lemma by considering the summation over interval $I_{\hat{k}}$
only:
\begin{align*}
\min_{t<N}\frac{d_{t+1}}{\sqrt{\sum_{k=0}^{t}d_{k}^{2}}} & \le\frac{d_{n(\hat{k}+1)}}{\sqrt{\sum_{k=0}^{n(\hat{k}+1)-1}d_{k}^{2}}}\le\frac{d_{n(\hat{k}+1)}}{\sqrt{\sum_{k=n\hat{k}}^{n(\hat{k}+1)-1}d_{k}^{2}}}\leq\frac{d_{n(\hat{k}+1)}}{\sqrt{\sum_{k=n\hat{k}}^{n(\hat{k}+1)-1}d_{n\hat{k}}^{2}}}\\
 & =\frac{d_{n(\hat{k}+1)}}{\sqrt{nd_{n\hat{k}}^{2}}}\leq\frac{d_{n(\hat{k}+1)}}{\sqrt{\frac{1}{4}nd_{n\left(\hat{k}+1\right)}^{2}}}=\frac{2}{\sqrt{n}}=\frac{2}{\sqrt{\bigl\lfloor\frac{N}{K}\bigr\rfloor}}\\
 & \le\frac{2}{\sqrt{\frac{N}{K}-1}} \le \frac{2}{\sqrt{\frac{N}{\log_{2}(d_{N}/d_{0})+1}-1}}=\frac{2\sqrt{\log_{2+}\bigl(\frac{d_{N}}{d_{0}}\bigr)}}{\sqrt{N-\log_{2+}\bigl(\frac{d_{N}}{d_{0}}\bigr)}}\\
 & \hspace{-8.5mm}\overset{N\ge2\log_{2+}(\frac{d_{N}}{d_{0}})}{\le}\frac{4\sqrt{\log_{2+}\bigl(\frac{d_{N}}{d_{0}}\bigr)}}{\sqrt{N}}.
\end{align*}
\end{proof}
\subsection{GD Analysis}
\begin{lem}\label{lem:d_lower_bound}
    Assume that $d_0\le D$. Then, the estimate $d_k$ in Algorithm~\ref{alg:dadagradv2gd} satisfies $d_k\le D$ for all $k$.
\end{lem}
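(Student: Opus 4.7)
The plan is to prove the claim by induction on $k$. The base case is immediate from the hypothesis $d_0 \le D$. For the inductive step, assuming $d_k \le D$, note that $d_{k+1} = \max(d_k, \hat d_{k+1})$, so it suffices to show $\hat d_{k+1} \le D$.

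The key identity is that the Gradient Descent update $x_{k+1} = x_k - \eta_k g_k$ unrolls to
\[
x_0 - x_{k+1} = \sum_{i=0}^{k} \eta_i g_i.
\]
Taking inner product with $x_0 - x_*$, where $x_*$ is any minimizer with $\|x_0 - x_*\| \le D$, gives
\[
\sum_{i=0}^k \eta_i \langle g_i, x_0 - x_*\rangle = \langle x_0 - x_{k+1}, x_0 - x_*\rangle \le D \|x_{k+1} - x_0\|
\]
by Cauchy--Schwarz. Next, I would use convexity of $f$: for each $i$, $\langle g_i, x_i - x_*\rangle \ge f(x_i) - f_* \ge 0$, which rearranges to $\langle g_i, x_0 - x_i\rangle \le \langle g_i, x_0 - x_*\rangle$. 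Since the weights $\eta_i \ge 0$ (as $d_i, \lambda_i > 0$ and the denominator of $\eta_i$ is positive), summing these inequalities yields
\[
\sum_{i=0}^k \eta_i \langle g_i, x_0 - x_i\rangle \le \sum_{i=0}^k \eta_i \langle g_i, x_0 - x_*\rangle.
\]
Chaining the two bounds and dividing by $\|x_{k+1} - x_0\|$ gives $\hat d_{k+1} \le D$, completing the induction.

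There is no real obstacle here; the proof is essentially a two-line combination of convexity (to turn the unknown iterate $x_i$ in the numerator into the reference point $x_*$) and Cauchy--Schwarz (to match the $\|x_{k+1}-x_0\|$ in the denominator). One minor item to handle cleanly is the edge case $\|x_{k+1} - x_0\| = 0$, which can be dispatched by observing that in that case the numerator is also zero by the identity above, so $\hat d_{k+1}$ can be taken as $0$ (or by convention $d_{k+1}=d_k$), preserving the bound.
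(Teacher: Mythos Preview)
Your proof is correct and follows essentially the same route as the paper: convexity gives $\sum_i \eta_i\langle g_i, x_0-x_i\rangle \le \sum_i \eta_i\langle g_i, x_0-x_*\rangle$, the unrolled update identifies the right-hand side with $\langle x_0-x_{k+1}, x_0-x_*\rangle$, and Cauchy--Schwarz finishes. One small imprecision in your edge-case remark: when $\|x_{k+1}-x_0\|=0$ the numerator need not be zero (it equals $\tfrac12\|x_{k+1}-x_0\|^2-\tfrac12\sum_i\|x_i-x_{i+1}\|^2$ and is merely nonpositive), but your fallback ``take $d_{k+1}=d_k$'' already handles this, so the argument stands.
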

\begin{proof}
    By optimality of $f_*$, we have $f(x_k) - f_*\ge 0$, so
    \begin{align*}
        0
        \le \sum_{k=0}^n \eta_k (f(x_k) - f_*) 
        \le \sum_{k=0}^n \eta_k \langle g_k, x_k - x_*\rangle
        = \sum_{k=0}^n \eta_k \langle g_k, x_0 - x_*\rangle + \sum_{k=0}^n \eta_k \langle g_k, x_k - x_0\rangle.
    \end{align*}
    Collecting the gradients in the first sum together and using Cauchy-Schwarz inequality, we obtain
    \begin{align}
        0
        &\le \sum_{k=0}^n \eta_k (f(x_k) - f_*) 
        \le \langle x_0 - x_{n+1}, x_0 - x_*\rangle+ \sum_{k=0}^n \eta_k \langle g_k, x_k - x_0\rangle \notag\\
        &\le \| x_0 - x_{n+1}\|\|x_0 - x_*\| + \sum_{k=0}^n \eta_k \langle g_k, x_k - x_0\rangle . \label{eq:upper_bound_func_vals_gd}
    \end{align}
    Using the definition of $\hat d_{n+1}$, this is equivalent to $0\le (D - \hat d_{n+1})\|x_0 - x_{n+1}\|$, which implies $\hat d_{n+1}\le D$. Therefore, since $d_0\le D$, we can show by induction $ d_{n+1}\le D$ as well.
\end{proof}
\begin{lem}\label{lem:gd_s_to_d}
    The following inequality holds for the iterates of Algorithm~\ref{alg:dadagradv2gd}:
    \begin{align*}
        \|x_{n+1} - x_{0}\|
        \le 2d_{n+1} + \frac{1}{2d_{n+1}}\sum_{k=0}^{n}\eta_{k}^{2}\|g_k\|^2.
    \end{align*}
\end{lem}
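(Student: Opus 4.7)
The plan is to expand $\|x_{n+1}-x_0\|^2$ via the gradient-descent recursion, identify the resulting inner-product sum with the definition of $\hat d_{n+1}$, and then solve the resulting quadratic inequality in $\|x_{n+1}-x_0\|$.

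First I would write $x_{k+1}-x_0 = (x_k-x_0) - \eta_k g_k$ and take squared norms to obtain
\[
\|x_{k+1}-x_0\|^2 = \|x_k-x_0\|^2 - 2\eta_k\langle g_k, x_k-x_0\rangle + \eta_k^2\|g_k\|^2.
\]
Telescoping this identity from $k=0$ to $k=n$ (with $x_0-x_0=0$) and rewriting $-\langle g_k,x_k-x_0\rangle = \langle g_k, x_0-x_k\rangle$ yields
\[
\|x_{n+1}-x_0\|^2 = 2\sum_{k=0}^n \eta_k \langle g_k, x_0-x_k\rangle + \sum_{k=0}^n \eta_k^2\|g_k\|^2.
\]

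Next I would invoke the defining identity $\sum_{k=0}^n \eta_k\langle g_k, x_0-x_k\rangle = \hat d_{n+1}\|x_{n+1}-x_0\|$ from Line~7 of Algorithm~\ref{alg:dadagradv2gd}, together with the monotone update $\hat d_{n+1}\le d_{n+1}$ from Line~8. Writing $r=\|x_{n+1}-x_0\|$ and $S=\sum_{k=0}^n\eta_k^2\|g_k\|^2$, this gives the quadratic bound $r^2 \le 2d_{n+1}\, r + S$, equivalently $(r-d_{n+1})^2 \le d_{n+1}^2 + S$, so that
\[
r \le d_{n+1} + \sqrt{d_{n+1}^2 + S}.
\]

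Finally I would apply the elementary inequality $\sqrt{a^2+b^2}\le a + b^2/(2a)$ (valid for $a>0$, as $(a+b^2/(2a))^2 = a^2+b^2 + b^4/(4a^2)\ge a^2+b^2$) with $a=d_{n+1}$ and $b^2=S$ to conclude
\[
r \le 2d_{n+1} + \frac{1}{2d_{n+1}}\sum_{k=0}^n \eta_k^2\|g_k\|^2,
\]
as claimed. There is no serious obstacle; the only thing to notice is that the telescoping produces exactly the inner-product sum that $\hat d_{n+1}$ is defined to equal (divided by $r$), so the nonlinear definition of $\hat d_{n+1}$ turns into a clean linear bound after multiplying through by $r$.
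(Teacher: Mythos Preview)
Your proof is correct and essentially coincides with the paper's. The paper obtains the same identity $\|x_{n+1}-x_0\|^2 = 2\hat d_{n+1}\|x_{n+1}-x_0\| + \sum_{k=0}^n\eta_k^2\|g_k\|^2$ via the polarization formula $\langle x_k-x_{k+1},x_0-x_k\rangle=\tfrac12(\|x_{k+1}-x_0\|^2-\|x_k-x_{k+1}\|^2-\|x_k-x_0\|^2)$ rather than your direct squared-norm expansion, and then closes with the AM--GM bound $2d_{n+1}r\le 2d_{n+1}^2+\tfrac12 r^2$ in place of your quadratic-formula step followed by $\sqrt{a^2+b^2}\le a+b^2/(2a)$; these are just rearrangements of the same inequality.
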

\begin{proof}
    Let us rewrite $\hat d_{n+1}$ in a slightly different manner:
    \begin{align*}
        \hat{d}_{n+1}\|x_{n+1} - x_0\|
        &\overset{\mathrm{def}}{=} \sum_{k=0}^n \langle x_k - x_{k+1}, x_0 - x_k \rangle  \\
        &= \sum_{k=0}^n \frac{1}{2}\left(\|x_{k+1} - x_0\|^2 - \| x_k - x_{k+1}\|^2 - \| x_k - x_0 \|^2 \right) \\
        &= \frac{1}{2}\|x_{n+1} - x_0\|^2 -\frac{1}{2} \sum_{k=0}^n \|x_k - x_{k+1}\|^2 .
    \end{align*}
    Combining this with the property $\hat{d}_{n+1}\le d_{n+1}$, we derive
    \[
    \frac{1}{2}\left\Vert x_{n+1} - x_0\right\Vert ^{2}-\frac{1}{2}\sum_{k=0}^{n}\left\Vert x_{k} - x_{k+1}\right\Vert ^{2} 
    = \hat{d}_{n+1}\left\Vert x_{n+1} - x_0\right\Vert \le d_{n+1}\left\Vert x_{n+1} - x_0\right\Vert.
    \]
    Applying inequality $2\alpha\beta\le \alpha^2 + \beta^2$ with $\alpha^2 = 2d_{n+1}^2$ and $\beta^2= \frac{1}{2}\|x_{n+1} - x_0\|^2$ and plugging-in the bound above, we establish
    \begin{align*}
        2d_{n+1} \|x_{n+1} - x_0\|
        &=2\alpha\beta 
        \le \alpha^2 + \beta^2
        = 2d_{n+1}^2 + \frac{1}{2}\|x_{n+1} - x_0\|^2 \\
        &\le 2d_{n+1}^2 + d_{n+1}\|x_{n+1} - x_0\| + \frac{1}{2}\sum_{k=0}^{n}\|x_{k} - x_{k+1}\|^2.
    \end{align*}
    Rearranging the terms, we obtain
    \begin{align*}
        d_{n+1} \|x_{n+1} - x_0\|
        &\le 2d_{n+1}^2 + \frac{1}{2}\sum_{k=0}^{n}\|x_k- x_{k+1}\|^2
        =  2d_{n+1}^2 + \frac{1}{2}\sum_{k=0}^{n}\eta_{k}^{2}\|g_k\|^2.
    \end{align*}
    It remains to divide this inequality by $d_{n+1}$ to get the desired claim.
\end{proof}

\begin{lem}
    Assuming the weights $\lambda_0, \dotsc, \lambda_n$ are positive, it holds for the iterates of Algorithm~\ref{alg:dadagradv2gd}:
    \begin{equation}\label{eq:gd_stepsize_lemma}
        \sum_{k=0}^n \frac{d_k^4\lambda_k^2\|g_k\|^2}{d_k^2G^2 + \sum_{i=0}^kd_i^2\lambda_i^2\|g_i\|^2}
        \le d_{n}^2\log\left(1 + \sum_{k=0}^{n}\lambda_k^2\right).
    \end{equation}
\end{lem}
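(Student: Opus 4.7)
The plan is to reduce the claim to a direct application of Proposition~\ref{pr:err-corr}. Setting $b_k = d_k^2 \lambda_k^2 \|g_k\|^2$ and $B_k = \sum_{i=0}^k b_i$, the left-hand side of \eqref{eq:gd_stepsize_lemma} rewrites as $\sum_{k=0}^n d_k^2 b_k / (d_k^2 G^2 + B_k)$, so the task is to push the stray factor $d_k^2$ in the numerator up to $d_n^2$ without inflating the denominator too much.

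The key step will be the elementary denominator inequality
\[
d_k^2 G^2 + B_k \ge d_k^2 G^2 + (d_k/d_n)^2 B_k = (d_k/d_n)^2 (d_n^2 G^2 + B_k),
\]
which holds because $d_k \le d_n$ forces $(d_k/d_n)^2 \le 1$. Substituting this in the denominator and cancelling the $(d_k/d_n)^2$ against $d_k^2$ in the numerator gives $d_k^2 b_k / (d_k^2 G^2 + B_k) \le d_n^2 b_k / (d_n^2 G^2 + B_k)$. Summing over $k$, pulling $d_n^2$ out, and invoking Proposition~\ref{pr:err-corr} with $A = d_n^2 G^2$ and $a_k = b_k$ yields the upper bound $d_n^2 \log(1 + B_n/(d_n^2 G^2))$. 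Finally, since $d_i \le d_n$ and $\|g_i\| \le G$ by the Lipschitz assumption, $B_n \le d_n^2 G^2 \sum_{k=0}^n \lambda_k^2$, so the logarithm is at most $\log(1 + \sum_k \lambda_k^2)$, closing the proof.

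The main obstacle is spotting the denominator inequality above. The more natural move of lower bounding $d_k^2 G^2$ by the constant $d_0^2 G^2$ would allow an immediate use of Proposition~\ref{pr:err-corr}, but then bounding $B_n \le d_n^2 G^2 \sum_k \lambda_k^2$ introduces an extra $(d_n/d_0)^2$ inside the logarithm and produces an unwanted additive $2\log(d_n/d_0)$ term. The specific scaling $(d_k/d_n)^2 (d_n^2 G^2 + B_k)$ is what allows $d_n^2$ to factor out of the sum cleanly and keeps the logarithm dependent only on $\sum_k \lambda_k^2$.
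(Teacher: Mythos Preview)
Your proof is correct and is essentially the same as the paper's: both reduce to Proposition~\ref{pr:err-corr} via the single inequality $\frac{d_k^2 b_k}{d_k^2 G^2 + B_k}\le \frac{d_n^2 b_k}{d_n^2 G^2 + B_k}$, with the paper arriving at it by first dividing through by $d_k^2$ and then using $d_i^2/d_k^2\ge d_i^2/d_n^2$ in the denominator. The only cosmetic difference is that the paper applies Proposition~\ref{pr:err-corr} with $A=G^2$ and $a_k=b_k/d_n^2$, which is the same by scale-invariance.
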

\begin{proof}
    The lemma follows straightforwardly from Proposition~\ref{pr:err-corr} by substituting $a_{k} = \frac{d_k^2}{d_{n}^2}\lambda_k^2\|g_k\|^2$ for $k$ from 0 to $n$:
    \begin{align*}
        \sum_{k=0}^n \frac{d_k^4\lambda_k^2\|g_k\|^2}{d_k^2G^2 + \sum_{i=0}^kd_i^2\lambda_i^2\|g_i\|^2}
        &= d_{n}^2\sum_{k=0}^n \frac{\frac{d_k^2}{d_{n}^2} \lambda_k^2\|g_k\|^2}{G^2 + \sum_{i=0}^k\frac{d_i^2}{d_k^2}\lambda_i^2\|g_i\|^2} \\
        &\hspace{-2.5mm}\overset{d_k\le d_{n}}{\le} d_{n}^2\sum_{k=0}^n \frac{\frac{d_k^2}{d_{n}^2} \lambda_k^2\|g_k\|^2}{G^2 + \sum_{i=0}^k\frac{d_i^2}{d_n^2}\lambda_i^2\|g_i\|^2} \\
        &\overset{\eqref{eq:gd_sequence}}{\le }d_{n}^2\left(\log\left( G^2 + \sum_{k=0}^{n}\frac{d_k^2}{d_{n}^2}\lambda_k^2\|g_k\|^2\right) - \log(G^2)\right) \\
        &\le d_{n}^2\log\left(1 + \sum_{k=0}^{n}\lambda_k^2\right),
    \end{align*}
    where in the last step we used $\frac{d_k^2}{d_{n}^2}\lambda_k^2\|g_k\|^2\le \lambda_k^2 G^2$.
\end{proof}
Let us restate Theorem~\ref{thm:gd}:
\begin{thm}[Same as Theorem~\ref{thm:gd}]
    Given any weights $1\le\lambda_0\le\dotsb \lambda_n$, the functional gap of the average iterate of Algorithm~\ref{alg:dadagradv2gd} converges as
    \[
        f(\hat x_n) - f_*
        \le \sqrt{2\lambda_{n}}DG\frac{2d_{n+1} + d_{n+1}\log(1+\sum_{k=0}^n \lambda_k^2)}{\sqrt{\sum_{k=0}^n \lambda_k d_k^2}}.
    \]
\end{thm}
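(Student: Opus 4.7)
The plan is to combine convexity with the telescope $\sum_{k=0}^n \eta_k g_k = x_0 - x_{n+1}$ and Jensen's inequality (reading $\hat x_n$ as the normalized weighted average $\sum \eta_k x_k/\sum \eta_k$). This reduces the claim to two quantitative ingredients: an upper bound on $\sum_{k=0}^n \eta_k(f(x_k)-f_*)$ and a lower bound on $\sum_{k=0}^n \eta_k$; the stated right-hand side is exactly the ratio.

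For the numerator, I mimic the opening of Lemma~\ref{lem:d_lower_bound}: starting from $f(x_k)-f_* \le \langle g_k, x_k - x_*\rangle$, weighting by $\eta_k$, splitting $\langle g_k, x_k - x_*\rangle = \langle g_k, x_0 - x_*\rangle + \langle g_k, x_k - x_0\rangle$, and telescoping the first piece gives $\sum \eta_k(f(x_k)-f_*) \le D\|x_0 - x_{n+1}\| - \hat d_{n+1}\|x_{n+1}-x_0\|$. To handle the second (potentially negative) term cleanly, I use the identity from the proof of Lemma~\ref{lem:gd_s_to_d} that $\hat d_{n+1}\|x_{n+1}-x_0\| = \tfrac12\|x_{n+1}-x_0\|^2 - \tfrac12\sum \eta_k^2\|g_k\|^2$, so $-\hat d_{n+1}\|x_{n+1}-x_0\| \le \tfrac12\sum \eta_k^2\|g_k\|^2$. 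Combining with the displacement bound of Lemma~\ref{lem:gd_s_to_d}, the step-size estimate $\sum \eta_k^2\|g_k\|^2 \le d_n^2\log(1+\sum \lambda_k^2)$ from~\eqref{eq:gd_stepsize_lemma}, and $d_{n+1}\le D$ from Lemma~\ref{lem:d_lower_bound}, the two log terms consolidate into $\sum_{k=0}^n \eta_k(f(x_k)-f_*) \le Dd_{n+1}\bigl(2 + \log(1+\sum_{k=0}^n \lambda_k^2)\bigr)$.

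The main obstacle is matching this with a lower bound on $\sum_{k=0}^n \eta_k$. Here I exploit the weight monotonicity $\lambda_i \le \lambda_n$ to replace $\lambda_i^2$ by $\lambda_n \lambda_i$ in the step-size denominator, along with $\|g_i\|\le G$, yielding $d_k^2 G^2 + \sum_{i\le k} d_i^2 \lambda_i^2\|g_i\|^2 \le G^2(d_k^2 + \lambda_n \sum_{i\le k} d_i^2\lambda_i)$. The assumption $\lambda_k \ge 1$ then gives $d_k^2 \le d_k^2 \lambda_k \le \sum_{i\le k} d_i^2 \lambda_i$, so the parenthetical is at most $2\lambda_n \sum_{i\le k} d_i^2 \lambda_i$ and $\eta_k \ge d_k^2 \lambda_k/\bigl(G\sqrt{2\lambda_n \sum_{i\le k} d_i^2 \lambda_i}\bigr)$. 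Summing and applying Proposition~\ref{pr:da_sequence_bound} with $a_k = d_k^2 \lambda_k$ gives $\sum_{k=0}^n \eta_k \ge \sqrt{\sum_{k=0}^n \lambda_k d_k^2}/(G\sqrt{2\lambda_n})$; this is where both the $\sqrt{2\lambda_n}$ prefactor and the AdaGrad-style square-root denominator in the theorem originate, and it is the only place where weight monotonicity and $\lambda_k\ge 1$ are used in an essential way. Dividing the numerator bound by this lower bound and invoking Jensen yields exactly Theorem~\ref{thm:gd}.
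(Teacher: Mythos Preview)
Your proposal is correct and follows essentially the same route as the paper's proof: the same splitting of $\langle g_k,x_k-x_*\rangle$, the same use of Lemma~\ref{lem:gd_s_to_d} for the displacement, the same log bound~\eqref{eq:gd_stepsize_lemma} on $\sum\eta_k^2\|g_k\|^2$ combined with $d_{n+1}\le D$, and the identical lower bound on $\sum\eta_k$ via $\lambda_i\le\lambda_n$, $\lambda_k\ge1$, and Proposition~\ref{pr:da_sequence_bound}. The only cosmetic difference is that the paper expands $\sum\eta_k\langle g_k,x_k-x_0\rangle$ directly via the three-point identity, whereas you recognize it as $-\hat d_{n+1}\|x_{n+1}-x_0\|$ and then invoke the identity from the proof of Lemma~\ref{lem:gd_s_to_d}; both computations are the same telescoping and land on the same intermediate bound $D\|x_{n+1}-x_0\|+\tfrac12\sum\eta_k^2\|g_k\|^2$.
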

\begin{proof}
    The first steps in the proof follow the same lines as the theory in~\cite{defazaio2023learning}, but we still provide them for completeness.
    
    Firstly, let us continue developing the bound proved in the proof of Lemma~\ref{lem:d_lower_bound}:
    \begin{align*}
        \sum_{k=0}^n \eta_k (f(x_k) - f_*) 
        &\le \|x_0 - x_{n+1}\|D +  \sum_{k=0}^n \eta_k \langle g_k, x_k - x_0\rangle \\
        &= \|x_0 - x_{n+1}\|D +  \sum_{k=0}^n \langle x_k - x_{k+1}, x_k - x_0\rangle \\
        &= \|x_0 - x_{n+1}\|D +  \frac{1}{2}\sum_{k=0}^n \left[\|x_k - x_{k+1}\|^2 +  \|x_k - x_0\|^2 - \|x_{k+1} - x_0\|^2\right] \\
        &\le \|x_0 - x_{n+1}\|D + \frac{1}{2}\sum_{k=0}^n \|x_k - x_{k+1}\|^2. 
    \end{align*}
    We upper bound the first term with the help of Lemma~\ref{lem:gd_s_to_d}:
    \begin{align*}
        \sum_{k=0}^n \eta_k (f(x_k) - f_*) 
        &\le 2Dd_{n+1} + \frac{D}{2d_{n+1}}\sum_{k=0}^{n}\eta_{k}^{2}\|g_k\|^2 + \frac{1}{2}\sum_{k=0}^n \eta_k^2\|g_k\|^2.
    \end{align*}
    Since by Lemma~\ref{lem:d_lower_bound}, $1\le \frac{D}{d_{n+1}}$, we can simplify it to
    \begin{align*}
        \sum_{k=0}^n \eta_k (f(x_k) - f_*) 
        &\le 2Dd_{n+1} + \frac{D}{d_{n+1}}\sum_{k=0}^{n}\eta_{k}^{2}\|g_k\|^2 \\
        &= 2Dd_{n+1} + \frac{D}{d_{n+1}}\sum_{k=0}^{n}\frac{d_k^4\lambda_k^2}{d_k^2G^2 + \sum_{i=0}^k d_i^2\lambda_i^2\|g_i\|^2}\|g_k\|^2 \\
        &\overset{\eqref{eq:gd_stepsize_lemma}}{\le} 2Dd_{n+1} + \frac{D}{d_{n+1}}d_{n}^2\log\Bigl(1+\sum_{k=0}^n\lambda_k^2\Bigr).
    \end{align*}
    Using the convexity of $f$, we can apply Jensen's inequality on the iterate $\hat x_n$ to get
    \begin{align}
        f(\hat x_n) - f_*
        &\le \frac{1}{\sum_{k=0}^n \eta_k}\sum_{k=0}^n\eta_k(f(x_k) - f_*)
        \le \frac{2Dd_{n+1} + \frac{D}{d_{n+1}}d_{n}^2\log(1+\sum_{k=0}^n\lambda_k^2)}{\sum_{k=0}^n \eta_k} \notag \\
        &\le D\frac{2d_{n+1} + d_{n+1}\log(1+\sum_{k=0}^n\lambda_k^2)}{\sum_{k=0}^n \eta_k}. \label{eq:average_gd_bound}
    \end{align}
    Notice that $\|g_i\|\le G$ and $\lambda_i \le \lambda_n$, so
    \[
        \eta_k 
        = \frac{d_k^2\lambda_{k}}{\sqrt{d_k^2G^2 + \sum_{i=0}^{k}d_i^2\lambda_i^2\left\Vert g_{i}\right\Vert ^{2}}}
        \ge \frac{d_k^2\lambda_{k}}{G\sqrt{d_k^2 + \sum_{i=0}^{k}d_i^2\lambda_i^2}}
        \ge \frac{d_k^2\lambda_k}{G\sqrt{2\lambda_n}\sqrt{\sum_{i=0}^kd_i^2\lambda_i}}.
    \]
    Summing over $k$ from $0$ to $n$ gives
    \[
        \sum_{k=0}^n \eta_k \ge \frac{1}{\sqrt{2\lambda_n}G}\sum_{k=0}^n \frac{d_k^2\lambda_k}{\sqrt{\sum_{i=0}^kd_i^2\lambda_i}}
        \overset{\eqref{eq:streeter_mchmahan_modified}}{\ge}
        \frac{1}{\sqrt{2\lambda_n}G}\sqrt{\sum_{k=0}^n d_k^2\lambda_k}.
    \]
    Hence,
    \[
        f(\hat x_n) - f_*
        \overset{\eqref{eq:average_gd_bound}}{\le} \sqrt{2\lambda_n}DG\frac{d_{n+1}}{\sqrt{\sum_{k=0}^n d_k^2\lambda_k}}\left(2+ \log\left(1+\sum_{k=0}^n\lambda_k^2\right)\right).
    \]
\end{proof}
\begin{corollary}
    Consider Algorithm~\ref{alg:dadagradv2gd} with $n\ge 2\log_{2}\left(\frac{2D}{d_0}\right)$ and define $t = \arg\min_{k\le n} \frac{d_k}{\sqrt{\sum_{i=0}^k d_i^2}}$. If we choose weights $\lambda_k = 1$, then it holds
    \[
        f(\hat x_t) - f_*
        \le 4\sqrt{2}DG\frac{2 + \log(n+2)}{\sqrt{n}}\sqrt{\log_{2}\left(\frac{2D}{d_0}\right)}.
    \]
\end{corollary}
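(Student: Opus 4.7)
The plan is to combine Theorem~\ref{thm:gd} with Lemma~\ref{lem:d_sequence} after specializing to the uniform weights $\lambda_k = 1$. Substituting $\lambda_k = 1$ into the bound of Theorem~\ref{thm:gd} gives, for any terminal index $m$,
\[
    f(\hat x_m) - f_* \le \sqrt{2}\,DG\,\frac{d_{m+1}(2 + \log(m+2))}{\sqrt{\sum_{k=0}^m d_k^2}},
\]
since $\sum_{k=0}^m \lambda_k^2 = m+1$ and $\sum_{k=0}^m \lambda_k d_k^2 = \sum_{k=0}^m d_k^2$. Applying this inequality at $m = t$ and using $\log(t+2) \le \log(n+2)$ (valid since $t \le n$) isolates the ratio $d_{t+1}/\sqrt{\sum_{k=0}^t d_k^2}$, which is exactly what $t$ minimizes up to the harmless indexing alignment between the ``$d_k$'' appearing in the corollary's argmin and the ``$d_{k+1}$'' that naturally arises from Theorem~\ref{thm:gd}.

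Next, I would invoke Lemma~\ref{lem:d_sequence} with $N = n$ to bound this minimum. The lemma's hypothesis $n \ge 2\log_{2+}(d_n/d_0)$ is supplied by the corollary's assumption $n \ge 2\log_2(2D/d_0)$ together with Lemma~\ref{lem:d_lower_bound}, which ensures $d_n \le D$ and hence $\log_{2+}(d_n/d_0) \le \log_{2+}(D/d_0) = \log_2(2D/d_0)$. Lemma~\ref{lem:d_sequence} then bounds the minimum by $4\sqrt{\log_2(2D/d_0)}/\sqrt{n}$.

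Multiplying the two estimates produces the stated prefactor $4\sqrt{2} = \sqrt{2}\cdot 4$ and the claimed $(2 + \log(n+2))/\sqrt{n}$ rate, together with the $\sqrt{\log_2(2D/d_0)}$ factor. The main obstacle is not analytic but purely notational: aligning the index conventions among the argmin as stated, Theorem~\ref{thm:gd}, and Lemma~\ref{lem:d_sequence}, and systematically using the identity $\log_{2+}(\cdot) = \log_2(2\,\cdot)$ to bring the final bound into its advertised form. Once those cosmetic adjustments are made, the corollary follows by a one-line substitution.
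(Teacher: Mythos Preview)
Your proposal is correct and follows essentially the same route as the paper: substitute $\lambda_k=1$ into Theorem~\ref{thm:gd}, evaluate at the minimizing index, and then invoke Lemma~\ref{lem:d_sequence} (checking its hypothesis via $d_n\le D$ from Lemma~\ref{lem:d_lower_bound} and the identity $\log_{2+}(\cdot)=\log_2(2\,\cdot)$). Your observation that the corollary's argmin should use $d_{k+1}$ rather than $d_k$ in the numerator is also on point---the paper's own proof silently works with $\min_{k\le n}\frac{d_{k+1}}{\sqrt{\sum_{i=0}^k d_i^2}}$, consistent with Lemma~\ref{lem:d_sequence} and Theorem~\ref{thm:da}.
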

\begin{proof}
    Substituting $\lambda_k$ in the bound of Theorem~\ref{thm:gd}, we get for any $n$
    \[
        f(\hat x_n) - f_*
        \overset{\eqref{eq:average_gd_bound}}{\le} \sqrt{2}DG\frac{d_{n+1}}{\sqrt{\sum_{k=0}^n d_k^2}}\log\left(n + 2\right).
    \]
    Using the definition of $t$, the result of Lemma~\ref{lem:d_sequence} and the property $d_n\le D$, we obtain
    \begin{align*}
        f(\hat x_t) - f_*
        &\le \sqrt{2}DG\min_{k\le n}\frac{d_{k+1}}{\sqrt{\sum_{i=0}^k d_i^2}}\left(2+\log\left(n + 2\right)\right) \\
        &\le 4\sqrt{2}DG\frac{2 + \log(n+2)}{\sqrt{n}}\sqrt{\log_{2}\left(\frac{2D}{d_0}\right)}.
    \end{align*}
\end{proof}
\begin{corollary}\label{cor:asymptotic}
    Choose any $p\ge 0$ ans set the weights to be $\lambda_k = (k+1)^p$. Then,
    \[
        f(\hat x_n) - f_* = \mathcal{O}\left(\frac{DG(p+1)^{3/2}\log(n+1)}{\sqrt{n+1}}\right).
    \]
\end{corollary}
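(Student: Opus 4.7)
The plan is to treat this corollary as a direct specialization of Theorem~\ref{thm:gd}. The theorem already supplies the bound
\[
f(\hat x_n) - f_* \le \sqrt{2\lambda_n}\,DG\,\frac{d_{n+1}\bigl(2 + \log(1+\sum_{k=0}^n \lambda_k^2)\bigr)}{\sqrt{\sum_{k=0}^n\lambda_k d_k^2}},
\]
so the whole task reduces to estimating each of the four factors when $\lambda_k=(k+1)^p$ with $p\ge 0$; note that the hypothesis $1\le\lambda_0\le\dotsb\le\lambda_n$ of Theorem~\ref{thm:gd} is satisfied by this choice since $\lambda_0=1$ and the sequence is non-decreasing.

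First, the prefactor evaluates exactly to $\sqrt{2\lambda_n}=\sqrt{2}\,(n+1)^{p/2}$. Second, for the log factor I would use the integral comparison $\sum_{k=0}^n(k+1)^{2p}\le 1+(n+1)^{2p+1}/(2p+1)$ to conclude that $\log(1+\sum_k\lambda_k^2)=\mathcal{O}((p+1)\log(n+1))$. Third, $d_{n+1}\le D$ is immediate from Lemma~\ref{lem:d_lower_bound} (assuming $d_0\le D$, which is innocuous since otherwise the result is trivial). Fourth, and most delicately, I would bound the denominator from below by using the monotonicity $d_k\ge d_0>0$ together with the Riemann-sum estimate $\sum_{k=0}^n(k+1)^p\ge\int_0^{n+1}x^p\,dx=(n+1)^{p+1}/(p+1)$, yielding $\sqrt{\sum_{k=0}^n\lambda_kd_k^2}\ge d_0(n+1)^{(p+1)/2}/\sqrt{p+1}$.

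Combining these four estimates, the ratio $\sqrt{2\lambda_n}\,d_{n+1}/\sqrt{\sum\lambda_k d_k^2}$ collapses to $\mathcal{O}(D\sqrt{p+1}/(d_0\sqrt{n+1}))$, and multiplying by the $\mathcal{O}((p+1)\log(n+1))$ log factor and the outer $DG$ prefactor produces
\[
f(\hat x_n)-f_* = \mathcal{O}\!\left(\frac{D^2 G\,(p+1)^{3/2}\log(n+1)}{d_0\sqrt{n+1}}\right).
\]
Since $D/d_0$ is a problem-dependent constant independent of both $n$ and $p$, absorbing it into the $\mathcal{O}(\cdot)$ notation gives exactly the claimed bound.

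The main bookkeeping point, which I expect to be the only mildly subtle step, is tracking how the $(p+1)^{3/2}$ dependence arises: one factor of $(p+1)$ comes from the log bound on $\sum(k+1)^{2p}$, while the additional $\sqrt{p+1}$ emerges from the $1/(p+1)$ in the Riemann-sum lower bound on $\sum(k+1)^p$, which lifts to $\sqrt{p+1}$ after taking the square root in the denominator. Everything else is elementary integral comparison, and no ingredient beyond Theorem~\ref{thm:gd} and Lemma~\ref{lem:d_lower_bound} is required.
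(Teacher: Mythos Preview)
Your argument is arithmetically sound but it does not prove the stated corollary. When you bound the denominator by $d_k\ge d_0$ and the numerator by $d_{n+1}\le D$, the resulting ratio carries an explicit factor $D/d_0$, so what you have actually shown is
\[
f(\hat x_n)-f_* = \mathcal{O}\!\left(\frac{D}{d_0}\cdot\frac{DG(p+1)^{3/2}\log(n+1)}{\sqrt{n+1}}\right).
\]
Absorbing $D/d_0$ into the $\mathcal{O}$ is not legitimate here: the bound already displays $D$ and $G$ explicitly, so the implicit constant is meant to be universal. Indeed, the point of this corollary (the paper remarks on it immediately afterwards) is precisely that the asymptotic rate is \emph{independent of $d_0$}, in contrast to the finite-time bounds elsewhere that carry $\log(D/d_0)$ factors.

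The paper closes this gap with a stabilization argument you are missing. Since $(d_k)$ is non-decreasing and bounded above by $D$, there exists an index $\hat n$ with $d_k\le 2d_{\hat n}$ for every $k\ge\hat n$. One then restricts the denominator sum to $k\ge\hat n$, using $d_k\ge d_{\hat n}$ there, and bounds the numerator by $d_{n+1}\le 2d_{\hat n}$. The $d_{\hat n}$'s cancel, leaving only an absolute constant in place of your $D/d_0$. The integral estimates for $\sum_{k\ge\hat n}\lambda_k$ and $\sum_k\lambda_k^2$ are then exactly the ones you wrote, just over the truncated range, and the bound is valid once $n\ge 2(\hat n+1)$ --- which is why the result is genuinely asymptotic.
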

\begin{proof}
    Since the sequence $d_0,d_1,\dotsc$ is non-decreasing and upper bounded by $D$, there exists an index $\hat n$ such that $d_{k}\le 2 d_{\hat n}$ for any $k\ge \hat n$. Moreover, we have for $n\ge 2(\hat n + 1)$
    \[
        \sum_{k=\hat n}^n \lambda_k 
        \ge \frac{1}{p+1}\left((n+1)^{p+1} - (\hat n+1)^{p+1} \right)
        \ge \frac{1}{2(p+1)}(n+1)^{p+1}
    \]
    and 
    \[
        \sum_{k=0}^n \lambda_k^2 
        = \sum_{k=1}^{n+1} k^{2p}
        \le \int_2^{n+2}x^{2p}dx
        \le \frac{1}{2p+1}(n+2)^{2p+1} - 1
        \le (n+2)^{2p+1} - 1.
    \]
    Let us plug this into the bound of Theorem~\ref{thm:gd} for $n\ge 2(\hat n + 1)$:
    \begin{align*}
        f(\hat x_n) - f_*
        &\le \sqrt{2\lambda_n}DG\frac{d_{n+1}}{\sqrt{\sum_{k=0}^n d_k^2\lambda_k}}\left(2+ \log\left(1+\sum_{k=0}^n\lambda_k^2\right)\right) \\
        &\le \frac{2d_{\hat n}\sqrt{2(n+1)^p}DG}{\sqrt{d_{\hat n}^2\sum_{k=\hat n}^n\lambda_k}}\left(2 + (2p+1)\log(n+2)\right) \\
        &\le \frac{4\sqrt{p+1}DG}{\sqrt{n+1}}\left(2 + (2p+1)\log(n+2)\right) = \mathcal{O}\left(\frac{DG(p+1)^{3/2}\log(n+1)}{\sqrt{n+1}}\right),
    \end{align*}
    which matches our claim.
\end{proof}
Notice that the bound in Corollary~\ref{cor:asymptotic} does not depend on $D/d_0$. This is only possible asymptotically for a large enough $k$ and a similar bound without weights was presented by \citet{defazaio2023learning}.
\subsection{DA Analysis}\label{sec:da}
\begin{lem}\label{lem:da_s}
Considering Algorithm~\ref{alg:dadagradv2da}, we have
\begin{equation*}
    \left\Vert s_{n+1}\right\Vert 
    \leq\frac{2d_{n+1}}{\gamma_{n+1}} + \frac{\sum_{k=0}^{n}\gamma_{k}\lambda_{k}^{2}\|g_k\|^2}{2d_{n+1}}.
\end{equation*}
\end{lem}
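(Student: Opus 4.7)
The plan is to mirror the GD-style bound of Lemma~\ref{lem:gd_s_to_d}, but with $\|s_{n+1}\|$ playing the role of $\|x_{n+1}-x_0\|$ and with an extra factor $\gamma_{n+1}$ appearing naturally because in Dual Averaging $x_k = x_0 - \gamma_k s_k$. The rough shape of the argument is: convert $\hat d_{n+1}\|s_{n+1}\|$ into a telescoping quadratic expression in $\|s_\cdot\|^2$, drop lower-order positive terms using monotonicity of $\gamma_k$, and cancel the resulting $\|s_{n+1}\|^2$ via AM-GM.

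The first step is to substitute $x_0 - x_i = \gamma_i s_i$ (which holds for every $i$ since $s_0 = 0$ handles $i=0$) into the definition $\hat d_{n+1}\|s_{n+1}\| = \sum_{i=0}^n \lambda_i \langle g_i, x_0 - x_i\rangle$, turning it into $\sum_{i=0}^n \gamma_i\lambda_i \langle g_i, s_i\rangle$. Since $\lambda_i g_i = s_{i+1} - s_i$, the polarization identity $\langle s_{i+1}-s_i, s_i\rangle = \tfrac{1}{2}(\|s_{i+1}\|^2 - \|s_i\|^2 - \lambda_i^2\|g_i\|^2)$ then gives
\[
    \hat d_{n+1}\|s_{n+1}\| + \tfrac12\sum_{i=0}^n \gamma_i\lambda_i^2\|g_i\|^2 = \tfrac12\sum_{i=0}^n \gamma_i\bigl(\|s_{i+1}\|^2 - \|s_i\|^2\bigr).
\]

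The second step is Abel summation on the right-hand side. Because $\lambda_k = d_k^2$ is non-decreasing, the denominator inside $\gamma_{k+1}$ only accumulates non-negative increments, so $\gamma_0 \ge \gamma_1 \ge \dotsb$; combined with $s_0 = 0$ this yields $\sum_{i=0}^n \gamma_i(\|s_{i+1}\|^2 - \|s_i\|^2) = \gamma_n\|s_{n+1}\|^2 + \sum_{i=1}^n(\gamma_{i-1}-\gamma_i)\|s_i\|^2 \ge \gamma_{n+1}\|s_{n+1}\|^2$. Using $\hat d_{n+1} \le d_{n+1}$ then gives the quadratic inequality $\tfrac{\gamma_{n+1}}{2}\|s_{n+1}\|^2 \le d_{n+1}\|s_{n+1}\| + \tfrac12\sum_{i=0}^n \gamma_i\lambda_i^2\|g_i\|^2$.

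The final step is the same AM-GM trick used in Lemma~\ref{lem:gd_s_to_d}: choosing $\alpha^2 = 2d_{n+1}^2/\gamma_{n+1}$ and $\beta^2 = \gamma_{n+1}\|s_{n+1}\|^2/2$, the inequality $2\alpha\beta\le\alpha^2+\beta^2$ becomes $2d_{n+1}\|s_{n+1}\| \le 2d_{n+1}^2/\gamma_{n+1} + \gamma_{n+1}\|s_{n+1}\|^2/2$. Substituting into the quadratic inequality cancels the $\gamma_{n+1}\|s_{n+1}\|^2/2$ term and leaves $d_{n+1}\|s_{n+1}\| \le 2d_{n+1}^2/\gamma_{n+1} + \tfrac12\sum_i \gamma_i\lambda_i^2\|g_i\|^2$, after which dividing by $d_{n+1}$ produces exactly the stated bound. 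The only real subtlety I expect is the Abel-summation step, i.e.\ justifying the replacement of $\gamma_n$ by $\gamma_{n+1}$ in the lower bound; this crucially uses that $\gamma_k$ is non-increasing, which in turn rests on $\lambda_k = d_k^2$ being non-decreasing, itself guaranteed by $d_k = \max(d_{k-1}, \hat d_k)$. Everything else is a routine DA analog of the GD argument.
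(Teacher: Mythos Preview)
Your proposal is correct and follows essentially the same route as the paper's proof: the paper also substitutes $x_0-x_k=\gamma_k s_k$, applies the polarization identity to $\gamma_k\langle s_{k+1}-s_k,s_k\rangle$, telescopes using $\gamma_k\ge\gamma_{k+1}$ to reach $\frac{\gamma_{n+1}}{2}\|s_{n+1}\|^2 \le d_{n+1}\|s_{n+1}\| + \frac{1}{2}\sum_k\gamma_k\lambda_k^2\|g_k\|^2$, and then uses the same AM--GM splitting with $\alpha^2=2d_{n+1}^2/\gamma_{n+1}$, $\beta^2=\gamma_{n+1}\|s_{n+1}\|^2/2$. The only cosmetic difference is that the paper names the intermediate quantity $\overline{d}_{n+1}=\bigl(\gamma_{n+1}\|s_{n+1}\|^2-\sum_k\gamma_k\lambda_k^2\|g_k\|^2\bigr)/(2\|s_{n+1}\|)$ and phrases the telescoping as $\hat d_{n+1}\ge\overline d_{n+1}$, whereas you go directly to the quadratic inequality.
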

\begin{proof}
    When studying Dual Averaging, we need to introduce an extra sequence that lower bounds $\overline d_n$:
    \[
        \overline{d}_{n+1} 
        \overset{\mathrm{def}}{=} \frac{\gamma_{n+1}\left\Vert s_{n+1} \right\Vert ^{2}-\sum_{k=0}^{n}\gamma_k\lambda_k^2\left\Vert g_{k}\right\Vert ^{2} }{2\|s_{n+1}\|}.
    \]
    Let us show that $\hat d_{n+1}\ge \overline{d}_{n+1}$ by comparing their numerators:
    \begin{align*}
        \hat d_{n+1}\|s_{n+1}\|
        &=\sum_{k=0}^n \lambda_k\langle g_k, x_0 - x_k\rangle
        = \sum_{k=0}^n \lambda_k\gamma_k\langle g_k,s_k\rangle
        = \sum_{k=0}^n \gamma_k\langle s_{k+1}-s_k,s_k\rangle \\
        &= \sum_{k=0}^n \frac{\gamma_k}{2}\left[\|s_{k+1}\|^2 - \|s_{k+1}-s_k\|^2 - \|s_k\|^2\right] \\
        &= \frac{\gamma_{n}}{2}\|s_{n+1}\|^2+ \frac{1}{2}\sum_{k=0}^n (\gamma_{k} - \gamma_{k+1})\|s_{k+1}\|^2-\frac{1}{2}\sum_{k=0}^n \gamma_k\lambda_k^2\|g_k\|^2  \\
        &\hspace{-4.3mm}\overset{\gamma_k\ge \gamma_{k+1}}{\ge} \frac{\gamma_{n+1}}{2}\|s_{n+1}\|^2 - \frac{1}{2}\sum_{k=0}^n \gamma_k\lambda_k^2\|g_k\|^2 \\
        &= \overline{d}_{n+1} \|s_{n+1}\|.
    \end{align*}
    Using the definition of $\overline{d}_{n+1}$,
    and the property $\overline{d}_{n+1}\le \hat{d}_{n+1}\le d_{n+1}$, we derive
    \[
    \frac{\gamma_{n+1}}{2}\left\Vert s_{n+1}\right\Vert ^{2}-\frac{1}{2}\sum_{k=0}^{n}\gamma_k\lambda_k^2\left\Vert g_{k} \right\Vert ^{2} 
    = \overline{d}_{n+1}\left\Vert s_{n+1}\right\Vert \le d_{n+1}\left\Vert s_{n+1}\right\Vert.
    \]
    Using inequality $2\alpha\beta\le \alpha^2 + \beta^2$ with $\alpha^2 = \frac{2d_{n+1}^2}{\gamma_{n+1}}$ and $\beta^2= \frac{\gamma_{n+1}}{2}\|s_{n+1}\|^2$ and then the bound above, we establish
    \begin{align*}
        2d_{n+1} \|s_{n+1}\|
        &= 2\alpha\beta 
        \le \alpha^2+\beta^2 
        = \frac{2d_{n+1}^2}{\gamma_{n+1}} + \frac{\gamma_{n+1}}{2}\|s_{n+1}\|^2 \\
        &\le \frac{2d_{n+1}^2}{\gamma_{n+1}} + d_{n+1}\|s_{n+1} \| + \frac{1}{2}\sum_{k=0}^{n}\gamma_k\lambda_k^2\|g_{k}\|^2.
    \end{align*}
    Rearranging the terms, we obtain
    \begin{align*}
        d_{n+1} \|s_{n+1}\|
        &\le \frac{2d_{n+1}^2}{\gamma_{n+1}} + \frac{1}{2}\sum_{k=0}^{n}\gamma_k\lambda_k^2\|g_{k}\|^2.
    \end{align*}
    It remains to divide both sides by $d_{n+1}$.
\end{proof}

\begin{lem}\label{lem:da_sum_f_bounded_by_ds}
    The Dual Averaging algorithm (Algorithm~\ref{alg:dadagradv2da}) satisfies
    \begin{equation}
        \sum_{k=0}^n\lambda_k (f(x_k) - f_*)
        \le (D - \hat d_{n+1})\|s_{n+1}\|. \label{eq:da_sum_f_bounded_by_ds}
    \end{equation}
\end{lem}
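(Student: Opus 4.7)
The plan is to derive this bound in three clean steps: apply convexity, split the inner product, and then identify the two resulting pieces with $D\|s_{n+1}\|$ and $\hat d_{n+1}\|s_{n+1}\|$ respectively.

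First, by convexity of $f$, we have $f(x_k) - f_* \le \langle g_k, x_k - x_*\rangle$ for any subgradient $g_k \in \partial f(x_k)$ and any minimizer $x_*$. Multiplying by $\lambda_k \ge 0$ and summing,
\begin{equation*}
    \sum_{k=0}^n \lambda_k(f(x_k) - f_*)
    \le \sum_{k=0}^n \lambda_k \langle g_k, x_k - x_*\rangle
    = \sum_{k=0}^n \lambda_k \langle g_k, x_0 - x_*\rangle
     - \sum_{k=0}^n \lambda_k \langle g_k, x_0 - x_k\rangle.
\end{equation*}

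For the first term, I would factor out the inner product using the telescoping identity $s_{n+1} = \sum_{k=0}^n \lambda_k g_k$ (from the update rule of Algorithm~\ref{alg:dadagradv2da}), then apply Cauchy--Schwarz together with $\|x_0 - x_*\| \le D$:
\begin{equation*}
    \sum_{k=0}^n \lambda_k \langle g_k, x_0 - x_*\rangle
    = \langle s_{n+1}, x_0 - x_*\rangle
    \le \|s_{n+1}\|\, \|x_0 - x_*\|
    \le D\,\|s_{n+1}\|.
\end{equation*}
For the second term, the definition of $\hat d_{n+1}$ given in Algorithm~\ref{alg:dadagradv2da} rearranges directly as $\sum_{k=0}^n \lambda_k \langle g_k, x_0 - x_k\rangle = \hat d_{n+1}\|s_{n+1}\|$. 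Substituting both identifications into the bound above yields
\begin{equation*}
    \sum_{k=0}^n \lambda_k(f(x_k) - f_*)
    \le D\|s_{n+1}\| - \hat d_{n+1}\|s_{n+1}\|
    = (D - \hat d_{n+1})\|s_{n+1}\|,
\end{equation*}
which is precisely the claimed inequality.

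There is no real obstacle here: the lemma is a purely algebraic restatement, essentially encoding why $\hat d_{n+1}$ is a valid lower estimate of $D$. The only subtlety worth noting is that unlike the GD proof (Lemma~\ref{lem:d_lower_bound}), we do not need to argue $\hat d_{n+1} \le D$ at this stage; that would follow as a corollary by combining this lemma with a lower bound on the left-hand side from $f(x_k) \ge f_*$.
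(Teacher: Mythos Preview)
Your proof is correct and essentially identical to the paper's: both apply convexity, split $\langle g_k, x_k - x_*\rangle$ about $x_0$, use $s_{n+1} = \sum_k \lambda_k g_k$ with Cauchy--Schwarz for the first piece, and invoke the definition of $\hat d_{n+1}$ for the second. Your closing remark that $\hat d_{n+1}\le D$ follows as a corollary is also exactly how the paper uses this lemma in the proof of Theorem~\ref{thm:da}.
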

\begin{proof}
    Summing inequality $f(x_k) - f_*\le \langle g_k, x_k - x_*\rangle $ with weights $\lambda_k$, we get
    \begin{align*}
        \sum_{k=0}^n \lambda_k (f(x_k) - f_*) 
        \le \sum_{k=0}^n \lambda_k \langle g_k, x_k - x_*\rangle
        = \sum_{k=0}^n \lambda_k \langle g_k, x_0 - x_*\rangle + \sum_{k=0}^n \lambda_k \langle g_k, x_k - x_0\rangle.
    \end{align*}
    Using Cauchy-Schwarz on the first product in the right-hand side and then telescoping the second sum, we obtain
    \begin{align*}
        \sum_{k=0}^n \lambda_k (f(x_k) - f_*) 
        &\le \| s_{n+1}\| \|x_0 - x_*\| + \sum_{k=0}^n \lambda_k\langle g_k, x_k - x_0\rangle  \\
        &= \| s_{n+1}\|D - \hat d_{n+1}\|s_{n+1}\|.
    \end{align*}
\end{proof}
Next, we restate and prove Theorem~\ref{thm:da}:
\begin{thm}[Same as Theorem~\ref{thm:da}]
    For Algorithm~\ref{alg:dadagradv2da}, it holds that:
    \[
        f(\overline x_t) - f_*
        \le \frac{4GD}{\sqrt{n}}\sqrt{\log_{2}\Bigl(\frac{2D}{d_0}\Bigr)},
    \]
    where $t=\arg\min_{k\le n} \frac{d_{k+1}}{\sqrt{\sum_{i=0}^k d_i^2}}$.
\end{thm}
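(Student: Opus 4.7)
The plan is to mimic the GD proof of Theorem~\ref{thm:gd} but invoke Proposition~\ref{pr:da_sequence_bound} in place of Proposition~\ref{pr:err-corr}; this swap replaces a $\log(n+1)$ factor with a constant and leaves only the $\sqrt{\log_{2+}(D/d_0)}$ dependence that eventually comes from Lemma~\ref{lem:d_sequence}.

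First I would establish a DA analog of Lemma~\ref{lem:d_lower_bound}. Combining $f(x_k)\ge f_*$ with Lemma~\ref{lem:da_sum_f_bounded_by_ds} gives $0\le (D-\hat d_{n+1})\|s_{n+1}\|$, so $\hat d_{n+1}\le D$ and hence $d_{n+1}\le D$ by induction on $\max(d_n,\hat d_{n+1})$. Inserting this back into Lemma~\ref{lem:da_sum_f_bounded_by_ds} yields the clean bound $\sum_{k=0}^m \lambda_k(f(x_k)-f_*)\le D\|s_{m+1}\|$ for every $m\le n$. Dividing by $\sum_{k=0}^m \lambda_k = \sum_{k=0}^m d_k^2$ and applying Jensen to the weighted average $\bar x_m$ gives $f(\bar x_m)-f_*\le D\|s_{m+1}\|/\sum_{k=0}^m d_k^2$.

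Second, I would feed in Lemma~\ref{lem:da_s}, which decomposes $\|s_{m+1}\|$ into a distance piece $2d_{m+1}/\gamma_{m+1}$ and an error piece $(2d_{m+1})^{-1}\sum_k \gamma_k\lambda_k^2\|g_k\|^2$. The error piece is the step where DA outperforms GD: using $\|g_k\|\le G$ gives $1/\gamma_k^2 = d_k^2 G^2 + \sum_{i<k}d_i^2\|g_i\|^2\ge \sum_{i\le k}d_i^2\|g_i\|^2$, after which factoring out $d_m^2$ (since $d_k\le d_m$) and applying Proposition~\ref{pr:da_sequence_bound} with $a_k = d_k^2\|g_k\|^2$ gives $\sum_k \gamma_k\lambda_k^2\|g_k\|^2 \le 2 d_m^2 G\sqrt{\sum_k d_k^2}$; dividing by $2d_{m+1}$ contributes $\le d_{m+1}G\sqrt{\sum_k d_k^2}$. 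The distance piece is handled by $\sqrt{a+b}\le \sqrt{a}+\sqrt{b}$, giving $2d_{m+1}G\sqrt{\sum_k d_k^2}$ plus a lower-order $2d_{m+1}^2 G$.

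Finally, I would specialise to $m=t$. Dividing the combined estimate by $\sum_{k\le t}d_k^2$, the bound is a linear combination of $d_{t+1}/\sqrt{\sum_{k\le t} d_k^2}$ and its square, and Lemma~\ref{lem:d_sequence} (applicable because $d_n\le D$) controls this ratio by $4\sqrt{\log_{2+}(D/d_0)}/\sqrt{n}$, yielding the claimed rate up to constants. The main obstacle is recovering the sharp constant $4$: the $2d_{t+1}^2 G$ stabilizer arising from the $\lambda_{n+1}G^2$ term inside $\gamma_{n+1}^{-2}$ becomes an $\mathcal{O}(DG\log_{2+}(D/d_0)/n)$ correction after division, which is strictly lower order than the dominant $\mathcal{O}(DG\sqrt{\log_{2+}(D/d_0)/n})$ whenever Lemma~\ref{lem:d_sequence} applies but needs careful bookkeeping (either a tighter split than $\sqrt{a+b}\le\sqrt{a}+\sqrt{b}$, or case analysis on whether $d_{t+1}^2 G^2$ or $\sum d_i^2\|g_i\|^2$ dominates inside $\gamma_{t+1}^{-2}$) to be absorbed into the stated constant. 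Everything else is routine DA manipulation.
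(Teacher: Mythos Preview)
Your proposal is essentially the paper's own argument: both routes invoke Lemma~\ref{lem:da_sum_f_bounded_by_ds} to get $d_{k}\le D$, feed Lemma~\ref{lem:da_s} into the resulting bound, control the error sum $\sum_k\gamma_k\lambda_k^2\|g_k\|^2$ via Proposition~\ref{pr:da_sequence_bound} after pulling out one factor of $\lambda_n=d_n^2$, and finish with Lemma~\ref{lem:d_sequence}. The one place you streamline is step~2: you discard $-\hat d_{m+1}\|s_{m+1}\|$ outright and bound by $D\|s_{m+1}\|$, whereas the paper expands $-\hat d_{m+1}\|s_{m+1}\|=-\sum_k\gamma_k\lambda_k\langle g_k,s_k\rangle$, telescopes, and drops the nonpositive pieces to obtain $D\|s_{n+1}\|+\tfrac{1}{2}\sum_k\gamma_k\lambda_k^2\|g_k\|^2$. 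Your shortcut is strictly tighter and perfectly valid; the paper's extra half-error term is then merged with the one coming from Lemma~\ref{lem:da_s} using $d_{n+1}\le D$, so nothing is gained by the detour.

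Your caveat about the constant is well placed rather than a gap in your plan. The paper's displayed chain passes from $\tfrac{2Dd_{n+1}}{\gamma_{n+1}}+\tfrac{2D}{d_{n+1}}\lambda_n\sqrt{\sum_k\lambda_k\|g_k\|^2}$ directly to $4GDd_{n+1}\sqrt{\sum_k d_k^2}$, silently absorbing the $\lambda_{n+1}G^2=d_{n+1}^2G^2$ stabilizer inside $\gamma_{n+1}^{-1}$; moreover, composing the resulting prefactor $4GD$ with the factor $4$ from Lemma~\ref{lem:d_sequence} would give $16$, not the stated $4$. Your observation that the stabilizer contributes only an $\mathcal{O}(DG\log_{2+}(D/d_0)/n)$ correction---dominated by the main $\mathcal{O}(DG\sqrt{\log_{2+}(D/d_0)/n})$ term once $n\ge 2\log_{2+}(D/d_0)$---is exactly the right way to handle it; so up to the numerical constant your argument is complete and matches the paper's.
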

\begin{proof}
    Let us sum inequality $\lambda_k(f(x_k) - f_*)\ge 0$ and then apply Lemma~\ref{lem:da_sum_f_bounded_by_ds}:
    \begin{align*}
        0
        &\le \sum_{k=0}^n \lambda_k (f(x_k) - f_*) \overset{\eqref{eq:da_sum_f_bounded_by_ds}}{\le} (D - \hat d_{n+1})\| s_{n+1}\|.
    \end{align*}
    Clearly, this implies that $\hat d_{n+1}\le D$, and by induction it follows that $d_{n+1}\le D$ as well. Now let us upper bound the functional values:
    \begin{align*}
        \sum_{k=0}^n \lambda_k (f(x_k) - f_*)
        &\overset{\eqref{eq:da_sum_f_bounded_by_ds}}{\le} D\|s_{n+1}\| - \sum_{k=0}^n \gamma_{k}\lambda_k\langle g_k, s_{k}\rangle \\
        & = D\| s_{n+1}\| - \sum_{k=0}^n \gamma_{k}\langle s_{k+1} - s_k, s_{k}\rangle  \\
        &= D\| s_{n+1}\| + \frac{1}{2}\sum_{k=0}^n \gamma_{k}\left(\|s_{k+1} - s_k\|^2 + \|s_{k}\|^2 - \|s_{k+1}\|^2\right) \\
        &= D\| s_{n+1}\| + \frac{1}{2}\sum_{k=0}^n \gamma_{k}\|s_{k+1} - s_k\|^2 + \frac{1}{2}\sum_{k=0}^n (\gamma_{k} - \gamma_{k-1})\|s_{k}\|^2 - \frac{\gamma_{n}}{2}\|s_{n+1}\|^2 .
    \end{align*}
    We can drop the last two terms since $\gamma_k\le \gamma_{k-1}$:
    \begin{align*}
        \sum_{k=0}^n \lambda_k (f(x_k) - f_*) 
        &\le D\| s_{n+1}\| + \frac{1}{2}\sum_{k=0}^n \gamma_{k}\|s_{k+1} - s_k\|^2 \\
        &= D\| s_{n+1}\| + \frac{1}{2}\sum_{k=0}^n \gamma_{k}\lambda_k^2\|g_k\|^2.
    \end{align*}
    The first term in the right-hand side is readily bounded by Lemma~\ref{lem:da_s}:
    \begin{align*}
        \sum_{k=0}^n \lambda_k (f(x_k) - f_*) 
        &\le D\| s_{n+1}\| + \frac{1}{2}\sum_{k=0}^n \gamma_{k}\lambda_k^2\|g_k\|^2 \\
        &\le \frac{2Dd_{n+1}}{\gamma_{n+1}} + \frac{D}{2d_{n+1}} \sum_{k=0}^{n}\gamma_{k}\lambda_{k}^{2}\|g_k\|^2 + \frac{1}{2}\sum_{k=0}^n \gamma_{k}\lambda_k^2\|g_k\|^2 \\
        &\hspace{-4mm}\overset{d_{n+1}\le D}{\le} \frac{2Dd_{n+1}}{\gamma_{n+1}} + \frac{D}{d_{n+1}} \sum_{k=0}^{n}\gamma_{k}\lambda_{k}^{2}\|g_k\|^2 \\
        &\hspace{-3mm}\overset{\lambda_k\le \lambda_n}{\le}\frac{2Dd_{n+1}}{\gamma_{n+1}} + \frac{D}{d_{n+1}} \lambda_n\sum_{k=0}^{n}\gamma_{k}\lambda_{k}\|g_k\|^2.
    \end{align*}
    Then, apply Proposition~\ref{pr:da_sequence_bound}:
    \begin{align*}
        \sum_{k=0}^n \lambda_k (f(x_k) - f_*) 
        &\le \frac{2D}{\gamma_{n+1}} + \frac{D}{d_{n+1}} \lambda_n\sum_{k=0}^{n}\gamma_{k}\lambda_{k}\|g_k\|^2 \\
        &=\frac{2D}{\gamma_{n+1}} + \frac{D}{d_{n+1}} \lambda_n\sum_{k=0}^{n}\frac{1}{\sqrt{\lambda_kG^2 + \sum_{i=0}^{k-1}\lambda_i\|g_i\|^2}}\lambda_{k}\|g_k\|^2 \\
        &\le \frac{2D}{\gamma_{n+1}} + \frac{D}{d_{n+1}} \lambda_n\sum_{k=0}^{n}\frac{1}{\sqrt{\lambda_k \|g_k\|^2 + \sum_{i=0}^{k-1}\lambda_i\|g_i\|^2}}\lambda_{k}\|g_k\|^2 \\
        &\overset{\eqref{eq:streeter_mchmahan_modified}}{\le} \frac{2D}{\gamma_{n+1}} +  \frac{2D}{d_{n+1}} \lambda_n\sqrt{\sum_{k=0}^{n}\lambda_k\|g_k\|^2}.
    \end{align*}
    Let us now plug-in $\lambda_k=d_k^2$ and bound each gradient norm using $\|g_k\|\le G$:
    \begin{align*}
        \sum_{k=0}^n \lambda_k (f(x_k) - f_*)
        \le 4D d_{n+1}\sqrt{\sum_{k=0}^n d_k^2\|g_k\|^2}
        \le 4GD d_{n+1}\sqrt{\sum_{k=0}^n d_k^2}.
    \end{align*}
    Thus, we get the following convergence rate:
    \begin{align*}
        f(\overline x_t) - f_*
        &\le \frac{4GD d_{t+1}\sqrt{\sum_{k=0}^t d_k^2}}{\sum_{k=0}^t d_k^2}
        = \frac{4GD d_{t+1}}{\sqrt{\sum_{k=0}^t d_k^2}}
        = \min_{t^\prime<n}\frac{4GD d_{t^\prime+1}}{\sqrt{\sum_{k=0}^{t^\prime} d_k^2}} \\
        &\le \frac{4GD}{\sqrt{n}}\sqrt{\log_{2+}\Bigl(\frac{D}{d_0}\Bigr)}.
    \end{align*}
\end{proof}

\subsection{Coordinate-wise Prodigy}\label{sec:coordinate}
\begin{algorithm}[t]
\begin{algorithmic}[1]
    \State {\bfseries Input:} $d_0>0$, $x_0$,  $G_{\infty}\ge 0$; $s_{0} = 0\in\mathbb{R}^p$, $a_0 = 0\in\mathbb{R}^p$
    \For{$k=0$ {\bfseries to} $n$}
    
    \State $g_{k} \in \partial f(x_{k})$
        \State $\lambda_k = d_k^2$
        \State $s_{k+1} = s_k + \lambda_k g_k$
	\State $\hat{d}_{k+1}=\dfrac{\sum_{i=0}^k \lambda_i\langle g_i, x_0 - x_i\rangle}{\left\Vert s_{k+1}\right\Vert_1 }$
        \State $d_{k+1}= \max(d_k, \hat d_{k+1})$
        \State $a_{k+1}^2 = \lambda_{k+1}G_{\infty}^2 + \sum_{i=0}^k\lambda_i g_i^2$ \hfill$\triangleright$ \text{Coordinate-wise square}
        \State $\mA_{k+1} = \diag(a_{k+1})$
	\State $x_{k+1}=x_0-\mA_{k+1}^{-1}s_{k+1}$
    \EndFor
	\State Return $\bar{x}_{n}=\frac{1}{n+1}\sum ^n_{k=0}\lambda_k x_k$
\end{algorithmic}
\caption{\label{alg:prodigy_coord}Prodigy (Coordinate-wise Dual Averaging version)}
\end{algorithm}
Here we study Algorithm~\ref{alg:prodigy_coord}. The theory in this section follows closely the analysis in Section~\ref{sec:da}. There are only a few minor differences such as the use of weighted norms, which we define as $\langle x, y\rangle_{\mA^{-1}} = x^\top \mA^{-1}y$ for any matrix $\mA\succcurlyeq 0$. In addition, we use $\ell_{\infty}$ norm for the distance term and for the gradients, see the assumption below.
\begin{asm}
    The gradients are upper bounded coordinate-wise: $\|g_k\|_{\infty} \le G_{\infty}$.
\end{asm}

We begin with the analogue of Lemma~\ref{lem:da_s}:
\begin{lem}\label{lem:da_s_coord}
It holds for the iterates of Algorithm~\ref{alg:prodigy_coord}:
\begin{equation*}
    \left\Vert s_{n+1}\right\Vert_1
    \leq 2d_{n+1}\|a_{n+1}\|_1 + \frac{1}{2d_{n+1}}\sum_{k=0}^{n}\lambda_k^2\|g_{k}\|^2_{\mA_k^{-1}}.
\end{equation*}
\end{lem}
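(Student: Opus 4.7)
The plan is to follow the proof of Lemma~\ref{lem:da_s}, replacing the scalar step size $\gamma_k$ by the matrix $\mA_k^{-1}$ and then inserting a Cauchy--Schwarz step to dualize between the $\ell_1$ norm (in which the bound is stated) and the $\mA_{n+1}^{-1}$-weighted Euclidean norm (which arises naturally from the telescoping).

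First I would introduce
\[
\overline{d}_{n+1} = \frac{\|s_{n+1}\|^2_{\mA_{n+1}^{-1}} - \sum_{k=0}^{n} \lambda_k^2 \|g_k\|^2_{\mA_k^{-1}}}{2 \|s_{n+1}\|_1},
\]
and show $\hat{d}_{n+1} \ge \overline{d}_{n+1}$. Using $x_0 - x_k = \mA_k^{-1} s_k$ together with $\lambda_k g_k = s_{k+1} - s_k$, I rewrite $\hat d_{n+1}\|s_{n+1}\|_1 = \sum_k \langle s_{k+1}-s_k, s_k\rangle_{\mA_k^{-1}}$. Applying the polarization identity $2\langle a-b,b\rangle_M = \|a\|^2_M - \|a-b\|^2_M - \|b\|^2_M$ under $M=\mA_k^{-1}$ and telescoping, while using the monotonicity $\mA_{k+1}^{-1}\preceq \mA_k^{-1}$ (from $a_{k+1}^2 \ge a_k^2$ coordinate-wise) to discard the nonnegative leftover terms $\|s_k\|^2_{\mA_{k-1}^{-1}} - \|s_k\|^2_{\mA_k^{-1}}$, I obtain
\[
2\hat{d}_{n+1}\|s_{n+1}\|_1 \ge \|s_{n+1}\|^2_{\mA_{n+1}^{-1}} - \sum_{k=0}^n \lambda_k^2 \|g_k\|^2_{\mA_k^{-1}}.
\]
Combined with $\hat{d}_{n+1}\le d_{n+1}$, this yields $\|s_{n+1}\|^2_{\mA_{n+1}^{-1}} \le 2 d_{n+1}\|s_{n+1}\|_1 + \sum_k \lambda_k^2 \|g_k\|^2_{\mA_k^{-1}}$.

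To finish, I apply a coordinate-wise Cauchy--Schwarz, splitting $|s_{n+1,i}| = \bigl(|s_{n+1,i}|/\sqrt{a_{n+1,i}}\bigr)\sqrt{a_{n+1,i}}$ and summing over $i$, to obtain the duality bound $\|s_{n+1}\|_1^2 \le \|s_{n+1}\|^2_{\mA_{n+1}^{-1}}\|a_{n+1}\|_1$. Then AM--GM $2\alpha\beta \le \alpha^2 + \beta^2$ with $\alpha = d_{n+1}\sqrt{2\|a_{n+1}\|_1}$ and $\beta = \|s_{n+1}\|_1/\sqrt{2\|a_{n+1}\|_1}$ gives $2d_{n+1}\|s_{n+1}\|_1 \le 2d_{n+1}^2\|a_{n+1}\|_1 + \tfrac{1}{2}\|s_{n+1}\|^2_{\mA_{n+1}^{-1}}$. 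Substituting the bound on $\|s_{n+1}\|^2_{\mA_{n+1}^{-1}}$ from the previous paragraph, absorbing the resulting $d_{n+1}\|s_{n+1}\|_1$ onto the left-hand side, and dividing by $d_{n+1}$ produces the stated inequality.

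The main new ingredient is the Cauchy--Schwarz duality step, which is what links the $\ell_1$ norm of $s_{n+1}$ in the lemma's statement to the $\mA_{n+1}^{-1}$-Euclidean norm produced by the telescoping; everything else is a direct matrix-valued analogue of the $\gamma_k$-weighted argument used for Lemma~\ref{lem:da_s}. A minor technicality is the $k=0$ summand, since $\mA_0 = 0$ renders $\|g_0\|^2_{\mA_0^{-1}}$ formally undefined. This exactly mirrors the analogous ambiguity of $\gamma_0$ in Lemma~\ref{lem:da_s}: because $s_0 = 0$, both sides of the polarization identity at $k=0$ vanish regardless of which matrix one substitutes, so this term contributes nothing to the telescoping and can be treated as a convention (either dropped, or upper bounded by the residual $\|s_1\|^2_{\mA_1^{-1}}$ that appears if one telescopes only from $k=1$).
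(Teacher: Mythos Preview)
Your proposal is correct and follows essentially the same route as the paper. The only cosmetic difference is in how you pass from $\|s_{n+1}\|_1$ to the $\mA_{n+1}^{-1}$-norm: you first invoke Cauchy--Schwarz $\|s_{n+1}\|_1^2 \le \|s_{n+1}\|^2_{\mA_{n+1}^{-1}}\|a_{n+1}\|_1$ and then a single global AM--GM, whereas the paper applies AM--GM directly coordinate by coordinate, $2d_{n+1}|s_{(n+1)i}| \le 2d_{n+1}^2 a_{(n+1)i} + \tfrac{1}{2a_{(n+1)i}}s_{(n+1)i}^2$, and sums over $i$---both yield exactly the same inequality $2d_{n+1}\|s_{n+1}\|_1 \le 2d_{n+1}^2\|a_{n+1}\|_1 + \tfrac{1}{2}\|s_{n+1}\|^2_{\mA_{n+1}^{-1}}$, so your ``new ingredient'' is not actually needed as a separate step.
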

\begin{proof}
    As in the proof of Lemma~\ref{lem:da_s}, let us introduce an extra sequence $\overline{d}_n$:
    \[
        \overline{d}_{n+1} 
        \overset{\mathrm{def}}{=} \frac{\left\Vert s_{n+1} \right\Vert_{\mA_{n+1}^{-1}} ^{2}-\sum_{k=0}^{n}\lambda_k^2\left\Vert g_{k}\right\Vert_{\mA_{k}^{-1}} ^{2} }{2\|s_{n+1}\|_1}.
    \]
    The next step is to show that $\hat d_{n+1}\ge \overline{d}_{n+1}$ by comparing the numerators:
    \begin{align*}
        \hat d_{n+1}\|s_{n+1}\|_1
        &= \sum_{k=0}^n \lambda_k\langle g_k,x_0 - x_k\rangle
        = \sum_{k=0}^n \lambda_k\langle g_k,s_k\rangle_{\mA_k^{-1}}
        = \sum_{k=0}^n \langle s_{k+1}-s_k,s_k\rangle_{\mA_k^{-1}} \\
        &= \sum_{k=0}^n \frac{1}{2}\left[\|s_{k+1}\|^2_{\mA_k^{-1}} - \|s_{k+1}-s_k\|^2_{\mA_k^{-1}} - \|s_k\|^2_{\mA_k^{-1}}\right] \\
        &= \frac{1}{2}\|s_{n+1}\|^2_{\mA_n^{-1}}+ \frac{1}{2}\sum_{k=0}^n \|s_{k+1}\|^2_{\mA_k^{-1} - \mA_{k+1}^{-1}}-\frac{1}{2}\sum_{k=0}^n \lambda_k^2\|g_k\|^2_{\mA_k^{-1}}  \\
        &\hspace{-6.1mm}\overset{\mA_k^{-1}\succcurlyeq \mA_{k+1}^{-1}}{\ge} \frac{1}{2}\|s_{n+1}\|^2_{\mA_{n+1}^{-1}} - \frac{1}{2}\sum_{k=0}^n \lambda_k^2\|g_k\|^2_{\mA_k^{-1}} \\
        &= \overline{d}_{n+1} \|s_{n+1}\|_1.
    \end{align*}
    Using the definition of $\overline{d}_{n+1}$,
    and the property $\overline{d}_{n+1}\le \hat{d}_{n+1}\le d_{n+1}$, we derive
    \[
    \frac{1}{2}\left\Vert s_{n+1}\right\Vert_{\mA_{n+1}^{-1}} ^{2}-\frac{1}{2}\sum_{k=0}^{n}\lambda_k^2\left\Vert g_{k} \right\Vert_{\mA_k^{-1}} ^{2} 
    = \overline{d}_{n+1}\left\Vert s_{n+1}\right\Vert_1 
    \le d_{n+1}\left\Vert s_{n+1}\right\Vert_1.
    \]
    Using inequality $2\alpha\beta\le \alpha^2 + \beta^2$ with $\alpha^2 = 2d_{n+1}^2a_{(n+1)i}$ and $\beta^2= \frac{1}{2a_{(n+1)i}}s_{(n+1)i}^2$ for $i=1,\dotsc, p$ and then the bound above, we establish
    \begin{align*}
        2d_{n+1} \|s_{n+1}\|_1
        &= \sum_{i=1}^p d_{n+1}|s_{(n+1)i}|
        \le \sum_{i=1}^p\left(2d_{n+1}^2a_{(n+1)i} + \frac{1}{2a_{(n+1)i}}s_{(n+1)i}^2\right) \\
        &= 2d_{n+1}^2\|a_{n+1}\|_1 + \frac{1}{2}\|s_{n+1} \|_{\mA_{n+1}^{-1}} \\
        &\le 2d_{n+1}^2\|a_{n+1}\|_1 + d_{n+1}\|s_{n+1} \|_{1} + \frac{1}{2}\sum_{k=0}^{n}\lambda_k^2\|g_{k}\|^2_{\mA_k^{-1}}.
    \end{align*}
    Rearranging the terms, we get
    \begin{align*}
        d_{n+1} \|s_{n+1}\|_1
        &\le 2d_{n+1}^2\|a_{n+1}\|_1 + \frac{1}{2}\sum_{k=0}^{n}\lambda_k^2\|g_{k}\|^2_{\mA_k^{-1}}.
    \end{align*}
    It remains to divide both sides by $d_{n+1}$.
\end{proof}
The next lemma is similar to Lemma~\ref{lem:da_s_coord} except that it uses $\ell_{\infty}$ norm for the distance to a solution and $\ell_1$ norm for the weighted gradient sum $s_n$.
\begin{lem}\label{lem:da_sum_f_bounded_by_ds_coord}
    The coordinate-wise version of Prodigy (Algorithm~\ref{alg:prodigy_coord}) satisfies
    \begin{equation}
        \sum_{k=0}^n\lambda_k (f(x_k) - f_*)
        \le (D_{\infty} - \hat d_{n+1})\|s_{n+1}\|_1, \label{eq:da_sum_f_bounded_by_ds_coord}
    \end{equation}
    where $D_{\infty} = \|x_0 - x_*\|_{\infty}$.
\end{lem}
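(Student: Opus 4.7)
The plan is to mirror the proof of Lemma~\ref{lem:da_sum_f_bounded_by_ds} almost verbatim, with the single change that Cauchy--Schwarz is replaced by Hölder's inequality in its $\ell_1/\ell_\infty$ form. The only reason the Euclidean statement used $D = \|x_0 - x_*\|_2$ was the need to bound an inner product $\langle s_{n+1}, x_0-x_*\rangle$; in the coordinate-wise algorithm the denominator of $\hat d_{n+1}$ is $\|s_{n+1}\|_1$ rather than $\|s_{n+1}\|_2$, which cleanly pairs with $\|x_0-x_*\|_\infty = D_\infty$.

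First I would start from convexity, which gives $f(x_k) - f_* \le \langle g_k, x_k - x_*\rangle$ for any subgradient $g_k \in \partial f(x_k)$. Multiplying by $\lambda_k \ge 0$ and summing over $k=0,\dots,n$, I would split the right-hand side as
\[
\sum_{k=0}^n \lambda_k \langle g_k, x_k - x_*\rangle
= \sum_{k=0}^n \lambda_k \langle g_k, x_0 - x_*\rangle + \sum_{k=0}^n \lambda_k \langle g_k, x_k - x_0\rangle.
\]

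Next I would handle each piece. For the first sum, using the update rule $s_{n+1} = \sum_{k=0}^n \lambda_k g_k$ from Algorithm~\ref{alg:prodigy_coord} and Hölder's inequality,
\[
\sum_{k=0}^n \lambda_k \langle g_k, x_0 - x_*\rangle
= \langle s_{n+1}, x_0 - x_*\rangle
\le \|s_{n+1}\|_1 \, \|x_0 - x_*\|_\infty
= D_\infty \|s_{n+1}\|_1.
\]
For the second sum, the definition
$\hat d_{n+1} = \frac{\sum_{k=0}^n \lambda_k \langle g_k, x_0 - x_k\rangle}{\|s_{n+1}\|_1}$
immediately yields $\sum_{k=0}^n \lambda_k \langle g_k, x_k - x_0\rangle = -\hat d_{n+1}\|s_{n+1}\|_1$.

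Combining the two bounds gives
\[
\sum_{k=0}^n \lambda_k (f(x_k) - f_*)
\le D_\infty \|s_{n+1}\|_1 - \hat d_{n+1}\|s_{n+1}\|_1
= (D_\infty - \hat d_{n+1}) \|s_{n+1}\|_1,
\]
which is the desired inequality. There is essentially no obstacle here: the argument is a routine adaptation, and the only subtlety worth flagging is that one must use the $\ell_1/\ell_\infty$ duality rather than Cauchy--Schwarz so that the $\|s_{n+1}\|_1$ appearing in the denominator of $\hat d_{n+1}$ matches the factor produced by bounding $\langle s_{n+1}, x_0 - x_*\rangle$.
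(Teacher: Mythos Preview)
Your proof is correct and follows essentially the same approach as the paper: sum the convexity inequality with weights $\lambda_k$, split $x_k - x_* = (x_0 - x_*) + (x_k - x_0)$, apply H\"older's $\ell_1/\ell_\infty$ inequality to $\langle s_{n+1}, x_0 - x_*\rangle$, and identify the second sum with $-\hat d_{n+1}\|s_{n+1}\|_1$ via the definition of $\hat d_{n+1}$.
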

\begin{proof}
    Summing inequality $f(x_k) - f_*\le \langle g_k, x_k - x_*\rangle $ with weights $\lambda_k$, we get
    \begin{align*}
        \sum_{k=0}^n \lambda_k (f(x_k) - f_*) 
        \le \sum_{k=0}^n \lambda_k \langle g_k, x_k - x_*\rangle
        = \sum_{k=0}^n \lambda_k \langle g_k, x_0 - x_*\rangle + \sum_{k=0}^n \lambda_k \langle g_k, x_k - x_0\rangle.
    \end{align*}
    Using H{\"o}lder's inequality on the first product in the right-hand side and then telescoping the second sum, we obtain
    \begin{align*}
        \sum_{k=0}^n \lambda_k (f(x_k) - f_*) 
        &\le \| s_{n+1}\|_1 \|x_0 - x_*\|_{\infty} + \sum_{k=0}^n \lambda_k\langle g_k, x_k - x_0\rangle  \\
        &= \| s_{n+1}\|_1D_{\infty} - \hat d_{n+1}\|s_{n+1}\|.
    \end{align*}
    The use of $\ell_{1}$ norm for the term $s_{n+1}$ above is motivated by the fact that it naturally arises in other parts of the theory.
\end{proof}

\begin{thm}
    Algorithm~\ref{alg:prodigy_coord} converges with the rate
    \[
        f(\overline x_t) - f_*
        \le \frac{4pG_{\infty}D_{\infty}}{\sqrt{n}}\sqrt{\log_{2+}\Bigl(\frac{D_{\infty}}{d_0}\Bigr)},
    \]
    where $t=\arg\min_{k\le n} \frac{d_{k+1}}{\sqrt{\sum_{i=0}^k d_i^2}}$.
\end{thm}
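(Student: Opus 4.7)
The plan is to follow the proof of Theorem~\ref{thm:da} line by line, replacing scalar step-size objects with their coordinate-wise analogues. Specifically, the scalar $\gamma_k$ is replaced by the diagonal matrix $\mA_k^{-1}$, the Euclidean norm of $s_k$ becomes the $\ell_1$ norm, distance to the solution is measured in $\ell_\infty$, and the bound $\|g_k\| \le G$ becomes $\|g_k\|_\infty \le G_\infty$. Three ingredients have already been prepared for this setting: Lemma~\ref{lem:da_sum_f_bounded_by_ds_coord} provides the function-value upper bound, Lemma~\ref{lem:da_s_coord} controls $\|s_{n+1}\|_1$, and the identities $\lambda_k g_k = s_{k+1}-s_k$ and $x_k - x_0 = -\mA_k^{-1} s_k$ enable the standard three-point expansion.

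First, I apply Lemma~\ref{lem:da_sum_f_bounded_by_ds_coord} together with the non-negativity of each $f(x_k)-f_*$ to conclude $\hat d_{n+1} \le D_\infty$, and hence by induction $d_{n+1} \le D_\infty$. Second, I expand
$$
\hat d_{n+1}\|s_{n+1}\|_1 = \sum_{k=0}^n \langle s_{k+1}-s_k, s_k\rangle_{\mA_k^{-1}}
$$
using the three-point identity $2\langle a-b,b\rangle_{\mA_k^{-1}} = \|a\|^2_{\mA_k^{-1}} - \|a-b\|^2_{\mA_k^{-1}} - \|b\|^2_{\mA_k^{-1}}$ and telescope. The resulting cross-difference terms $\sum_k \|s_{k+1}\|^2_{\mA_k^{-1}-\mA_{k+1}^{-1}}$ have the correct sign thanks to $\mA_k^{-1} \succcurlyeq \mA_{k+1}^{-1}$ and can be discarded. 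Substituting back into Lemma~\ref{lem:da_sum_f_bounded_by_ds_coord} and using $\|s_{k+1}-s_k\|^2_{\mA_k^{-1}} = \lambda_k^2 \|g_k\|^2_{\mA_k^{-1}}$ gives
$$
\sum_{k=0}^n \lambda_k (f(x_k)-f_*) \le D_\infty \|s_{n+1}\|_1 + \tfrac{1}{2}\sum_{k=0}^n \lambda_k^2 \|g_k\|^2_{\mA_k^{-1}}.
$$
I then apply Lemma~\ref{lem:da_s_coord} to bound $\|s_{n+1}\|_1$ and use $d_{n+1} \le D_\infty$ to fold everything into $2D_\infty d_{n+1}\|a_{n+1}\|_1 + \tfrac{D_\infty}{d_{n+1}}\sum_k \lambda_k^2\|g_k\|^2_{\mA_k^{-1}}$.

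To control the remaining error term, I apply Proposition~\ref{pr:da_sequence_bound} one coordinate at a time. Pulling out $\lambda_n$ and using $\lambda_k g_{ki}^2 \le \lambda_k G_\infty^2$ inside $a_{ki}^2$ lets the proposition produce $\sum_k \lambda_k g_{ki}^2/a_{ki} \le 2\sqrt{\sum_k \lambda_k g_{ki}^2}$; summing over coordinates contributes the factor $p$ (from $\sum_i 1 = p$), and the bound $\|g_k\|_\infty \le G_\infty$ gives $\sum_k \lambda_k^2\|g_k\|^2_{\mA_k^{-1}} \le 2pG_\infty \lambda_n \sqrt{\sum_k \lambda_k}$. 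An analogous coordinate-wise estimate controls $\|a_{n+1}\|_1$ by $pG_\infty\sqrt{\lambda_{n+1}+\sum_k \lambda_k}$. Substituting $\lambda_k = d_k^2$ everywhere, the two pieces combine into $4pG_\infty D_\infty d_{n+1}\sqrt{\sum_{k=0}^n d_k^2}$ after absorbing constants. Dividing by $\sum_{k=0}^n d_k^2$, applying Jensen's inequality to $\bar x_n$, choosing $t$ as in the statement, and finally invoking Lemma~\ref{lem:d_sequence} delivers the claimed rate.

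The main obstacle I anticipate is the bookkeeping in the last step: ensuring that the $\lambda_k G_\infty^2$ regularizer inside each $a_{k,i}^2$ is precisely what Proposition~\ref{pr:da_sequence_bound} requires after the coordinate-wise substitution, and that the factor $p$ appears only once in the final bound rather than compounding between the $\|a_{n+1}\|_1$ term and the gradient error sum. Tracking where the weight $\lambda_n = d_n^2$ is pulled out, and using $d_n \le d_{n+1}$ at the end to align with the minimization over $t$ in Lemma~\ref{lem:d_sequence}, is what allows the two contributions to merge cleanly into a single $d_{n+1}\sqrt{\sum_k d_k^2}$ factor.
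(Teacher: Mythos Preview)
Your proposal is correct and follows essentially the same route as the paper: you use Lemma~\ref{lem:da_sum_f_bounded_by_ds_coord} to establish $d_{n+1}\le D_\infty$, perform the three-point expansion in the $\mA_k^{-1}$-weighted inner product to obtain $\sum_k \lambda_k(f(x_k)-f_*)\le D_\infty\|s_{n+1}\|_1+\tfrac{1}{2}\sum_k\lambda_k^2\|g_k\|^2_{\mA_k^{-1}}$, invoke Lemma~\ref{lem:da_s_coord}, apply Proposition~\ref{pr:da_sequence_bound} coordinate-wise, substitute $\lambda_k=d_k^2$, and finish with Lemma~\ref{lem:d_sequence}. The bookkeeping concern you flag (the $\lambda_{n+1}G_\infty^2$ regularizer inside each $a_{(n+1)i}^2$ and the single factor of $p$) is handled exactly as you anticipate, with $d_n\le d_{n+1}$ merging the two contributions into $4pG_\infty D_\infty d_{n+1}\sqrt{\sum_k d_k^2}$.
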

\begin{proof}
    From Lemma~\ref{lem:da_sum_f_bounded_by_ds_coord}, we get
    \begin{align*}
        0
        &\le \sum_{k=0}^n \lambda_k (f(x_k) - f_*) \overset{\eqref{eq:da_sum_f_bounded_by_ds_coord}}{\le} (D_{\infty} - \hat d_{n+1})\| s_{n+1}\|_1,
    \end{align*}
    so we can prove by induction that $d_{n+1}\le D_{\infty}$. Using the same bounds as before, we get for the average iterate
    \begin{align*}
        \sum_{k=0}^n \lambda_k (f(x_k) - f_*)
        &\le D_{\infty}\|s_{n+1}\|_1 - \sum_{k=0}^n \lambda_k\langle g_k, x_0 - x_k\rangle \\
        &= D_{\infty}\| s_{n+1}\|_1 + \frac{1}{2}\sum_{k=0}^n \lambda_k^2\|g_k\|^2_{\mA_k^{-1}} + \frac{1}{2}\sum_{k=0}^n \|s_{k}\|^2_{\mA_k^{-1} - \mA_{k+1}^{-1}} - \frac{1}{2}\|s_{n+1}\|^2_{\mA_{n+1}^{-1}} \\
        &\le D_{\infty}\| s_{n+1}\|_1 + \frac{1}{2}\sum_{k=0}^n \lambda_k^2\|g_k\|^2_{\mA_k^{-1}}.
    \end{align*}
    Let us plug in the bound from Lemma~\ref{lem:da_s_coord}:
    \begin{align*}
        \sum_{k=0}^n \lambda_k (f(x_k) - f_*)
        &\le 2D_{\infty}d_{n+1}\|a_{n+1}\|_1 + \frac{D_{\infty}}{2d_{n+1}}\sum_{k=0}^{n}\lambda_k^2\|g_{k}\|^2_{\mA_k^{-1}} + \frac{1}{2}\sum_{k=0}^n \lambda_k^2\|g_k\|^2_{\mA_k^{-1}} \\
        &\hspace{-5.1mm}\overset{d_{n+1}\le D_{\infty}}{\le} 2D_{\infty}d_{n+1}\|a_{n+1}\|_1 + \frac{D_{\infty}}{d_{n+1}}\sum_{k=0}^{n}\lambda_k^2\|g_{k}\|^2_{\mA_k^{-1}} \\
        &\hspace{-3mm}\overset{\lambda_k\le \lambda_n}{\le} 2D_{\infty}d_{n+1}\|a_{n+1}\|_1 + \frac{D_{\infty}}{d_{n+1}}\lambda_n\sum_{k=0}^{n}\lambda_k\|g_{k}\|^2_{\mA_k^{-1}} .
    \end{align*}
    We now apply Proposition~\ref{pr:da_sequence_bound},  substitute $\lambda_k = d_k^2$, and use $g_{kj}^2\le G_{\infty}^2$:
    \begin{align*}
        \sum_{k=0}^n d_k^2 (f(x_k) - f_*)
        &\le 2D_{\infty}d_{n+1}\|a_{n+1}\|_1 + \frac{D_{\infty}}{d_{n+1}}\lambda_n\sum_{j=1}^p\sum_{k=0}^{n}\frac{\lambda_k g_{kj}^2}{\sqrt{d_k^2G_{\infty}^2 + \sum_{i=0}^{k-1}\lambda_i g_{ij}^2}} \\
        &\le 2D_{\infty}d_{n+1}\|a_{n+1}\|_1 +  \frac{2D_{\infty}}{d_{n+1}} \lambda_n\sum_{j=1}^p\sqrt{ \sum_{k=0}^{n}\lambda_kg_{kj}^2} \\
        &\le 4D_{\infty} d_{n+1}pG_{\infty}\sqrt{\sum_{k=0}^{n}d_k^2}.
    \end{align*}
    Using Lemma~\ref{lem:d_sequence}, we get the rate for $t=\arg\min_{t'\le n}\frac{d_{t'+1}}{\sqrt{\sum_{k=0}^{t'}d_k^2}}$:
    \begin{align*}
        f(\overline x_t) - f_*
        &\le \frac{4pG_{\infty}D_{\infty}}{\sqrt{n}}\sqrt{\log_{2+}\Bigl(\frac{D_{\infty}}{d_0}\Bigr)}.
    \end{align*}
\end{proof}

\section{Lower Complexity Theory}
\label{sec:lowertheory}
\begin{thm}
Consider any exponentially bounded algorithm for minimizing a convex $G$-Lipschitz function
starting from $x_{0}$, which has no knowledge of problem constants G and D. There exists a fixed gradient oracle such that any sequence
of $x_{1,\dots},x_{n}$, there exists a convex Lipschitz problem
$f$ with $G=1$ and $\left\Vert x_{0}-x_{*}\right\Vert \leq D$ for all minimizing points $x_*$, consistent with the gradient oracle such that:
\[
\min_{k\leq n}f(x_{k})-f_{*}\geq\frac{DG\sqrt{\log_{2}(D/x_{1})}}{2\sqrt{n+1}}.
\]
\end{thm}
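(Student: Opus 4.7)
The plan is a one-dimensional resisting-oracle construction. I would set $x_0 = 0$ and work with $G = 1$, and consider the family of hinge-type functions $f_D(x) = |x - D|$ parametrized by the location of the minimizer $x_* = D > 0$. Each such $f_D$ is convex and $1$-Lipschitz with $\partial f_D(x) = \{-1\}$ on $(-\infty, D)$, so the single gradient oracle that always returns $g_k = -1$ is simultaneously consistent with every member of this family, provided the iterates stay strictly below the eventual choice of $D$. This fixed oracle is what the algorithm sees, and it determines the iterate sequence $x_1, \dots, x_n$ produced by the method.

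After observing the first iterate $x_1$, the adversary commits to $D = 2^{n+1} x_1$ and selects $f = f_D$, so that $\|x_0 - x_*\| = D$ and $f_* = 0$. The choice is engineered so that $\log_2(D/x_1) = n+1$, which is exactly the quantity appearing inside the square root of the target bound. It remains to check two things: that the oracle's responses are consistent with this $f$ (so the construction is legal), and that the sub-optimality $\min_k f(x_k) - f_*$ meets the claimed lower bound.

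Consistency follows from the exponentially bounded property: $x_k \le 2^k d_0$ for all $k$, and since $x_1 \le 2 d_0$ (so $d_0 \asymp x_1$ for any algorithm whose first step is comparable to its initial estimate), we have $\max_k x_k \le 2^n x_1 = D/2 < D = x_*$. Hence $-1 \in \partial f(x_k)$ at every iterate and the oracle's responses are indeed realized by $f$. The sub-optimality is then
\[
\min_{k \le n} f(x_k) - f_* \;=\; D - \max_k x_k \;\ge\; D - D/2 \;=\; D/2,
\]
which matches the claimed bound exactly since $\frac{DG\sqrt{\log_2(D/x_1)}}{2\sqrt{n+1}} = \frac{D\sqrt{n+1}}{2\sqrt{n+1}} = D/2$.

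The main obstacle is reconciling $d_0$ and $x_1$: the exponentially bounded definition only asserts $x_1 \le 2 d_0$, so a pathologically cautious algorithm with $d_0 \gg x_1$ could in principle have $\max_k x_k \gg 2^n x_1$, and the crude consistency check above would fail. This is resolved either by working with the tightest admissible $d_0 = \max_k \|x_k - x_0\|/2^k$ (for which $x_1 \asymp d_0$ by construction), or by adjusting the adversary's choice of $D$ to track the actual iterate scale; the paper's own remark that the construction is most informative in the regime $n < \log_2(D/d_0)$ reflects exactly this caveat, and in that regime the simple calibration $D = 2^{n+1} x_1$ is sharp.
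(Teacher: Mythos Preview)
Your proposal is correct and takes essentially the same approach as the paper: the same one-dimensional resisting oracle returning $g_k=-1$, the same adversarial choice $D=2^{n+1}x_1$, and the same sub-optimality computation yielding $D/2$. You are in fact more careful than the paper in flagging the $d_0$ versus $x_1$ identification as a potential issue; the paper simply writes $\max_{k\le n}x_k\le 2^n x_1$ without further comment, implicitly treating $x_1$ as the scale $d_0$ in the exponentially bounded definition.
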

 
\begin{proof} We consider the construction of a 1D oracle
for this problem. Our oracle returns $g_{0}=-1$ and $f(x_{k})=-x_{k}$
for all queries. Without loss of generality we assume that $x_{k}>0$
for all $k\geq1$, and $G=1$.

For each step $k\geq1$ we define: 
\[
x_{*}=2^{n+1}x_{1},
\]
and thus $D=\left|x_{0}-x_{*}\right|=2^{k+1}x_{1}$. and our construction
uses the following function value and gradient sequence 
\[
f(x)=\left|x-x_{*}\right|+x_{*}.
\]
Note that for all query points $x$, the gradient is negative, and
only the left arm of the absolute value function is seen by the algorithm,
so the function appears linear for all test points. Using this construction,
we have: 
\begin{align*}
\min_{k\leq n}\left[f(x_{k})-f_{*}\right] & =\min_{k\leq n}\left(x_{*}-x_{k}\right)\\
 & =2^{n+1}x_{1}-\max_{k\leq n}x_{k}\\
 & \geq2\cdot2^{n}x_{1}-2^{n}x_{1}\\
 & =2^{n}x_{1}\\
 & =\frac{1}{2}D_{n}.
\end{align*}
Now note that: 
\begin{align*}
\sqrt{\log_{2}(D_{n}/x_{1})} & =\sqrt{\log_{2}(2^{n+1})}\\
 & =\sqrt{n+1}.
\end{align*}
So:
\[
1\geq\frac{\sqrt{\log_{2}(D_{n}/x_{1})}}{\sqrt{n+1}}.
\]
Combining these two results: 
\begin{align*}
\min_{k\leq n}f(x_{k})-f_{*} & \geq\frac{1}{2}D=\frac{1}{2}DG\\
 & =\frac{\frac{1}{2}DG\sqrt{\log_{2}(D/x_{1})}}{\sqrt{n+1}}.
\end{align*}
\end{proof}

\begin{thm}
D-Adaptation, DoG and Prodigy are exponentially bounded.
\end{thm}
\begin{proof}
Consider the $D$ lower bound from D-Adaptation:
\begin{align*}\hat{d}_{n+1}=\frac{\sum_{k=0}^{n}\lambda_{k}\gamma_{k}\left\langle g_{k},s_{k}\right\rangle }{\|s_{n+1}\|},\end{align*}
with:
\[
s_{n+1}=\sum_{k=0}^{n}d_{k}g_{k}.
\]
Recall that 
\[
\sum_{k=0}^{n}\lambda_{k}\gamma_{k}\left\langle g_{k},s_{k}\right\rangle \leq\gamma_{n+1}\left\Vert s_{n+1}\right\Vert ^{2}.
\]
Note also that $\gamma_{n+1}\leq\frac{1}{G}$. So:
\begin{align*}
d_{n+1} & \leq\frac{\frac{1}{G}\left\Vert s_{n+1}\right\Vert ^{2}}{\left\Vert s_{n+1}\right\Vert } \leq\frac{1}{G}\left\Vert \sum_{k=0}^{n}d_{k}g_{k}\right\Vert \leq\sum_{k=0}^{n}d_{k}.
\end{align*}
So the sequence $d_{n}$ is upper bounded by the sequence:
\[
a_{n}=\begin{cases}
\sum_{k=0}^{n-1}a_{k} & n\geq1\\
d_{0} & n=0
\end{cases}.
\]
This sequence has the following closed form:
\[
a_{n+1}=2^{n}d_{0}\quad \text{for\ }n\geq 1.
\]
We can prove this by induction. The base case is by definition $a_{1}=a_{0}$.
Then 
\begin{align*}
a_{n+1} & =\sum_{k=0}^{n}a_{k}=\sum_{k=0}^{n-1}a_{k}+a_{n}
 =a_{n}+a_{n}
 =2a_{n}
 =2^{n}d_{0}.
\end{align*}
Note that for both the Dual Averaging form and the GD form we have, we have:
\[
\left\Vert x_{n+1}-x_{0}\right\Vert \leq \left\Vert \frac{1}{G}\sum_{k=0}^{n}d_{k}g_{k}\right\Vert \leq\sum_{k=0}^{n}d_{k}\leq d_{n+1}\leq2^{n}d_{0}.
\]
It follows that D-Adaptation is exponentially bounded. 
For Prodigy, note that:

\[
\gamma_{n+1}\leq\frac{1}{\sqrt{d_{n+1}^{2}G^{2}}}
=\frac{1}{d_{n+1}G}.
\]
Therefore
\begin{align*}
d_{n+1} & \leq\frac{\frac{1}{d_{n+1}G}\left\Vert s_{n+1}\right\Vert ^{2}}{\left\Vert s_{n+1}\right\Vert }\leq\frac{1}{d_{n+1}G}\left\Vert \sum_{k=0}^{n}d_{k}^{2}g_{k}\right\Vert \\
 & \leq\frac{1}{d_{n+1}}\sum_{k=0}^{n}d_{k}^{2}\\
 & \leq\frac{1}{d_{n+1}}\sum_{k=0}^{n}d_{k}d_{n+1}\\
 & \le\sum_{k=0}^{n}d_{k}.
\end{align*}
The rest of the argument follows the D-Adaptation case, with:
\[
\left\Vert x_{n+1}-x_{0}\right\Vert \leq \left\Vert \frac{1}{d_{n}G}\sum_{k=0}^{n}d_{k}^{2}g_{k}\right\Vert \leq\sum_{k=0}^{n}d_{k}\leq d_{n+1}\leq2^{n}d_{0}.
\]

For DoG, recall the basic DoG step is gradient descent with step sizes:
\[
\gamma_{k}=\frac{\bar{r}_{k}}{\sqrt{G^{2}+\sum_{i=0}^{k}\left\Vert g_{i}\right\Vert ^{2}}}.
\]

Using the triangle inequality we have:
\begin{align*}
\left\Vert x_{k+1}-x_{0}\right\Vert  & =\left\Vert x_{k}-\gamma_{k}g_{k}-x_{0}\right\Vert \\
 & \leq\left\Vert x_{k}-x_{0}\right\Vert +\text{\ensuremath{\gamma_{k}}\ensuremath{\left\Vert g_{k}\right\Vert }}\\
 & \leq\left\Vert x_{k}-x_{0}\right\Vert +\frac{\bar{r}_{k}}{\sqrt{G^{2}}}\ensuremath{\left\Vert g_{k}\right\Vert }\\
 & \leq\left\Vert x_{k}-x_{0}\right\Vert +\bar{r}_{k}\\
 & \leq 2\bar{r}_{k}.
\end{align*}

Chaining gives the result.

\end{proof}

\begin{prop}
Suppose that $d_{k}\le cD$ and $\gamma_{k}\leq d_{k}/G$. then:
\[
\left\Vert x_{k}-x_{0}\right\Vert \leq\left(2c+1\right)^{n}\left\Vert x_{1}-x_{0}\right\Vert .
\]
\end{prop}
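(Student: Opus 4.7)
My plan is a single induction on $k$, driven by the triangle inequality applied to the gradient-descent update. Write $r_k := \|x_k - x_0\|$ throughout. From $x_{k+1} = x_k - \gamma_k g_k$, the triangle inequality gives $r_{k+1} \le r_k + \gamma_k \|g_k\|$. The two hypotheses combine cleanly: $\gamma_k \le d_k/G$ and $\|g_k\| \le G$ (from $G$-Lipschitzness) yield $\gamma_k \|g_k\| \le d_k$, so
\begin{equation*}
    r_{k+1} \le r_k + d_k.
\end{equation*}
This is the only consequence of the GD update and the step-size hypothesis that I will need.

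The remaining task is to bound $d_k$ in terms of the iterates themselves, since the assumption $d_k \le cD$ is useless by itself (one cannot bound $r_k$ by something involving $D$ unless $D$ is in turn bounded by a function of earlier iterates). Here I intend to invoke the adversarial setting of Theorem~\ref{thm:explb}: the 1-D resisting oracle places $x_*$ essentially at the farthest queried iterate, so the effective $D$ consistent with the queries $x_0,\dotsc,x_k$ can be taken no larger than roughly twice the farthest distance $\max_{j\le k} r_j$ (the factor of two absorbs the freedom to place $x_*$ slightly beyond the farthest query and on either side of $x_0$). Using $d_k \le cD$ with this bound yields
\begin{equation*}
    d_k \le 2c\,\max_{j\le k} r_j.
\end{equation*}

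With this in hand, the induction closes immediately. Suppose $r_j \le (2c+1)^{j-1}\,r_1$ for all $j \le k$. Then $\max_{j\le k} r_j \le (2c+1)^{k-1} r_1$ (the upper bound is monotone in $j$), so
\begin{equation*}
    r_{k+1} \le r_k + 2c\,\max_{j\le k} r_j \le (1+2c)\,(2c+1)^{k-1} r_1 = (2c+1)^k\, r_1.
\end{equation*}
This advances the induction, and using $k \le n$ at the end gives $r_k \le (2c+1)^n\, r_1$ as required.

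\textbf{Main obstacle.} The delicate step is the bound $D \le 2\max_{j\le k} r_j$: it is not an algorithmic identity but a consequence of the structure of the resisting construction, which requires a convex $G$-Lipschitz function to be fitted to the oracle's transcript with $x_*$ as close to $x_0$ as possible. Getting the constant right (and verifying that it is $2$ rather than some other small factor) is the part I would need to write out carefully; the rest is bookkeeping.
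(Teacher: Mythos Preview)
Your proof is correct and follows essentially the same route as the paper: triangle inequality on the GD step to get $r_{k+1}\le r_k + d_k$, then the resisting-oracle construction from Theorem~\ref{thm:explb} to obtain $d_k \le 2c\cdot(\text{farthest iterate})$, closed off by induction. The only cosmetic difference is that you write $\max_{j\le k} r_j$ where the paper writes $r_k$; in the 1-D oracle with constant gradient $-1$ the iterate sequence is monotone, so the two coincide and your formulation is simply the more careful one.
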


\begin{proof}
Without loss of generality assume that $G=1$. Firstly, note that using the absolute value function as constructed
in Theorem \ref{thm:explb}, it's clear that there is always exists a function with
$D_{k}\leq2\left\Vert x_{k}-x_{*}\right\Vert $ at step $k$ consistent
with the sequence of gradients seen so far. Therefore, it must hold
that 
\[
d_{k}\leq cD_{k}\leq2c\left\Vert x_{k}-x_{0}\right\Vert .
\]

We prove the result by induction. For the base case, trivially:
\[
\left\Vert x_{1}-x_{0}\right\Vert \leq\left(2c+1\right)^{1}\left\Vert x_{1}-x_{0}\right\Vert .
\]

For the inductive case:
\begin{align*}
\left\Vert x_{k+1}-x_{0}\right\Vert  & =\left\Vert x_{k}-\gamma_{k}g_{k}-x_{0}\right\Vert \\
 & \leq\left\Vert x_{k}-x_{0}\right\Vert +\text{\ensuremath{\gamma_{k}}\ensuremath{\left\Vert g_{k}\right\Vert }}\\
 & \leq\left\Vert x_{k}-x_{0}\right\Vert +\frac{cD_{k}}{G}\ensuremath{\left\Vert g_{k}\right\Vert }\\
 & \leq\left\Vert x_{k}-x_{0}\right\Vert +cD_{k}\\
 & \leq\left(2c+1\right)\left\Vert x_{k}-x_{0}\right\Vert \\
 & \leq\left(2c+1\right)^{n+1}\left\Vert x_{1}-x_{0}\right\Vert .
\end{align*}
\end{proof}

\end{document}